\documentclass{article}

\usepackage{microtype}
\usepackage{graphicx}
\usepackage{subcaption}
\usepackage{booktabs} 
\usepackage{longtable}
\usepackage{amsmath}
\usepackage{amssymb}
\usepackage{bm}
\usepackage{multirow}
\usepackage{amsmath, amsthm}
\usepackage{xr}

\usepackage{hyperref}

\usepackage[preprint]{icml2026}

\usepackage{amsmath}
\usepackage{amssymb}
\usepackage{mathtools}
\usepackage{amsthm}
\usepackage{stmaryrd}

\usepackage[capitalize,noabbrev]{cleveref}

\theoremstyle{plain}
\newtheorem{theorem}{Theorem}[section]
\newtheorem{proposition}[theorem]{Proposition}

\theoremstyle{definition}

\newtheorem{assumption}[theorem]{Assumption}
\theoremstyle{remark}

\usepackage[textsize=tiny]{todonotes}

\icmltitlerunning{Submission and Formatting Instructions for ICML 2026}

\begin{document}

\twocolumn[
  \icmltitle{Taming the Instability: A Robust Second-Order Optimizer \\ for Federated Learning over Non-IID Data}



  \icmlsetsymbol{equal}{*}

  \begin{icmlauthorlist}
    \icmlauthor{Yuanqiao Zhang}{A,B,D}
    \icmlauthor{Tiantian He}{B}
    \icmlauthor{Yuan Gao}{A}
    \icmlauthor{Yixin Wang}{A}
    \icmlauthor{Yew-Soon Ong}{B,E}
    \icmlauthor{Maoguo Gong}{A}
    \icmlauthor{A.K. Qin}{F}
    \icmlauthor{Hui Li}{D}
  \end{icmlauthorlist}

  \icmlaffiliation{A}{Key Laboratory of Collaborative Intelligence Systems of Ministry of Education, Xidian University, Xi’an, Shaanxi, China}
  \icmlaffiliation{B}{Centre for Frontier AI Research, Institute of High Performance Computing, Agency for Science, Technology and Research, Singapore}
  \icmlaffiliation{D}{State Key Laboratory of ISN, School of Cyber Engineering, Xidian University, Xi'an, Shaanxi, China}
  \icmlaffiliation{E}{College of Computing and Data Science, Nanyang Technological University, Singapore}
  \icmlaffiliation{F}{Department of Computing Technologies, Swinburne
 University of Technology, Melbourne, Australia}

  \icmlcorrespondingauthor{Yuanqiao Zhang}{zhangyuanqiao@xidian.edu.cn}
  \icmlcorrespondingauthor{Tiantian He}{he\_tiantian@a-star.edu.sg}
  \icmlcorrespondingauthor{Yew-Soon Ong}{asysong@ntu.edu.sg}
  \icmlcorrespondingauthor{Maoguo Gong}{gong@ieee.org}

  \icmlkeywords{Machine Learning, ICML}

  \vskip 0.3in
]



\printAffiliationsAndNotice{}  

\begin{abstract}

In this paper, we present Federated Robust Curvature Optimization (FedRCO), a novel second-order optimization framework designed to improve convergence speed and reduce communication cost in Federated Learning systems under statistical heterogeneity. Existing second-order optimization methods are often computationally expensive and numerically unstable in distributed settings. In contrast, FedRCO addresses these
 challenges by integrating an efficient approximate curvature optimizer with a provable stability mechanism. Specifically, FedRCO incorporates
 three key components: (1) a Gradient Anomaly Monitor that detects and mitigates exploding gradients in real-time, (2) a Fail-Safe Resilience protocol that resets optimization states upon numerical instability, and (3) a Curvature-Preserving Adaptive Aggregation strategy that safely integrates global knowledge without erasing the local curvature geometry. Theoretical analysis shows that FedRCO can effectively mitigate instability and prevent unbounded updates while preserving optimization efficiency.  Extensive experiments show that FedRCO achieves superior robustness against diverse non-IID scenarios while achieving higher accuracy and faster convergence than both state-of-the-art first-order and second-order methods.

\end{abstract}

\section{Introduction}

With the rapid proliferation of Internet of Things (IoT) devices and mobile terminals, massive amounts of data are continuously generated at the network edge. Although such data are highly valuable for training machine learning models, they often contain sensitive information, rendering centralized data collection and training impractical or undesirable. To resolve the tension between data utilization and privacy preservation, Google introduced Federated Learning (FL) \cite{McMahan2016CommunicationEfficientLO}, a distributed machine learning paradigm that enables collaborative model training without requiring the sharing of raw data. In FL, clients perform local training on their private datasets and communicate only model updates—such as parameters or gradients—to a central server for aggregation, thereby ensuring data are kept confined to local devices.

Despite its growing success, FL still faces substantial optimization challenges. Mainstream FL algorithms such as FedAvg \cite{McMahan2016CommunicationEfficientLO} and its variants are based on first-order Stochastic Gradient Descent (SGD). Although computationally efficient, these methods ignore curvature information in the loss landscape, often leading to slow convergence and accordingly requiring a large number of communication rounds. This is particularly problematic in bandwidth-constrained edge environments. Moreover, the inherent statistical heterogeneity of federated data, such as non-independent and identically distributed (non-IID) distribution, causes inconsistencies among local learning objectives, leading to client drift \cite{pmlr-v119-karimireddy20a} and further degrading convergence stability or even causing divergence. These challenges call for more principled and curvature-aware optimization mechanisms.

To overcome these limitations, second-order optimization techniques have recently regained attention \cite{abdulkadirov2023survey}. By exploiting curvature information through Hessian or its approximations, these methods can adaptively scale updates and potentially achieve much faster convergence. Among them, Kronecker-Factored Approximate Curvature (K-FAC) \cite{martens2015optimizing} has attracted significant attention for its efficient computational approximation. By approximating the large Fisher matrix as a layer-wise Kronecker product, K-FAC makes second-order optimization feasible for deep neural networks.

However, our analysis shows that directly applying second-order optimization in federated learning is highly nontrivial. First, the limited computational capacity of edge devices necessitates small local batch sizes, which often lead to a rank-deficient Hessian, thereby amplifying noise and producing extensive updates. Second, under non-IID data distributions, locally estimated curvature can severely mismatch the global loss landscape, resulting in overly aggressive update steps that destabilize global optimization or even cause divergence. Therefore, a critical challenge is how to use the accelerated convergence brought by second-order optimization while taming its instability in the federated settings.

To address the aforementioned challenges, we propose Federated Robust Curvature Optimization (FedRCO), a second-order optimization framework that enables stable and communication-efficient training under heterogeneous data distributions. To the best of our knowledge, this work presents the first comprehensive study that theoretically analyzes the instability of second-order optimization in federated learning, providing both theoretical characterization and feasible solutions to address these challenges. Our main contributions are as follows.
\begin{itemize}
  \item We rigorously analyze the fundamental causes of second-order optimization instability in federated learning and demonstrate which factors lead to unbounded update norms.
  \item We develop FedRCO, a principled curvature-adaptive federated optimization framework that provably controls update magnitudes and stabilizes second-order information under ill-conditioned and heterogeneous settings.
  \item We propose a simple yet effective aggregation strategy tailored for non-IID scenarios. By safely fusing global knowledge without disrupting local geometry, our method supports robust second-order optimization with minimal computational and communication costs.

  \item Extensive experiments on standard benchmarks demonstrate that FedRCO achieves consistently faster convergence and improved final accuracy while significantly reducing communication rounds compared to both baseline first-order and state-of-the-art (SOTA) federated second-order methods.
\end{itemize}

In addition to the background, motivations, methodology, analysis, and experiments presented in \textbf{Sections 2-6}, we include an extensive theoretical and analytical treatment in the appendices. \textbf{Appendices A-D} respectively provide a detailed analysis of gradient instability in federated second-order optimization, the effectiveness of second-order optimization under federated and non-IID settings, the convergence properties of second-order FL, and the derivations of the corresponding optimization bounds. \textbf{Appendix E} provides additional experimental setups and results \footnote{Our code is available in the Supplementary Material.}.

\section{Related work}

\subsection{Federated Learning}

Federated Learning (FL) was pioneered by FedAvg \cite{McMahan2016CommunicationEfficientLO}, which minimizes a weighted global loss through periodic local SGD updates. Leaving $\boldsymbol\theta$ denote the global model parameter and $t \in \{0, ..., T-1\}$ denote the communication round, the parameter process on the client $c$ can be described as:
\begin{align}
\boldsymbol{\theta}_c^{t+1}=\boldsymbol{\theta}_c^{t}-\eta\nabla \mathcal{L}_c(\boldsymbol{\theta}_c^{t}),
\end{align}
where $\forall c \in \{1, ..., C\}$ is the index of the clients, $\eta$ is the learning rate, and $\nabla \mathcal{L}_c(\boldsymbol{\theta}_c^{t})$ denotes the gradient of the loss function $\mathcal{L}(\boldsymbol\theta)$ calculated on the local dataset. Thus, the global function $\mathcal{L}_{global}(\boldsymbol\theta)$ of FL can be denoted as:
\begin{align}
\mathcal{L}_{global}(\theta^{t+1})=\sum_{c=1}^C\frac{n_c}{n}\mathcal{L}_c(\theta_c^t),
\end{align}
where $n_c$ is the data volume of client $c$, and $n=\sum_{c=1}^Cn_c$. However, statistical heterogeneity resulting from non-IID data often leads to {client drift} \cite{pmlr-v119-karimireddy20a}, where local optima diverge from the global objective. To address this, FedProx \cite{li2020federated} introduces a proximal regularization term to limit excessive local deviations.

Leveraging curvature information in FL offers the potential for faster convergence. Early approaches, such as LocalNewton \cite{gupta2021localnewton} and FedNL \cite{safaryan2021fednl}, utilize local Newton steps or require transmitting Hessians, incurring prohibitive communication and computational overhead. Recent research, such as FedPM \cite{ishii2025fedpm}, performs server-side aggregation of covariance matrices to refine preconditioning with reduced costs.

\subsection{Natural Gradient Descent}

Natural Gradient Descent (NGD) updates parameters via
\begin{equation}
\boldsymbol {\theta}  \leftarrow \boldsymbol {\theta} - \eta \boldsymbol {F} ^{-1} \nabla \mathcal{L}(\boldsymbol {\theta} ),
\end{equation}
where $\boldsymbol{F}$ is the Fisher Information Matrix (FIM) representing the sensitivity of the model's output distribution. To bypass the $O(d^3)$ inversion bottleneck, Kronecker-Factored Approximate Curvature (K-FAC) \cite{martens2015optimizing} approximates the layer-wise FIM as a Kronecker product:
\begin{equation}
\boldsymbol{F}^{(l)} \approx \boldsymbol{\Omega}^{(l)} \otimes \boldsymbol{\Gamma}^{(l)},
\end{equation}
where $l$ denotes the layer of the model,  $\boldsymbol{\Omega}^{(l)} = \mathbb{E} \left[ \boldsymbol{A}^{(l)} (\boldsymbol{A}^{(l)})^\top \right]$ and $\boldsymbol{\Gamma}^{(l)} = \mathbb{E} \left[ \boldsymbol{G}^{(l)} (\boldsymbol{G}^{(l)})^\top \right]$ denote the covariance of the input activation $\boldsymbol{A}^{(l)}$ and the pre-activation gradient $\boldsymbol{G}^{(l)}$, respectively. Leveraging the inverse property of the Kronecker product $(\boldsymbol{\Omega} \otimes \boldsymbol{\Gamma})^{-1} = \boldsymbol{\Omega}^{-1} \otimes \boldsymbol{\Gamma}^{-1}$, K-FAC transforms the dense $d \times d$ inversion into the inversion of two much smaller matrices, significantly reducing computational complexity \cite{tang2021skfac}.

\section {Theoretical Motivations}

To date, the systematic and theoretical understanding of instability phenomena in FL, along with principled mechanisms to rectify them, remains largely unexplored. In this paper, we systematically initialize this analysis and propose principled mechanisms accordingly.

\subsection{ Gradient Instability}
Directly integrating second-order information into FL faces numerical instability. In this paper, we identify two critical failure modes: \textbf{Rank Deficiency} and \textbf{Curvature Mismatch}, and establish theoretical connections between the modes and numerical instability.
\begin{proposition}\label{prop1}
\textbf{(Rank Deficiency)} When batch size $B \ll d$, the empirical FIM $\hat{\boldsymbol{F}}_c$ is rank-deficient, allowing sampling noise in the null space to induce unbounded updates (detailed in \textbf{Appendix A.2}).
\end{proposition}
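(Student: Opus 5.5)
The plan is to make the statement precise and then prove it in two steps: first the rank bound, then an explicit blow-up of the damped natural-gradient step along the null space.

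\textbf{Setup.} The empirical FIM on client $c$ with a mini-batch of size $B$ is
\[
\hat{\boldsymbol{F}}_c=\frac{1}{B}\sum_{i=1}^B \boldsymbol{g}_i\boldsymbol{g}_i^\top ,
\]
where $\boldsymbol{g}_i=\nabla_{\boldsymbol\theta}\log p(y_i\mid x_i;\boldsymbol\theta)$ is the per-sample score. Since $\hat{\boldsymbol{F}}_c$ is a sum of $B$ rank-one matrices, subadditivity of rank gives $\operatorname{rank}(\hat{\boldsymbol{F}}_c)\le B<d$. Hence the null space $\mathcal{N}=\ker\hat{\boldsymbol{F}}_c$ has dimension at least $d-B>0$. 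For the Kronecker-factored version I would apply the same argument to each factor. The factor $\boldsymbol{\Omega}^{(l)}$ is built from $B$ activation vectors (or $B$ times the number of spatial locations for convolutions), and the rank of a Kronecker product is the product of the factor ranks. So $\hat{\boldsymbol{\Omega}}\otimes\hat{\boldsymbol{\Gamma}}$ is also singular when $B$ is small relative to the layer width.

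\textbf{Update along the null space.} In practice the update uses a damped inverse, $\Delta=-\eta(\hat{\boldsymbol{F}}_c+\lambda \boldsymbol{I})^{-1}\boldsymbol{g}$, or a pseudo-inverse in the limit $\lambda\to0$. Take the eigendecomposition $\hat{\boldsymbol{F}}_c=\sum_k \sigma_k \boldsymbol{u}_k\boldsymbol{u}_k^\top$. Then
\[
\|\Delta\|^2=\eta^2\sum_k \frac{\langle \boldsymbol{u}_k,\boldsymbol{g}\rangle^2}{(\sigma_k+\lambda)^2}.
\]
The gradient $\boldsymbol{g}$ used for the step is either a different mini-batch or the same batch corrupted by noise $\boldsymbol{\xi}$ (the stochastic difference from the population gradient). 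I will show that in general $P_{\mathcal{N}}\boldsymbol{g}\neq 0$: if $\boldsymbol{\xi}$ has a covariance $\Sigma$ that is nondegenerate on $\mathcal{N}$, then $\mathbb{E}\|P_{\mathcal N}\boldsymbol\xi\|^2=\operatorname{tr}(P_{\mathcal N}\Sigma)>0$. This yields
\[
\mathbb{E}\|\Delta\|^2\ \ge\ \frac{\eta^2}{\lambda^2}\operatorname{tr}(P_{\mathcal N}\Sigma)\ \xrightarrow{\lambda\to 0}\ \infty .
\]
Small nonzero eigenvalues give the same effect with $\lambda$ replaced by $\sigma_{\min}+\lambda$, which covers the near-singular regime.

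\textbf{Main obstacle.} The delicate point is that the in-batch gradient $\frac1B\sum_i\boldsymbol{g}_i$ lies in the span of the $\boldsymbol{g}_i$, which is exactly the range of $\hat{\boldsymbol{F}}_c$. So with a pure pseudo-inverse on the same batch there is no null-space component at all. I will therefore state the assumption explicitly in one of two forms. The first is that curvature and gradient come from different samples, as with K-FAC's running-average statistics across steps or global and aggregated gradients in FL. The second is that damping or Kronecker factoring breaks this alignment: $\operatorname{range}(\hat{\boldsymbol\Omega}\otimes\hat{\boldsymbol\Gamma})$ does not in general contain $\operatorname{span}\{\boldsymbol{g}_i\}$. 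I would first check the Kronecker argument with a small $2\times2$ example showing the misalignment. I would then state "unbounded" as the $\lambda\to0$ (or $\sigma_{\min}\to0$) divergence of the expected norm displayed above.
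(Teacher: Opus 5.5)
Your core argument matches the paper's: $\mathrm{rank}(\hat{\boldsymbol F}_c)\le B$, expansion of the damped step in the eigenbasis of $\hat{\boldsymbol F}_c$, and a lower bound by the null-space component scaled by $1/\epsilon$ (your $1/\lambda$), which diverges as the damping vanishes. What you add is a justification for the one step the paper only asserts, namely that the gradient projects nontrivially onto $\mathcal N=\ker\hat{\boldsymbol F}_c$. Your ``main obstacle'' is a genuine flaw in the paper's proof, not a technicality. If $\boldsymbol g_c$ is the mean gradient of the same batch that defines $\hat{\boldsymbol F}_c$, then $\boldsymbol g_c\in\mathrm{span}\{\boldsymbol g_i\}=\mathrm{range}(\hat{\boldsymbol F}_c)$. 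The paper's ``Noise/Null Component'' is then identically zero, and the damped step converges to the finite pseudo-inverse step $\hat{\boldsymbol F}_c^{+}\boldsymbol g_c$ as $\epsilon\to0$. Any blow-up must then come from small nonzero eigenvalues rather than from the null space. Your first form of the assumption is what makes the argument go through: curvature statistics independent of the gradient noise, with $\mathrm{tr}(P_{\mathcal N}\Sigma)>0$ conditionally on $\hat{\boldsymbol F}_c$. Under it, your bound $\mathbb{E}\|\Delta\|^2\ge\eta^2\lambda^{-2}\,\mathrm{tr}(P_{\mathcal N}\Sigma)$ is correct.

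Your second form, however, would fail, so drop it. Damping cannot create a null-space component, because $(\hat{\boldsymbol F}_c+\lambda\boldsymbol I)^{-1}$ has the same eigenspaces as $\hat{\boldsymbol F}_c$. Kronecker factoring does not break the alignment either. Write $\boldsymbol a_i$ for the input activation and $\boldsymbol\delta_i$ for the pre-activation gradient of sample $i$. The same-batch layer gradient $\sum_i\boldsymbol a_i\boldsymbol\delta_i^\top$ has column space in $\mathrm{span}\{\boldsymbol a_i\}=\mathrm{range}(\hat{\boldsymbol\Omega})$ and row space in $\mathrm{span}\{\boldsymbol\delta_i\}=\mathrm{range}(\hat{\boldsymbol\Gamma})$. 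Hence $(\hat{\boldsymbol\Omega}+\pi\sqrt{\epsilon}\boldsymbol I)^{-1}\bigl(\sum_i\boldsymbol a_i\boldsymbol\delta_i^\top\bigr)(\hat{\boldsymbol\Gamma}+\tfrac{\sqrt{\epsilon}}{\pi}\boldsymbol I)^{-1}$ stays bounded as $\epsilon\to0$. Equivalently, in vectorized form every per-sample gradient lies in $\mathrm{range}(\hat{\boldsymbol\Omega}\otimes\hat{\boldsymbol\Gamma})$, which is the tensor product of the factor ranges. The same holds for unfolded convolutional patches, so your $2\times2$ check will not exhibit any misalignment.

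Be careful, too, with the running-average example you give for the first form. For PSD matrices $\mathrm{range}(\boldsymbol X+\boldsymbol Y)=\mathrm{range}(\boldsymbol X)+\mathrm{range}(\boldsymbol Y)$. So an EMA of the factors that includes the current batch, refreshed and inverted at that step, still contains the current gradient in its range. Genuine decoupling occurs only when the preconditioner was formed without the current batch. Examples are a stale inverse between recomputations under the lazy $T_{inv}$ update, or a gradient computed on different data than the curvature. Phrase your assumption in that form.
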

\begin{proposition}\label{prop2}
\textbf{(Curvature Mismatch)} If local curvature underestimates global steepness under non-IID data, the global quadratic penalty becomes unbounded, leading to divergence (detailed in \textbf{Appendix A.3}).
\end{proposition}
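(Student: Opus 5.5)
We make Proposition~\ref{prop2} precise by reading ``global quadratic penalty'' as the second-order Taylor model of $\mathcal{L}_{global}$ evaluated at the local natural-gradient step. Fix a round and a client $c$ whose current parameter is $\boldsymbol{\theta}$. Write $\boldsymbol{g}_c=\nabla\mathcal{L}_c(\boldsymbol{\theta})$ and $\boldsymbol{g}=\nabla\mathcal{L}_{global}(\boldsymbol{\theta})$. Let $\boldsymbol{F}_c$ be the local (damped) curvature and $\boldsymbol{H}$ the global Hessian, or the global FIM as its surrogate. The local update is $\Delta_c=-\eta \boldsymbol{F}_c^{-1}\boldsymbol{g}_c$, and we study
\begin{equation*}
\mathcal{L}_{global}(\boldsymbol{\theta}+\Delta_c)\approx \mathcal{L}_{global}(\boldsymbol{\theta})+\boldsymbol{g}^\top\Delta_c+\tfrac12\Delta_c^\top \boldsymbol{H}\Delta_c .
\end{equation*}
We formalize ``local curvature underestimates global steepness'' as follows: there is a unit direction $\boldsymbol{u}$ with $\boldsymbol{u}^\top\boldsymbol{F}_c\boldsymbol{u}=\epsilon_c$ but $\boldsymbol{u}^\top\boldsymbol{H}\boldsymbol{u}\ge\lambda_H>0$. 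To keep the algebra clean, take $\boldsymbol{u}$ to be an eigenvector of $\boldsymbol{F}_c$ with eigenvalue $\epsilon_c$, which under non-IID data should be read as $\epsilon_c\to 0$.

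First, we lower-bound the size of the step along $\boldsymbol{u}$. Expanding $\boldsymbol{F}_c^{-1}$ in its eigenbasis gives $\boldsymbol{u}^\top\Delta_c=-\eta\,(\boldsymbol{u}^\top\boldsymbol{g}_c)/\epsilon_c$. Hence, as long as the local gradient has a nonvanishing component $\gamma_c=\boldsymbol{u}^\top\boldsymbol{g}_c\neq 0$ along $\boldsymbol{u}$, we get $\|\Delta_c\|\ge \eta|\gamma_c|/\epsilon_c$. This component is generic under heterogeneity, since the local gradient need not align with the directions of local sharpness.

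Second, we plug this into the quadratic model. Decompose $\Delta_c=\alpha\boldsymbol{u}+\boldsymbol{w}$ with $\boldsymbol{w}\perp\boldsymbol{u}$ and $\alpha=-\eta\gamma_c/\epsilon_c$. The quadratic term then satisfies $\tfrac12\Delta_c^\top\boldsymbol{H}\Delta_c\ge \tfrac12\lambda_H\alpha^2 - |\alpha|\,\|\boldsymbol{H}\|\,\|\boldsymbol{w}\|+\tfrac12\boldsymbol{w}^\top\boldsymbol{H}\boldsymbol{w}$. Since $\boldsymbol{w}$ collects only the components on the remaining, well-conditioned eigen-directions, it stays bounded as $\epsilon_c\to0$. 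The linear term satisfies $|\boldsymbol{g}^\top\Delta_c|\le\|\boldsymbol{g}\|(|\alpha|+\|\boldsymbol{w}\|)$. The penalty therefore grows like $\tfrac12\lambda_H\eta^2\gamma_c^2/\epsilon_c^2$, while the linear gain grows only like $O(1/\epsilon_c)$. The predicted global loss change is thus positive and unbounded as $\epsilon_c\to 0$, or as the mismatch ratio $\lambda_H/\epsilon_c\to\infty$ at fixed $\eta$. Equivalently, it is positive whenever $\eta$ exceeds roughly $2\epsilon_c|\boldsymbol{u}^\top\boldsymbol{g}|/(\lambda_H\gamma_c^2)$.

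Third, we pass from a single step to divergence. Under weighted aggregation, $\Delta=\sum_c \frac{n_c}{n}\Delta_c$ inherits the $1/\epsilon_c$ component, since the other clients' bounded steps cannot cancel an unbounded one generically. Iterating the quadratic model along $\boldsymbol{u}$ gives the recursion $e_{t+1}=(1-\eta\lambda_H/\epsilon_c)e_t$ for the error component $e_t$ along $\boldsymbol{u}$. This recursion has contraction factor of modulus $>1$ once $\eta\lambda_H/\epsilon_c>2$, which is the standard step-size instability condition, and so it diverges geometrically.

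The main obstacle is the modelling step rather than the algebra. We have to fix a precise meaning for ``global quadratic penalty'' and ``underestimates'', and we have to justify a nonzero gradient component $\gamma_c$ along the flat local direction. I would first try to justify $\gamma_c$ by noting that $\boldsymbol{F}_c$ and $\boldsymbol{g}_c$ are computed from different statistics, namely second moments and means. One could also assume a bounded-dissimilarity condition $\|\boldsymbol{g}_c-\boldsymbol{g}\|\ge\zeta>0$ projected onto $\boldsymbol{u}$. If that fails, I would state the result conditionally on $\gamma_c\neq0$.
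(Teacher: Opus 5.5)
Your first two steps are the paper's own argument in Appendix A.3, done more carefully. The paper also expands $\mathcal{L}_{global}$ to second order at the local step $-\boldsymbol{H}_c^{-1}g_c$. It takes an eigen-direction $v$ with $v^\top\boldsymbol{H}_c v=\delta\approx 0$ and $v^\top\boldsymbol{H}_{global}v=Y\gg 0$, and notes that the penalty $\tfrac12\eta^2 Y/\delta^2$ outgrows the $O(1/\delta)$ descent term. However, it evaluates the quadratic form at $v$ instead of at $g_c$, tacitly assuming the nonzero projection you call $\gamma_c$, and it drops the cross terms you control through $\boldsymbol{w}$. Two small repairs are needed. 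First, boundedness of $\boldsymbol{w}$ is an extra hypothesis (all other eigenvalues of $\boldsymbol{F}_c$ bounded away from zero) and should be stated as one, since without it $-|\alpha|\,\|\boldsymbol{H}\|\,\|\boldsymbol{w}\|$ can cancel the penalty. Second, the step-size threshold should be $2\epsilon_c|\boldsymbol{u}^\top\boldsymbol{g}|/(\lambda_H|\gamma_c|)$; you have $\gamma_c^2$ in the denominator, which is an algebra slip.

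Your third step does not follow from your own setup. The recursion $e_{t+1}=(1-\eta\lambda_H/\epsilon_c)e_t$ requires the client's gradient component along $\boldsymbol{u}$ to equal the global one, $\gamma_c=\lambda_H e_t$. That is exactly what the non-IID premise denies. Suppose $\boldsymbol{F}_c$ matches the local Hessian and $\mathcal{L}_c$ is locally quadratic. Then $\gamma_c=\epsilon_c(e_t-e_c^*)$ varies with $e_t$ only at rate $\epsilon_c$, and keeping $\gamma_c$ bounded away from zero as $\epsilon_c\to0$ means the local minimizer $e_c^*$ along $\boldsymbol{u}$ recedes to infinity. Local steps then move the iterate by about $\eta|\gamma_c|/\epsilon_c$ toward that receding point. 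After aggregation you get a biased fixed point whose distance from $\boldsymbol{\theta}^*$ grows like $1/\epsilon_c$ (global penalty like $1/\epsilon_c^2$), not a geometric blow-up. A multiplicative recursion like yours does arise when $\boldsymbol{F}_c$ underestimates the client's \emph{own} curvature along $\boldsymbol{u}$. That, however, is the rank-deficiency mechanism of Proposition~\ref{prop1}, not local-versus-global mismatch. The paper makes no iteration argument: it stops at the unbounded one-step increase and uses ``divergence'' informally. So either drop step three or replace it with the error-floor statement above.
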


\subsection{Advantages of Second-order Optimization}
Despite these risks, second-order methods offer superior geometric robustness. While SGD suffers from slow convergence as the condition number $\kappa \to \infty$, Fisher preconditioning effectively transforms the geometry into an isotropic shape (ideally $\kappa \to 1$). The following two theorems show that adopting second-order optimization promises to dramatically accelerate the convergence, if the previously mentioned numerical instability can be effectively addressed.

\begin{proposition}\textbf{(Impact of Condition Number.)} SGD convergence is bounded by $\frac{\kappa - 1}{\kappa + 1}$ . In non-IID scenarios, $\kappa \to \infty$, causing slow convergence. (detailed in \textbf{Appendix B.1})\label{prop3}\end{proposition}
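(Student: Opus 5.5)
We work in the standard quadratic model that makes the statement precise: near a minimizer $\boldsymbol{\theta}^*$, take $\mathcal{L}(\boldsymbol{\theta}) = \frac{1}{2}(\boldsymbol{\theta}-\boldsymbol{\theta}^*)^\top \boldsymbol{H} (\boldsymbol{\theta}-\boldsymbol{\theta}^*)$ with $\boldsymbol{H}$ symmetric positive definite, eigenvalues $\mu = \lambda_{\min} \le \dots \le \lambda_{\max} = L$, and condition number $\kappa = L/\mu$.

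The gradient step $\boldsymbol{\theta}^{t+1} = \boldsymbol{\theta}^t - \eta \nabla \mathcal{L}(\boldsymbol{\theta}^t)$ then gives the linear error recursion $\boldsymbol{e}^{t+1} = (\boldsymbol{I} - \eta \boldsymbol{H})\boldsymbol{e}^t$, where $\boldsymbol{e}^t = \boldsymbol{\theta}^t - \boldsymbol{\theta}^*$. We work with the expected (full-gradient) iteration and treat stochastic noise as an additive variance term that does not affect the contraction factor. Diagonalizing $\boldsymbol{H}$ in its eigenbasis gives $\|\boldsymbol{e}^{t+1}\|_2 \le \rho(\eta)\|\boldsymbol{e}^t\|_2$ with
\begin{equation*}
\rho(\eta) = \max_i |1-\eta\lambda_i| = \max\{|1-\eta\mu|,\,|1-\eta L|\}.
\end{equation*}
Minimizing over $\eta$ balances the two extremes, $1-\eta\mu = \eta L - 1$, so $\eta^* = 2/(\mu+L)$. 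Substituting yields $\rho^* = \frac{L-\mu}{L+\mu} = \frac{\kappa-1}{\kappa+1}$.

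Tightness also holds. Initializing $\boldsymbol{e}^0$ along the eigenvector of $\mu$ (or of $L$) shows that no fixed step size beats this factor, because any $\eta$ leaves one of $|1-\eta\mu|$, $|1-\eta L|$ at least $\rho^*$. This gives the claimed bound $\|\boldsymbol{e}^t\| \le \left(\frac{\kappa-1}{\kappa+1}\right)^t\|\boldsymbol{e}^0\|$, and it is attained in the worst case.

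We then link the bound to non-IID data through the aggregated Hessian $\boldsymbol{H} = \sum_c \frac{n_c}{n}\boldsymbol{H}_c$. Under label or feature skew, each client's $\boldsymbol{H}_c$ has curvature concentrated in client-specific directions. Some global directions then receive curvature only from a small-weight client, or from none at all, which drives $\lambda_{\min}(\boldsymbol{H})$ toward $0$ while $\lambda_{\max}$ stays bounded below by the dominant client's top curvature. Weyl's inequality bounds $\lambda_{\max}$ from below by $\max_c \frac{n_c}{n}\lambda_{\max}(\boldsymbol{H}_c)$, so $\kappa\to\infty$. Hence $\rho^* = 1 - \frac{2}{\kappa+1} \to 1$, and the iteration count to reach accuracy $\epsilon$, $O(\kappa\log(1/\epsilon))$, diverges.

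The main obstacle is this last step. The phrase ``in non-IID scenarios $\kappa\to\infty$'' is not a theorem without a model of the heterogeneity. I would first handle it with an explicit construction: a family of client Hessians whose weighted sum has a vanishing smallest eigenvalue as the skew parameter grows, for example block-diagonal $\boldsymbol{H}_c$ supported on disjoint coordinates with one block's weight $n_c/n \to 0$. I would then state the general claim as an assumption-driven corollary rather than a universal fact.
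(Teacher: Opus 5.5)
Your derivation of the contraction factor is sound: the spectral norm of $\boldsymbol I-\eta\boldsymbol H$, the balancing step $\eta^*=2/(\mu+L)$, and tightness via an extreme eigenvector. The paper only asserts $\frac{\kappa-1}{\kappa+1}$ for the quadratic model in Appendix B.1, so this part usefully adds to it.

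The gap is in how you attach $\kappa\to\infty$ to heterogeneity. You put the condition number on the aggregated Hessian $\boldsymbol H=\sum_c\frac{n_c}{n}\boldsymbol H_c$. With data-proportional weights and $\mathcal{L}_c$ the average loss over $\mathcal{D}_c$, this is exactly the Hessian of the loss on the pooled data. It is therefore invariant under re-partitioning: an IID split of the same samples yields the same $\boldsymbol H$ and the same $\kappa(\boldsymbol H)$. Your own block construction shows this. Spreading the block-1 and block-2 samples evenly over the clients changes every $\boldsymbol H_c$, but leaves the pooled matrix $\frac{n_1}{n}\boldsymbol H_1+\frac{n_2}{n}\boldsymbol H_2$ untouched. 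What drives $\lambda_{\min}\to 0$ there is a globally rare feature direction ($n_c/n\to 0$), not the non-IID allocation. So the step ``non-IID $\Rightarrow \lambda_{\min}(\boldsymbol H)\to 0$'' does not go through.

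The paper instead applies the quadratic picture to a single client. It takes $\boldsymbol H=\nabla^2\mathcal{L}_c(\boldsymbol\theta^*)$, the curvature that governs the local SGD steps. It then argues that a client holding only class A is nearly flat along the features of the absent class B while steep along A, so $\lambda_{\min}(\boldsymbol H_c)\to 0$ and $\kappa(\boldsymbol H_c)\to\infty$. Your construction gives exactly this if you read it at the client level with balanced sizes $n_1=n_2$. Each $\boldsymbol H_c$ is then flat on the other client's block. Replace that exact zero by a small $\delta>0$ so the local quadratic stays strongly convex and your contraction bound applies; this gives $\kappa(\boldsymbol H_c)\ge\lambda_{\max}(\boldsymbol H_c)/\delta\to\infty$. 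Meanwhile the pooled $\kappa(\boldsymbol H)$ stays bounded, and under an IID split each $\boldsymbol H_c$ inherits the pooled spectrum. That contrast between local and pooled curvature, with no need for $n_c/n\to 0$, is what genuinely ties the blow-up of $\kappa$ to heterogeneity. It is also where your contraction analysis should be applied.
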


\begin{proposition}\textbf{(Affine Invariance.)} Fisher preconditioning isotropizes the parameter space, ideally achieving $\kappa \to 1$ regardless of the native landscape curvature (detailed in \textbf{Appendix B.2}).\label{prop4}
\end{proposition}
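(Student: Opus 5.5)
We work in the standard local quadratic model around a minimizer $\boldsymbol{\theta}^*$, namely $\mathcal{L}(\boldsymbol{\theta}) \approx \mathcal{L}(\boldsymbol{\theta}^*) + \tfrac12 (\boldsymbol{\theta}-\boldsymbol{\theta}^*)^\top \boldsymbol{H} (\boldsymbol{\theta}-\boldsymbol{\theta}^*)$, and interpret Proposition~\ref{prop4} as a statement about the spectrum of the preconditioned Hessian. The plan is to compare the iteration matrix of plain gradient descent with that of natural gradient descent. For NGD the error recursion is $\boldsymbol{e}_{t+1} = (\boldsymbol{I} - \eta \boldsymbol{F}^{-1}\boldsymbol{H})\boldsymbol{e}_t$, so the effective curvature is $\boldsymbol{F}^{-1}\boldsymbol{H}$. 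Its condition number $\kappa(\boldsymbol{F}^{-1}\boldsymbol{H}) = \lambda_{\max}/\lambda_{\min}$ is the quantity that replaces $\kappa(\boldsymbol{H})$ in the contraction rate $\frac{\kappa-1}{\kappa+1}$ from Proposition~\ref{prop3}.

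\textbf{Step 1 (Fisher equals Hessian).} For log-likelihood losses, the Fisher information matrix equals the expected Hessian at the true parameter, and under realizability it coincides with the generalized Gauss--Newton matrix. Near $\boldsymbol{\theta}^*$ we therefore set $\boldsymbol{F} = \boldsymbol{H} + \boldsymbol{E}$ with $\|\boldsymbol{E}\|$ small. In the ideal case $\boldsymbol{E}=0$ we get $\boldsymbol{F}^{-1}\boldsymbol{H} = \boldsymbol{I}$. This gives $\kappa = 1$, and with $\eta = 1$ the iteration converges in one step, independently of $\kappa(\boldsymbol{H})$.

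\textbf{Step 2 (affine invariance).} Under a reparametrization $\boldsymbol{\theta} = \boldsymbol{M}\boldsymbol{\phi}$ with $\boldsymbol{M}$ invertible, the gradient transforms as $\boldsymbol{M}^\top \nabla$ and both $\boldsymbol{F}$ and $\boldsymbol{H}$ transform as $\boldsymbol{M}^\top(\cdot)\boldsymbol{M}$. The NGD update in $\boldsymbol{\phi}$ is therefore $\boldsymbol{M}^{-1}$ times the update in $\boldsymbol{\theta}$, so the trajectories coincide. We use this to justify "regardless of the native landscape curvature." Choosing $\boldsymbol{M} = \boldsymbol{H}^{-1/2}$ maps any landscape to an isotropic one, and NGD behaves identically in both coordinate systems.

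\textbf{Step 3 (approximate case, which makes "ideally $\kappa\to1$" quantitative).} Write $\boldsymbol{F}^{-1}\boldsymbol{H}$, which is similar to $\boldsymbol{F}^{-1/2}\boldsymbol{H}\boldsymbol{F}^{-1/2}$, as $\boldsymbol{I} - \boldsymbol{F}^{-1/2}\boldsymbol{E}\boldsymbol{F}^{-1/2}$. If the relative error is bounded by $\delta<1$ in the Loewner sense, i.e. $(1-\delta)\boldsymbol{F} \preceq \boldsymbol{H} \preceq (1+\delta)\boldsymbol{F}$, then $\kappa(\boldsymbol{F}^{-1}\boldsymbol{H}) \le \frac{1+\delta}{1-\delta}$. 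This bound tends to $1$ as $\delta \to 0$. Crucially, it does not depend on $\kappa(\boldsymbol{H})$, which contrasts with the $\kappa\to\infty$ regime of Proposition~\ref{prop3}.

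\textbf{Main obstacle.} The hardest step is justifying Step 1 rigorously, i.e. that the empirical or K-FAC Fisher approximates $\boldsymbol{H}$ spectrally in the relative sense. The difference between the Fisher and the Hessian involves residual-weighted second derivatives of the network, and the Kronecker factorization adds its own error. I would first assume a bounded relative spectral error as a hypothesis, stated as an Assumption. For damped K-FAC I would note that the damping $\lambda\boldsymbol{I}$ only improves the lower bound and keeps $\delta$ finite. This leaves a full characterization of $\delta$ to the stability analysis related to Propositions~\ref{prop1}--\ref{prop2}.
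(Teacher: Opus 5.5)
Your argument is correct, and its core is the paper's. The paper whitens with $\phi=\boldsymbol{F}^{1/2}(\boldsymbol{\theta}-\boldsymbol{\theta}^*)$, obtains $\tilde{\boldsymbol{H}}=\boldsymbol{F}^{-1/2}\boldsymbol{H}\boldsymbol{F}^{-1/2}$, and invokes $\boldsymbol{H}\approx\boldsymbol{F}$ under the realizability assumption to get $\tilde{\boldsymbol{H}}\approx\boldsymbol{I}$. It then substitutes $\kappa=1$ into the bound of Proposition~\ref{prop3}. Your iteration matrix satisfies $\boldsymbol{I}-\eta\boldsymbol{F}^{-1}\boldsymbol{H}=\boldsymbol{F}^{-1/2}(\boldsymbol{I}-\eta\tilde{\boldsymbol{H}})\boldsymbol{F}^{1/2}$, so Step 1 is the same computation. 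It also makes explicit why Proposition~\ref{prop3} applies at all: NGD in $\boldsymbol{\theta}$ is plain gradient descent in $\phi$, which the paper leaves implicit.

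Your Steps 2 and 3 go beyond the paper. Step 2 actually proves reparametrization invariance of the NGD trajectory; the paper names the proposition after this property but only performs the single whitening change of basis. Step 3 replaces the paper's ``$\approx$'' by a relative Loewner hypothesis, and that is the right notion. An absolute bound on $\|\boldsymbol{F}-\boldsymbol{H}\|$ would not suffice, because $\|\boldsymbol{F}^{-1/2}(\boldsymbol{F}-\boldsymbol{H})\boldsymbol{F}^{-1/2}\|$ can be as large as $\|\boldsymbol{F}-\boldsymbol{H}\|/\lambda_{\min}(\boldsymbol{F})$. That is large precisely in the ill-conditioned regime the proposition is about. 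Step 3 thus turns ``regardless of the native curvature'' into a precise conditional statement.

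One side remark in your last paragraph is wrong, though the ideal statement does not need it: damping does not ``only improve'' the relative bound. Replacing $\boldsymbol{F}$ by $\boldsymbol{F}+\epsilon\boldsymbol{I}$ (your $\lambda\boldsymbol{I}$) has opposite effects on the two inequalities:
\begin{itemize}
\item The upper inequality $\boldsymbol{H}\preceq(1+\delta)(\boldsymbol{F}+\epsilon\boldsymbol{I})$ becomes easier, and damping rescues it outright when $\boldsymbol{F}$ is rank-deficient.
\item The lower inequality $(1-\delta)(\boldsymbol{F}+\epsilon\boldsymbol{I})\preceq\boldsymbol{H}$ becomes strictly harder.
\end{itemize}
For example, with $\boldsymbol{F}=\boldsymbol{H}$ having eigenvalues $h_i$, the eigenvalues of $(\boldsymbol{H}+\epsilon\boldsymbol{I})^{-1}\boldsymbol{H}$ are $h_i/(h_i+\epsilon)$. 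Hence $\kappa\approx\epsilon/h_{\min}$ when $h_{\min}\ll\epsilon\ll h_{\max}$, and $\kappa\to\kappa(\boldsymbol{H})$ as $\epsilon$ grows. Damping keeps $\kappa$ finite but reintroduces dependence on the native small eigenvalues. So the damped K-FAC case should not be presented as inheriting the $\kappa\to1$ conclusion.
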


\textbf{Remark.} With \textbf{Propositions \ref{prop1} to \ref{prop4}}, we have established a theoretical foundation for identifying key issues causing the unstable federated optimization and feasible solutions, which pave the way for our approach.

\section{Methodology}

In this section, we present our primary contribution: a robust second-order optimization framework for FL. In \textbf{Section 4.1}, we first describe the client-side optimization procedure, where a block-diagonal approximation of Fisher is adopted. In \textbf{Section 4.2}, we focus on ill-conditioned curvature estimates caused by rank deficiency and statistical staleness, and introduce a unified two-stage mechanism consisting of monitoring and resilience. \textbf{Section 4.3} presents implementation details, including an aggregation strategy and a lazy inverse update strategy. All notation is described in  \textbf{Appendix A}.

\subsection{Block-Wise Curvature Estimation via K-FAC}
To make second-order optimization practical in the standard resource-limited environments \cite{t2020personalized}, where each epoch only processes a single mini-batch of data, we adopt a block-diagonal approximation of FIM. The client-side learning process can be described as follows:
\begin{align}
\boldsymbol{\theta}_c^{t}=\boldsymbol{\theta}_c^{t}-\eta \boldsymbol F^{-1} \nabla \mathcal{L}_c(\boldsymbol{\theta}_c^{t}) .
\end{align}
For a neural network layer, taking a fully connected layer $l$ as an example, the corresponding FIM $\boldsymbol{F}^{(l)}$ is approximated as the Kronecker product of smaller matrices $\boldsymbol{F}^{(l)} \approx \boldsymbol{\Omega}^{(l)} \otimes \boldsymbol{\Gamma}^{(l)}$. Since computing the natural-gradient update requires the inverse of $\boldsymbol{F}$, we leverage a key property of the Kronecker product $(\boldsymbol{\Omega} \otimes \boldsymbol{\Gamma})^{-1} = \boldsymbol{\Omega}^{-1} \otimes \boldsymbol{\Gamma}^{-1}$, which enables the two factors to be inverted independently. To further ensure numerical stability, we apply Tikhonov damping \cite{martens2015optimizing} with parameter $\epsilon$ together with the $\pi$-correction method to balance the relative scales of the two Kronecker factors. Thus, the empirical statistics of the Kronecker factors can be defined as:
\begin{align}
&[\widehat{\boldsymbol{\Omega}}^{(l)}]^{-1} = (\boldsymbol{\Omega}_t^{(l)} + \pi \sqrt{\epsilon} \boldsymbol{I})^{-1},\notag \\
&[\widehat{\boldsymbol{\Gamma}}^{(l)}]^{-1} = (\boldsymbol{\Gamma}_t^{(l)} + \frac{1}{\pi} \sqrt{\epsilon} \boldsymbol{I})^{-1},
\end{align}
where $\pi = \sqrt{\text{tr}(\boldsymbol{A}) / \text{tr}(\boldsymbol{G})}$.

For convolutional layers, we employ the unfold operation to reshape spatial features into two-dimensional matrices \cite{grosse2016kronecker}, enabling consistent covariance estimation across layers. Under this formulation, the Kronecker factors are defined as:
$\boldsymbol{\Omega}^{(l)} = \mathbb{E} \left[ \llbracket\boldsymbol{A}^{(l)} \rrbracket \llbracket\boldsymbol{A}^{(l)}\rrbracket^\top \right]$,
$\boldsymbol{\Gamma}^{(l)}=\mathbb{E}\left[\boldsymbol{G}^{(l)}\boldsymbol{G}^{(l)\top}\right]$,
where $\llbracket \cdot \rrbracket$ denotes the operation that extracts local patches around each spatial location, stretches them into vectors, and stacks the resulting vectors into a matrix. Subsequent empirical covariance statistics $\widehat{\boldsymbol{\Omega}}$ and $\widehat{\boldsymbol{\Gamma}}$ are computed in the same way as for fully connected layers.

Since only a single mini-batch is processed per local epoch, we employ exponential moving averages (EMA) to mitigate the stochastic noise inherent in small-batch training. Specifically, at the local epoch $k$, the updates are:
\begin{align}
\hat{\boldsymbol{A}}_k^{(l)} = \alpha \mathbb{E}[\boldsymbol{A}^{(l-1)} \boldsymbol{A}^{(l-1)\top}] + (1-\alpha) \boldsymbol{A}_{k-1}^{(l)} ,\notag \\
\hat{\boldsymbol{G}}_k^{(l)} = \alpha \mathbb{E}[\boldsymbol{G}^{(l-1)} \boldsymbol{G}^{(l-1)\top}] + (1-\alpha) \boldsymbol{G}_{k-1}^{(l)},
\end{align}
where $\alpha$ represents the hyperparameter to control the average ratio.

We substitute EMA to obtain $\widehat{\boldsymbol{\Omega}}$ and $\widehat{\boldsymbol{\Gamma}}$. Finally, the preconditioned natural gradient update for client $c$ is:
\begin{align}
&\text{vec}(\tilde{\nabla}_{\boldsymbol{\theta}}^{(l)})
=\widehat{\boldsymbol{\Omega}}^{(l)-1} \nabla \mathcal{L}_c(\boldsymbol{\theta}_c) \widehat{\boldsymbol{\Gamma}}^{(l)-1} \notag \\
&= (\boldsymbol{\Omega}^{(l)} + \pi \sqrt{\epsilon} \boldsymbol{I})^{-1} \nabla \mathcal{L}_c(\boldsymbol{\theta}_c)  (\boldsymbol{\Gamma}^{(l)} + \frac{\sqrt{\epsilon}}{\pi}  \boldsymbol{I})^{-1}.
\label{3.1}
\end{align}

\subsection{Gradient Anomaly Detection and Resilience}

One of the core contributions of our method is the ability to handle ill-conditioned curvature arising in non-IID federated settings which is claimed in \textbf{Section 2.3} and proven in \textbf{Appendix A}. To enable FedRCO to achieve this, we propose a two stage mechanism that stabilizes gradient updates while maintaining training efficiency.

\emph{\textbf{1) Look-Ahead Gradient Monitor}}

First, we employ a real-time monitoring mechanism to inspect the optimization trajectory. Specifically, we maintain a sliding-window history of preconditioned gradient norms,
$\mathcal{H} = \Big\{  \|\tilde{\boldsymbol{g}}_{K-k}\|, \dots, \|\tilde{\boldsymbol{g}}_{K-1}\|  \Big\}$, where $\|\cdot\|$ is the  $\ell_2$-norm, and $\tilde{\boldsymbol{g}}_{k}$ is the preconditioned gradient:
\begin{align}
    \|\tilde{\boldsymbol{g}}_k^{(l)}\| &\triangleq \sqrt{\sum_{i=1}^{d_{out}} \sum_{j=1}^{d_{in}} (\tilde{g}_{k, i, j}^{(l)})^2},\notag \\
   \|\tilde{\boldsymbol{g}}_{k}\|&=\sum_{l} \|\tilde{\boldsymbol{g}}_k^{(l)}\|,
\end{align}
where the parameters of a layer are typically represented as a matrix $\boldsymbol{\theta}^{(l)} \in \mathbb{R}^{d_{out} \times d_{in}}$. At epoch $k$, before applying the update, we calculate the anomaly score $S_k$:
\begin{equation}
S_k = \frac{\|\tilde{\nabla} \theta^{k}\|}{\text{mean}(\mathcal{H}) + \xi},
\end{equation}
where, $\text{mean}(\mathcal{H}) = \frac{1}{K} \sum_{k=1}^{K} ||\tilde{\boldsymbol{g}}_{K-k}||_2$ serves as the baseline of expected update magnitude, and $\xi$ is a small constant  to prevent division by zero. The score $S_k$ thus represents the relative divergence of the current step compared to recent steps.

Based on the anomaly score $S_k$, we categorize potential failures into two types. When $S_k > \tau_{\text{low}} \approx 10$, this typically indicates a structural mismatch between the local curvature approximation and the underlying loss landscape. Such cases are classified as {\textbf{accumulated divergence}}, which manifests as a gradual drift in the optimization trajectory. When $S_k > \tau_{\text{high}} \approx 1000$, the FIM becomes nearly singular, causing its inverse to excessively amplify gradients in arbitrary directions. These events are classified as {\textbf{sudden explosion}}, leading to impulsive noise in the updates.

\emph{\textbf{2) Robust Resilience Protocol}}

Upon detecting an anomaly, the client triggers a resilience protocol to safeguard both the local and global models. For cases of accumulated divergence, where $\tau_{\text{low}} < S_k < \tau_{\text{high}}$, the instability arises from a gradual accumulation of curvature mismatch, leading to steadily increasing gradients. In this regime, we adopt a soft rollback strategy to stabilize the update:

\begin{equation}
\boldsymbol{\theta}_{k+1} =
\begin{cases}
\boldsymbol{\theta}_k - \eta \tilde{\boldsymbol{\nabla}}_k, & \text{if } S_k \leq \tau_{\text{low}} \\
\boldsymbol{\theta}_k - \eta \tilde{\boldsymbol{\nabla}}_{stable}, & \text{if } \tau_{\text{low}}<S_k <  \tau_{\text{high}}
\end{cases}
\end{equation}
where $\tilde{\boldsymbol{\nabla}}_{stable}$ is a stable gradient upper bound.

If the anomaly score exceeds the upper threshold $S_t >  \tau_{\text{high}}$, or remains above $S_t >  \tau_{\text{low}}$ for multiple local epochs, the gradient is deemed to have caused an irreversible disruption to the optimization trajectory. In this case, a hard reset is performed. Specifically, all accumulated curvature statistics ($\boldsymbol{\Omega}, \boldsymbol{\Gamma}$) and their corresponding inverses are discarded, the local model parameters are reset to the current global model, and the hyperparameters of both the optimizer and the preconditioner are re-initialized. This step is crucial, as a gradient explosion indicates that the current curvature approximation is no longer geometrically valid.

\subsection{Implementation Details}
\emph{\textbf{1) Curvature-Preserving Adaptive Aggregation}}

Standard aggregation methods pose a risk of curvature forgetting in second-order optimization, where forcibly overwriting local parameters with a global average disrupts the delicate geometric structure encoded by the local FIM. To mitigate this in non-IID settings without incurring extra overhead, we introduce a lightweight interpolation strategy that preserves local curvature stability. Instead of a direct overwrite, the local model is updated as:
\begin{equation}
\boldsymbol{\theta}_{local}^{new} =
\begin{cases}
\gamma \cdot \boldsymbol{\theta}_{global} + (1-\gamma) \cdot \boldsymbol{\theta}_{local}^{old}, & \text{if } Acc' > Acc \\
\boldsymbol{\theta}_{global}, & \text{else}
\end{cases}
\end{equation}
where $Acc'$ and $Acc$ denote the local accuracy and the global accuracy, respectively, and $\gamma$ is dynamically adjusted based on the confidence in the local representation:
\begin{equation}
\gamma = \frac{Acc'}{Acc'+Acc}.
\end{equation}
This strategy serves as a stability filter for second-order updates. By prioritizing the local model when it significantly outperforms the global average, we prevent the curvature mismatch problem (\textbf{Proposition \ref{prop2}}) where global drift erases the precise local directions computed by K-FAC. This ensures that the aggressive second-order steps remain valid throughout the training process.

\emph{\textbf{2) Lazy Inverse Update}}

To minimize computational overhead, the inverse matrices $[\widehat{\boldsymbol{\Omega}}^{(l)}]^{-1}$ and $[\widehat{\boldsymbol{\Gamma}}^{(l)}]^{-1}$ are not recomputed at every epoch. Instead, we define an inversion interval $T_{inv}$. The curvature factors $\boldsymbol{\Omega}$ and $\boldsymbol{\Gamma}$ are accumulated continuously, but inversion occurs sparsely. This update strategy actually acts as a regularizer against mini-batch noise. The discussion of the inversion update frequency is given in \textbf{Appendix A.4}.

\section{Analysis}
We present a rigorous theoretical analysis of FedRCO, establishing convergence guarantees that effectively bound client drift under non-IID settings. Additionally, we show that our decoupled approximation and lazy updates achieve low computational complexity and communication efficiency, ensuring edge feasibility.

\subsection{Convergence Analysis and Optimization Bounds}

We present four theorems. Here, we first guarantee the convergence behavior of FedRCO from two perspectives: the local descent property on the client side and the convergence of the aggregated model on the server side.

\begin{theorem}
    \textbf{(Local Descent with Preconditioning)} For a learning rate $\eta$ satisfying $\eta \le \frac{\lambda_{min}}{L \lambda_{max}^2}$, the expected decrease in the local objective $\mathcal{L}_c$ for a single K-FAC step is lower bounded by (detailed in \textbf{Appendix C.2}):
\begin{align}
    \mathbb{E}[\mathcal{L}_c(\boldsymbol\theta^{t+1}_c)]   &\le
    \mathcal{L}_c(\boldsymbol\theta_c^t)\notag\\& - \eta \frac{\lambda_{min}}{2} \|\nabla \mathcal{L}_c(\boldsymbol\theta_c^t)\|^2 + \eta^2 \frac{L \lambda_{max}^2 \sigma^2}{2}.
\end{align}
\label{theo5.1}
\end{theorem}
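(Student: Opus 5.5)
The argument is the standard descent lemma for preconditioned SGD, with the K-FAC preconditioner treated as a fixed, well-conditioned matrix for one local step. I would make the implicit assumptions explicit:
\begin{itemize}
\item $\mathcal{L}_c$ is $L$-smooth.
\item The damped inverse $\boldsymbol{P}_t := \widehat{\boldsymbol{\Omega}}^{-1}\otimes\widehat{\boldsymbol{\Gamma}}^{-1}$ is symmetric positive definite with spectrum in $[\lambda_{min},\lambda_{max}]$. Tikhonov damping gives the upper bound $\lambda_{max}\le \epsilon^{-1}$ via the $\pi$-corrected factors. The lower bound would come from a bounded-curvature assumption on $\boldsymbol{\Omega},\boldsymbol{\Gamma}$, or from the Gradient Anomaly Monitor effectively enforcing it.
\item $\boldsymbol{P}_t$ is independent of the current mini-batch gradient. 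This is justified by the lazy inverse update and EMA.
\item The stochastic gradient $\boldsymbol{g}_t$ is unbiased, $\mathbb{E}[\boldsymbol{g}_t]=\nabla\mathcal{L}_c(\boldsymbol\theta_c^t)$, with variance $\mathbb{E}\|\boldsymbol{g}_t-\nabla\mathcal{L}_c\|^2\le\sigma^2$.
\end{itemize}
The update is then $\boldsymbol\theta_c^{t+1}=\boldsymbol\theta_c^t-\eta\boldsymbol{P}_t\boldsymbol{g}_t$.

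\textbf{Step 1: smoothness expansion.} By $L$-smoothness,
$\mathcal{L}_c(\boldsymbol\theta^{t+1}_c)\le \mathcal{L}_c(\boldsymbol\theta^t_c)-\eta\langle\nabla\mathcal{L}_c,\boldsymbol{P}_t\boldsymbol{g}_t\rangle+\frac{L\eta^2}{2}\|\boldsymbol{P}_t\boldsymbol{g}_t\|^2$.

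\textbf{Step 2: linear term.} Take the conditional expectation. Unbiasedness and independence of $\boldsymbol{P}_t$ give $-\eta\,\nabla\mathcal{L}_c^\top\boldsymbol{P}_t\nabla\mathcal{L}_c\le-\eta\lambda_{min}\|\nabla\mathcal{L}_c\|^2$.

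\textbf{Step 3: quadratic term.} Bound it by $\|\boldsymbol{P}_t\boldsymbol{g}_t\|^2\le\lambda_{max}^2\|\boldsymbol{g}_t\|^2$. Then use the bias--variance split $\mathbb{E}\|\boldsymbol{g}_t\|^2\le\|\nabla\mathcal{L}_c\|^2+\sigma^2$. This yields
$\mathbb{E}[\mathcal{L}_c(\boldsymbol\theta^{t+1}_c)]\le\mathcal{L}_c(\boldsymbol\theta^t_c)-\eta\big(\lambda_{min}-\tfrac{L\eta\lambda_{max}^2}{2}\big)\|\nabla\mathcal{L}_c\|^2+\tfrac{L\eta^2\lambda_{max}^2\sigma^2}{2}$.

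\textbf{Step 4: step-size condition.} Impose $\eta\le\lambda_{min}/(L\lambda_{max}^2)$. Then $\tfrac{L\eta\lambda_{max}^2}{2}\le\tfrac{\lambda_{min}}{2}$, so the coefficient of $\|\nabla\mathcal{L}_c\|^2$ is at least $\eta\lambda_{min}/2$. This is exactly the stated bound. The stated bound is an upper bound on the expected new loss, i.e.\ a lower bound on the expected decrease.

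\textbf{Main obstacle.} The algebra is routine; the difficulty lies in Step 2 and in justifying the spectral bounds. In practice $\boldsymbol{P}_t$ is built from EMA statistics that include the current mini-batch, so $\boldsymbol{P}_t$ and $\boldsymbol{g}_t$ are correlated and the cross term is biased. I would handle this by the lazy-inverse convention: $\boldsymbol{P}_t$ is computed from statistics up to the previous inversion, hence it is $\mathcal{F}_{t-1}$-measurable. For steps altered by the resilience protocol (soft rollback or hard reset), I would note that rollback only shrinks the update norm. The inequality then persists with $\lambda_{max}$ replaced by the effective clipped bound, and a reset restarts from the global model with re-initialized factors satisfying the same spectral assumption.
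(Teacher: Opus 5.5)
Your proposal is correct and follows the paper's proof essentially line by line: the $L$-smoothness expansion, unbiasedness to obtain $-\eta\lambda_{min}\|\nabla\mathcal{L}_c\|^2$ from the cross term, the bound $\lambda_{max}^2(\|\nabla\mathcal{L}_c\|^2+\sigma^2)$ on the quadratic term, and then $\eta\le\lambda_{min}/(L\lambda_{max}^2)$ to keep the coefficient at least $\lambda_{min}/2$. The one difference is that you state explicitly that the preconditioner is independent of the current mini-batch gradient, which the paper uses without saying so. Your closing aside about resilience steps is not needed for the statement, and it would not follow as written: the rollback is triggered and scaled by a quantity depending on $\boldsymbol{g}_t$, which biases the cross term, and a reset to the global model can increase $\mathcal{L}_c$.
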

\begin{theorem}
    \textbf{(Server Convergence Rate)} Second-order federated optimization converges to a neighborhood of the optimal solution $\boldsymbol\theta^*$. Specifically, for the global model ${\boldsymbol \theta}^{t}$, the error bound satisfies (detailed in \textbf{Appendix C.3}):
    \begin{align}
\mathbb{E} \|{\boldsymbol\theta}^{t+1} - \boldsymbol\theta^*\|^2 \le (1 - \rho) \mathbb{E} \|{\boldsymbol\theta}^t -\boldsymbol \theta^*\|^2 + E.
    \end{align}
    \label{theo5.2}
\end{theorem}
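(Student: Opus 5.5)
The plan is a standard one-round contraction argument under the natural strongly convex setting. We assume each $\mathcal{L}_c$ is $L$-smooth and the global objective $\mathcal{L}_{global}$ is $\mu$-strongly convex. We take as given the spectral bounds $\lambda_{min}\boldsymbol I \preceq \widehat{\boldsymbol F}_c^{-1} \preceq \lambda_{max}\boldsymbol I$ used in \textbf{Theorem \ref{theo5.1}}. These are guaranteed by the Tikhonov damping in Eq.~(\ref{3.1}) together with the Gradient Anomaly Monitor, which caps any accepted step at $\tilde{\boldsymbol\nabla}_{stable}$. Stochastic gradients are taken unbiased with variance $\sigma^2$, and the local–global gradient dissimilarity is bounded by $\zeta^2$, i.e. $\|\nabla\mathcal{L}_c(\boldsymbol\theta)-\nabla\mathcal{L}_{global}(\boldsymbol\theta)\|\le\zeta$.

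We write one round as
\[
\boldsymbol\theta^{t+1}=\boldsymbol\theta^t-\eta\sum_c \tfrac{n_c}{n}\sum_{k=0}^{K-1}\widehat{\boldsymbol F}_{c,k}^{-1}\boldsymbol g_{c,k},
\]
and expand $\|\boldsymbol\theta^{t+1}-\boldsymbol\theta^*\|^2$ into three pieces: $\|\boldsymbol\theta^t-\boldsymbol\theta^*\|^2$, a cross term, and a quadratic term. Interpolation with $\gamma$ and hard resets only pull local iterates toward $\boldsymbol\theta^{t}$. They can therefore be absorbed as a contraction of the drift, or handled by treating a reset round as a zero-update round, which trivially satisfies the bound.

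The cross term is handled first. We split each local gradient as
\[
\nabla\mathcal{L}_c(\boldsymbol\theta_{c,k}) = \nabla\mathcal{L}_c(\boldsymbol\theta^t) + \bigl[\nabla\mathcal{L}_c(\boldsymbol\theta_{c,k})-\nabla\mathcal{L}_c(\boldsymbol\theta^t)\bigr].
\]
The first part, averaged over clients, gives $\nabla\mathcal{L}_{global}(\boldsymbol\theta^t)$ preconditioned by a matrix with eigenvalues at least $\lambda_{min}$. Strong convexity then yields $-2\eta K\lambda_{min}\mu\|\boldsymbol\theta^t-\boldsymbol\theta^*\|^2$ plus a $-(\mathcal{L}-\mathcal{L}^*)$ term that we keep. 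Because the preconditioners differ across clients, a mismatch appears, roughly $(\widehat{\boldsymbol F}_c^{-1}-\bar{\boldsymbol F}^{-1})\nabla\mathcal{L}_c$. We bound it using $\lambda_{max}-\lambda_{min}$ and $\zeta$, applying Young's inequality to move half of it into the contraction. The drift part is bounded by $L\|\boldsymbol\theta_{c,k}-\boldsymbol\theta^t\|$.

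Next comes the quadratic term and the drift itself. We bound $\mathbb{E}\|\boldsymbol\theta_{c,k}-\boldsymbol\theta^t\|^2$ by unrolling local steps, in the style of SCAFFOLD/FedProx analyses, to get
\[
\mathbb{E}\|\boldsymbol\theta_{c,k}-\boldsymbol\theta^t\|^2 \lesssim \eta^2K^2\lambda_{max}^2(\sigma^2+\zeta^2+\|\nabla\mathcal{L}_{global}(\boldsymbol\theta^t)\|^2).
\]
This holds provided $\eta K L\lambda_{max}\le c$ for a small constant $c$, which is compatible with the step condition of \textbf{Theorem \ref{theo5.1}}. The quadratic term is bounded the same way, with $\|\nabla\mathcal{L}_{global}\|^2\le 2L(\mathcal{L}-\mathcal{L}^*)$. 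The negative function-gap term from the cross term absorbs it once $\eta$ is small enough.

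Collecting terms gives the result with $\rho=\eta K\lambda_{min}\mu$ and
\[
E=O\bigl(\eta^2K\lambda_{max}^2\sigma^2+\eta^3K^3L^2\lambda_{max}^2(\sigma^2+\zeta^2)+\eta K(\lambda_{max}-\lambda_{min})^2\zeta^2/(\lambda_{min}\mu)\bigr).
\]

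The main obstacle is the cross term with client-dependent preconditioners. There the averaged update is not a preconditioned global gradient, so the mismatch produces a non-vanishing bias, which is why the theorem only claims convergence to a neighborhood. I will first try to bound it via the spectral gap $\lambda_{max}-\lambda_{min}$. If that is too loose, I will instead bound it by the operator-norm deviation of each $\widehat{\boldsymbol F}_c^{-1}$ from their average, and fold it into $E$.
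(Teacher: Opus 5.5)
\paragraph{Gap: the cross term.} Your architecture matches the paper's: expand $\|\boldsymbol\theta^{t+1}-\boldsymbol\theta^*\|^2$, extract contraction from the cross term, and bound the quadratic term through $\lambda_{max}^2$. Your treatment of drift, noise and client-to-client preconditioner mismatch is more careful than the paper's. But the step everything rests on fails.

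After splitting off the mismatch, you assert that because $\bar{\boldsymbol F}^{-1}\succeq\lambda_{min}\boldsymbol I$, strong convexity gives $\langle\boldsymbol\theta^t-\boldsymbol\theta^*,\bar{\boldsymbol F}^{-1}\nabla\mathcal{L}_{global}(\boldsymbol\theta^t)\rangle\ge\lambda_{min}\mu\|\boldsymbol\theta^t-\boldsymbol\theta^*\|^2$, plus a function-gap term. A spectral lower bound on a matrix $P$ only controls $\boldsymbol v^\top P\boldsymbol v$, with the same vector on both sides. Here the two vectors differ, and a non-scalar $P$ can rotate the gradient away from $\boldsymbol\theta^t-\boldsymbol\theta^*$.

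The inner product can be negative even with one client, exact gradients, $K=1$ and a fixed preconditioner. Take $\mathcal{L}(\boldsymbol\theta)=\tfrac12\boldsymbol\theta^\top H\boldsymbol\theta$ with
\[
H=\begin{pmatrix}1&0\\0&0.01\end{pmatrix},\qquad P=\begin{pmatrix}1&0.9\\0.9&1\end{pmatrix}\ (\text{spectrum }\{0.1,\,1.9\}),\qquad \boldsymbol\theta=(-0.05,\,1)^\top .
\]
Then $\boldsymbol\theta^\top PH\boldsymbol\theta\approx-0.033$, so $\|\boldsymbol\theta-\eta PH\boldsymbol\theta\|>\|\boldsymbol\theta\|$ for every $\eta>0$. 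This expansion scales with $\|\boldsymbol\theta^t-\boldsymbol\theta^*\|^2$, so no additive $E$ can absorb it. The $-(\mathcal{L}-\mathcal{L}^*)$ term you planned to use against the quadratic term is lost for the same reason.

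The paper's proof has the same hole. It lower-bounds $\langle\boldsymbol\theta^t-\boldsymbol\theta^*,(\hat{\boldsymbol F}_{c,k}^t)^{-1}\tilde{\mathbf H}(\boldsymbol\theta^t-\boldsymbol\theta^*)\rangle$ by $\lambda_{min}\bigl((\hat{\boldsymbol F}_{c,k}^t)^{-1}\tilde{\mathbf H}\bigr)\|\boldsymbol\theta^t-\boldsymbol\theta^*\|^2$. That is invalid for a non-symmetric product whose eigenvalues are positive but whose symmetric part is indefinite, which is exactly the example above. It also sets $\nabla\mathcal{L}_c(\boldsymbol\theta^*)\approx 0$ and drops drift and noise from the cross term. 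So you cannot borrow its justification.

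\paragraph{Repair.} Apply the spectral-gap device you reserved for client mismatch to the whole preconditioner. Let $\boldsymbol u=\boldsymbol\theta^t-\boldsymbol\theta^*$, $\boldsymbol v=\nabla\mathcal{L}_{global}(\boldsymbol\theta^t)$, and write $P=\lambda_{min}\boldsymbol I+(P-\lambda_{min}\boldsymbol I)$. Young's inequality then gives
\[
\langle\boldsymbol u,P\boldsymbol v\rangle\ge\tfrac{\lambda_{min}\mu}{2}\|\boldsymbol u\|^2-\tfrac{(\lambda_{max}-\lambda_{min})^2}{2\lambda_{min}\mu}\|\boldsymbol v\|^2 .
\]
If you use only $\|\boldsymbol v\|\le L\|\boldsymbol u\|$, this bound closes only when $(\lambda_{max}-\lambda_{min})/\lambda_{min}<\mu/L$. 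That means the preconditioner must be nearly scalar relative to $1/\kappa$, which defeats its purpose.

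To get the stated form with $\rho=\eta K\lambda_{min}\mu$, you need a uniform gradient bound $\|\boldsymbol v\|\le M$ (the paper's Assumption~\ref{assu4}). Combined with strong convexity, this effectively confines the iterates to a ball of radius $M/\mu$. It adds $\eta K(\lambda_{max}-\lambda_{min})^2M^2/(\lambda_{min}\mu)$ to $E$. This bias persists even with a single client, so it is not a $\zeta$-term. The quadratic term must then be bounded through $M$, or absorbed via $\|\boldsymbol v\|\le L\|\boldsymbol u\|$ for $\eta\lesssim\lambda_{min}\mu/(K\lambda_{max}^2L^2)$.

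The alternative is to work with $\mathcal{L}-\mathcal{L}^*$ via $L$-smoothness. There the descent term is $\boldsymbol v^\top P\boldsymbol v\ge\lambda_{min}\|\boldsymbol v\|^2$ and the Polyak--Lojasiewicz inequality applies; this is the route the paper takes for its function-value bounds. It controls the distance only up to a factor $L/\mu$, not through the one-step Euclidean recursion.

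\paragraph{Two smaller errors.} First, the $\gamma$-interpolation does not pull iterates toward $\boldsymbol\theta^t$ relative to your update formula. It starts client $c$ at $\boldsymbol\theta^t+(1-\gamma)(\boldsymbol\theta_c^{old}-\boldsymbol\theta^t)$, an offset that persists across rounds and that your within-round unrolling does not see. Second, a zero-update round yields $\|\boldsymbol\theta^{t+1}-\boldsymbol\theta^*\|^2=\|\boldsymbol\theta^t-\boldsymbol\theta^*\|^2$, which violates the bound whenever $\rho\|\boldsymbol\theta^t-\boldsymbol\theta^*\|^2>E$. Since the statement concerns the plain preconditioned update, it is cleaner to leave these mechanisms out.
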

Afterwards, we analyze the optimization bounds of the local drift and the final global convergence bound.

\begin{theorem}
    \textbf{(Client Drift Bound)} The expected squared norm of the client drift after $K$ local steps is bounded by (detailed in \textbf{Appendix D.2}):
\begin{align}
    {e}_{drift} = \mathbb{E} \left[ \left\| \boldsymbol\theta_{c, K}^t - \boldsymbol\theta^t \right\|^2 \right] \le 2 K^2 \eta^2 \lambda_{max}^2 (\sigma^2+M^2).
\end{align}
\label{theo.d1}
\end{theorem}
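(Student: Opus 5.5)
We prove the drift bound by unrolling the $K$ local K-FAC steps from the common starting point $\boldsymbol\theta_{c,0}^t = \boldsymbol\theta^t$ and bounding each preconditioned step in norm. We work under the same standing assumptions used for \textbf{Theorem \ref{theo5.1}}:
\begin{itemize}
\item the damped Kronecker preconditioner $\boldsymbol P_k = \widehat{\boldsymbol F}^{-1}$ has spectrum in $[\lambda_{min},\lambda_{max}]$, so $\|\boldsymbol P_k\|_2\le\lambda_{max}$;
\item the stochastic gradient is unbiased with variance at most $\sigma^2$;
\item the true local gradient is bounded in norm, $\|\nabla\mathcal{L}_c(\boldsymbol\theta)\|\le M$.
\end{itemize}
The bound $\|\boldsymbol P_k\|_2\le\lambda_{max}$ is exactly what the damping term $\sqrt{\epsilon}\boldsymbol I$ and the Gradient Anomaly Monitor enforce. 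In particular, steps with $S_k>\tau_{\text{low}}$ are either replaced by $\tilde{\boldsymbol\nabla}_{stable}$ or trigger a hard reset to $\boldsymbol\theta^t$, and the reset only shrinks the drift. So the bound holds on every realized trajectory, not just the unclipped ones.

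\textbf{Step 1 (telescoping).} Write
\[
\boldsymbol\theta_{c,K}^t-\boldsymbol\theta^t=-\eta\sum_{k=0}^{K-1}\boldsymbol P_k \boldsymbol g_k,
\qquad
\boldsymbol g_k=\nabla\mathcal L_c(\boldsymbol\theta_{c,k}^t)+\boldsymbol\zeta_k,
\]
where $\boldsymbol\zeta_k$ is the zero-mean sampling noise.

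\textbf{Step 2 (Jensen / Cauchy--Schwarz).} Apply $\|\sum_{k=1}^K \boldsymbol a_k\|^2\le K\sum_{k=1}^K\|\boldsymbol a_k\|^2$ to get
\[
\mathbb E\|\boldsymbol\theta_{c,K}^t-\boldsymbol\theta^t\|^2\le \eta^2 K\sum_{k}\mathbb E\|\boldsymbol P_k\boldsymbol g_k\|^2\le \eta^2K\lambda_{max}^2\sum_k\mathbb E\|\boldsymbol g_k\|^2 .
\]

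\textbf{Step 3 (split gradient into signal and noise).} Use $\|\boldsymbol a+\boldsymbol b\|^2\le 2\|\boldsymbol a\|^2+2\|\boldsymbol b\|^2$ to get $\mathbb E\|\boldsymbol g_k\|^2\le 2(M^2+\sigma^2)$. Summing over the $K$ steps gives $2K^2\eta^2\lambda_{max}^2(\sigma^2+M^2)$, which is the claimed bound.

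Using the sharper identity $\mathbb E\|\boldsymbol g_k\|^2=\|\nabla\mathcal L_c\|^2+\mathbb E\|\boldsymbol\zeta_k\|^2$, which holds by unbiasedness, would remove the factor 2. The loose inequality is enough, and it is also robust to the fact that $\boldsymbol P_k$ depends on the same mini-batch as $\boldsymbol\zeta_k$. Because K-FAC estimates the factors from the current batch through the EMA, $\boldsymbol P_k\boldsymbol\zeta_k$ is not zero-mean. For this reason we do not rely on cross terms vanishing anywhere; the Cauchy--Schwarz route in Step 2 deliberately avoids them.

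\textbf{Main obstacle.} The delicate point is justifying the uniform operator bound $\|\boldsymbol P_k\|_2\le\lambda_{max}$ for the Kronecker-factored inverse. First, since $(\widehat{\boldsymbol\Omega}\otimes\widehat{\boldsymbol\Gamma})^{-1}$ has eigenvalues equal to the products of the factor eigenvalues' inverses, Tikhonov damping gives
\[
\|\boldsymbol P_k\|_2\le (\pi\sqrt\epsilon)^{-1}(\sqrt\epsilon/\pi)^{-1}=1/\epsilon .
\]
We can therefore take $\lambda_{max}\le 1/\epsilon$ per layer, and the block-diagonal structure across layers preserves this bound. Second, lazy inverse updates with interval $T_{inv}$ do not break the bound, because each stale inverse is still a damped inverse and obeys the same estimate. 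Third, the bounded-gradient constant $M$ absorbs the heterogeneity between $\nabla\mathcal L_c$ and the global gradient. If the monitor's clipping to $\tilde{\boldsymbol\nabla}_{stable}$ is used instead of the spectral bound, the same argument goes through with $\lambda_{max}\|\boldsymbol g_k\|$ replaced by the clipping threshold.
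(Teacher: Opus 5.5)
Your proof is correct and follows essentially the same route as the paper's proof in Appendix D.2: unroll the $K$ preconditioned steps from $\boldsymbol\theta^t$, apply $\|\sum_{k} \boldsymbol a_k\|^2 \le K\sum_k \|\boldsymbol a_k\|^2$, use $\|\hat{\boldsymbol F}^{-1}\|_2 \le \lambda_{max}$, and split $\mathbb{E}\|\boldsymbol g_k\|^2 \le 2\sigma^2 + 2M^2$. Your extra remarks (damping giving $\lambda_{max} \le 1/\epsilon$, and the hard-reset and clipping discussion) go beyond the paper, which simply invokes Assumption 5, and the stated bound does not need them; the one caveat is that soft-rollback steps obey the same bound only if the replacement step has length at most $\eta\|\boldsymbol P_k \boldsymbol g_k\|$.
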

\begin{theorem}
 After $T'$ communication rounds, the convergence of our method satisfies (detailed in \textbf{Appendix D.4}):
\begin{align}
   &\mathbb{E}[\mathcal{L}(\boldsymbol\theta^{T'})  - \mathcal{L}^*] \le\notag\\
   &{(1 - \rho)^{T'} (\mathcal{L}(\boldsymbol\theta^{0})  - \mathcal{L}^*)} +
   {\frac{L K \eta \lambda_{max}^2 (\sigma^2 + M^2)}{2 \mu \lambda_{min}}}.
\end{align}
\label{theo5.4}
\end{theorem}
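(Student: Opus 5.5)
The plan is to combine the per-step descent inequality of \textbf{Theorem \ref{theo5.1}} with a Polyak--\L{}ojasiewicz (PL) condition, and to use the drift bound of \textbf{Theorem \ref{theo.d1}} to control the aggregated error. Throughout, I assume the standing hypotheses of Appendices C--D:
\begin{itemize}
  \item $\mathcal{L}$ is $L$-smooth and $\mu$-PL, i.e.\ $\|\nabla\mathcal{L}(\boldsymbol\theta)\|^2 \ge 2\mu(\mathcal{L}(\boldsymbol\theta)-\mathcal{L}^*)$;
  \item the damped K-FAC preconditioner has spectrum in $[\lambda_{min},\lambda_{max}]$, which is guaranteed by the Tikhonov damping together with the anomaly monitor and reset of Section 4.2;
  \item stochastic gradients have variance at most $\sigma^2$;
  \item heterogeneity is bounded by $\|\nabla\mathcal{L}_c-\nabla\mathcal{L}\|\le M$.
\end{itemize}

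First, I will write one communication round as $K$ preconditioned steps per client followed by aggregation. Viewing the aggregate update as $\boldsymbol\theta^{t+1}=\boldsymbol\theta^t-\eta\sum_{k}\sum_c \frac{n_c}{n}\hat{\boldsymbol F}_c^{-1}\tilde{\boldsymbol g}_{c,k}$, I will apply $L$-smoothness of the global $\mathcal{L}$ and split each local gradient as
\[
\nabla\mathcal{L}_c(\boldsymbol\theta_{c,k}) = \nabla\mathcal{L}(\boldsymbol\theta^t) + \bigl[\nabla\mathcal{L}_c(\boldsymbol\theta_{c,k})-\nabla\mathcal{L}_c(\boldsymbol\theta^t)\bigr] + \bigl[\nabla\mathcal{L}_c(\boldsymbol\theta^t)-\nabla\mathcal{L}(\boldsymbol\theta^t)\bigr].
\]
The first term yields descent $-\eta K\lambda_{min}\|\nabla\mathcal{L}(\boldsymbol\theta^t)\|^2/2$, mirroring the computation in \textbf{Theorem \ref{theo5.1}} with the step-size condition $\eta\le\lambda_{min}/(L\lambda_{max}^2)$ absorbing the quadratic term. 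The drift term is bounded by $L^2 e_{drift}$ using \textbf{Theorem \ref{theo.d1}}. The heterogeneity and noise terms contribute $\eta^2K L\lambda_{max}^2(\sigma^2+M^2)/2$-type errors after Young's inequality.

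Second, I will apply PL to the descent term to obtain a contraction
\[
\mathbb{E}[\mathcal{L}(\boldsymbol\theta^{t+1})-\mathcal{L}^*]\le(1-\rho)\,\mathbb{E}[\mathcal{L}(\boldsymbol\theta^{t})-\mathcal{L}^*]+\eta^2 K L\lambda_{max}^2(\sigma^2+M^2)/2,
\]
with $\rho=\eta K\mu\lambda_{min}$ (any constant multiple suffices, consistent with \textbf{Theorem \ref{theo5.2}}). Unrolling over $T'$ rounds and summing the geometric series $\sum_{s<T'}(1-\rho)^s\le 1/\rho$ gives the error floor
\[
\frac{\eta^2 K L\lambda_{max}^2(\sigma^2+M^2)}{2\eta\mu\lambda_{min}}.
\]
This is exactly $\frac{LK\eta\lambda_{max}^2(\sigma^2+M^2)}{2\mu\lambda_{min}}$ once the factor of $K$ in $\rho$ is matched against the per-round accumulation (if $\rho$ is taken per-step, the $K$ sits in the numerator as stated).

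The main obstacle is the bookkeeping in the first step: the drift contribution $L^2 e_{drift}\sim \eta^2K^2\lambda_{max}^2(\sigma^2+M^2)$ carries an extra power of $\eta K$ relative to the target floor. I will absorb it by imposing $\eta K L\lambda_{max}\le 1$ (or folding it into the step-size condition). This makes the drift term of the same order as the variance term, so the constant $1/2$ in the final bound may require slightly tightening the Young's inequality constants. I also need to check that the curvature-preserving interpolation and the hard resets do not break the contraction. I will argue that interpolation is a convex combination, so by convexity of the PL sublevel structure or by treating it as a damped step, it cannot increase the bound. I will also argue that resets return to $\boldsymbol\theta^t$ and therefore only shorten the local trajectory, which tightens the drift bound.
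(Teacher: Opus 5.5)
Your skeleton (global $L$-smoothness, PL, the drift bound of Theorem~\ref{theo.d1}, geometric series) is the paper's. The step that fails is your claim that the heterogeneity piece contributes only an $\eta^2$-type error after Young's inequality. In the first-order term of the $L$-smoothness expansion, that piece is
\[
-\eta\sum_{k=0}^{K-1}\Big\langle \nabla\mathcal{L}(\boldsymbol\theta^t),\ \sum_{c}\tfrac{n_c}{n}\,\hat{\boldsymbol F}_{c,k}^{-1}\big(\nabla\mathcal{L}_c(\boldsymbol\theta^t)-\nabla\mathcal{L}(\boldsymbol\theta^t)\big)\Big\rangle ,
\]
which is linear in $\eta$. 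It vanishes (since $\sum_c\frac{n_c}{n}\nabla\mathcal{L}_c=\nabla\mathcal{L}$) only if all clients share one preconditioner. With client-specific K-FAC factors it is only bounded by $\eta K\frac{\lambda_{max}-\lambda_{min}}{2}M\|\nabla\mathcal{L}(\boldsymbol\theta^t)\|$. Young's inequality then gives $\frac{\eta K\lambda_{min}}{4}\|\nabla\mathcal{L}(\boldsymbol\theta^t)\|^2$ plus an $O\big(\eta K(\lambda_{max}-\lambda_{min})^2M^2/\lambda_{min}\big)$ error, which after division by $\rho=\Theta(\eta K\mu\lambda_{min})$ leaves a floor that does not shrink with $\eta$.

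This is not a weakness of the technique: under your hypotheses the claimed bound is false. Take $d=1$ and two equally weighted clients with $\mathcal{L}_{1,2}(\theta)=\frac12(\theta\mp a)^2$, $a\neq 0$, so $L=\mu=1$, $\theta^*=0$, $M=|a|$. Use exact gradients ($\sigma=0$), $K=1$, and constant preconditioners $\lambda_{max}>\lambda_{min}$ on the two clients. The aggregated iteration $\theta^{t+1}=\theta^t-\frac{\eta}{2}\big[\lambda_{max}(\theta^t-a)+\lambda_{min}(\theta^t+a)\big]$ has the fixed point $\bar\theta=a\frac{\lambda_{max}-\lambda_{min}}{\lambda_{max}+\lambda_{min}}$ for every $\eta$. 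Starting from $\bar\theta$, the left-hand side equals $\frac12\bar\theta^2$ for all $T'$, whereas the right-hand side tends to $\frac{\eta\lambda_{max}^2a^2}{2\lambda_{min}}$, which is smaller once $\eta$ is small. The same holds with the paper's bounded-gradient reading of $M$, since $|\nabla\mathcal{L}_c(\bar\theta)|\le 2|a|$.

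The paper reaches the stated constant by a different route. It writes $T_1\approx-\eta K\nabla\mathcal{L}(\boldsymbol\theta^t)^\top\big(\frac1C\sum_c\boldsymbol F_c^{-1}\nabla\mathcal{L}_c(\boldsymbol\theta^t)\big)$ and simply asserts $T_1\le-\eta K\lambda_{min}\|\nabla\mathcal{L}(\boldsymbol\theta^t)\|^2$. It then applies PL with $\rho=2\eta K\mu\lambda_{min}$, bounds the quadratic term by $\frac L2 e_{drift}\le LK^2\eta^2\lambda_{max}^2(\sigma^2+M^2)$, and divides by $\rho$. That assertion is exactly the step your decomposition exposes; it tacitly assumes a client-independent preconditioner and drops the drift from the linear term. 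To make your argument valid you must either add such an assumption explicitly (e.g.\ $\hat{\boldsymbol F}_{c,k}=\hat{\boldsymbol F}_k$ for all $c$), or keep an extra $\eta$-independent heterogeneity term in the conclusion.

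Your bookkeeping in $K$ also needs repair.
\begin{itemize}
\item In $\frac{L\eta^2}{2}\mathbb{E}\|\sum_k\sum_c\cdots\|^2$, at best only the noise accumulates like $K\sigma^2$. The gradient and heterogeneity parts add coherently over the $K$ local steps and give $K^2\lambda_{max}^2M^2$. This is why the paper's per-round error is $O(\eta^2K^2)$.
\item For the same reason, absorbing the $\|\nabla\mathcal{L}\|^2$ part of that term needs $\eta\lesssim\lambda_{min}/(LK\lambda_{max}^2)$, not the single-step condition of Theorem~\ref{theo5.1}.
\item In the unrolling you divided your per-round error by $\eta\mu\lambda_{min}$ rather than by your own $\rho=\eta K\mu\lambda_{min}$. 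Done consistently, the single factor $K$ in the stated floor comes from an $O(\eta^2K^2)$ per-round error over $\rho=\Theta(\eta K)$. The remark about a per-step $\rho$ does not reconcile the mismatch.
\end{itemize}
Finally, the curvature-preserving interpolation is not harmless for this argument. It starts local training at $\gamma\boldsymbol\theta^t+(1-\gamma)\boldsymbol\theta_c^{old}\neq\boldsymbol\theta^t$, so the drift is no longer a sum of $K$ preconditioned steps from $\boldsymbol\theta^t$, and Theorem~\ref{theo.d1} does not apply. A convex-combination argument on $\mathcal{L}$ does not control $\|\boldsymbol\theta_c^{old}-\boldsymbol\theta^t\|$; the paper's proof simply does not model this step.
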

 \textbf{Remark. Theorems \ref{theo5.1}} to \textbf{\ref{theo5.4}}  demonstrate that FedRCO achieves convergence to a neighborhood of the optimal solution while effectively bounding the client drift caused by data heterogeneity. Together with the remarkable empirical results in \textbf{Section 6}, FedRCO is shown to be a novel and theoretically grounded approach to optimizing FL systems.

\subsection{Computational Time Complexity}
We address the computational overhead by comparing FedRCO against standard first-order methods and exact second-order approaches. While exact methods incur a prohibitive $O(d^3)$ inversion cost, FedRCO decouples this burden via Kronecker factorization and lazy updates to $({d_{in}^3 + d_{out}^3})$. By amortizing the periodic inversion cost over $T_{\text{inv}}$ steps, the per-step complexity is reduced to:
\begin{equation}
O(d_{\text{in}} d_{\text{out}}(d_{\text{in}} + d_{\text{out}}) + \frac{d_{in}^3 + d_{out}^3}{T_{inv}}),
\end{equation}
where \(d_{\text{in}}\), \(d_{\text{out}}\), and \(d\)  denote the input dimensions, output dimensions, and the total number of model parameters, $ O(d_{\text{in}}  d_{\text{out}}(d_{\text{in}} + d_{\text{out}}))$ is the cost of matrix-matrix multiplications described in Eq. \ref{3.1}.

As $T_{\text{inv}}$ increases, the additional term becomes marginal compared to the $O(d_{\text{in}} d_{\text{out}})$ complexity of SGD. Coupled with the linear $O(d)$ cost of the gradient monitor, FedRCO enables efficient second-order optimization on edge devices with minimal latency overhead.

\subsection{Communication Time Complexity}
Unlike prior second-order methods that transmit heavy Hessian or covariance matrices, FedRCO maintains the same communication pattern as FedAvg, transmitting only the model parameter $\boldsymbol{\theta} \in \mathbb{R}^d$ and a scalar accuracy. This results in a minimal per-round complexity of $O(d+1) \approx O(d)$. By leveraging curvature information to rectify local updates, FedRCO significantly accelerates convergence, thereby reducing the total number of communication rounds.

\section{Experiments}

\begin{table*}[h]
\caption{The experimental results on the Dirichlet-non, Pathological-non, and IID settings with client number 100, and the party ratio 0.8.}
\setlength{\tabcolsep}{1.6mm}
	\renewcommand{\arraystretch}{1}
\begin{tabular}{c|cccccc|cccccc}
\midrule[1pt]
            & \multicolumn{6}{c|}{\textbf{CIFAR-10}}                                                                                                                                                                                  & \multicolumn{6}{c}{\textbf{EMNIST}}                                                                                                                                                                                    \\ \midrule[1pt]
\multirow{2}{*}{\textbf{Method}} & \multicolumn{3}{c|}{\textbf{Dirichlet}}                                                                         & \multicolumn{2}{c|}{\textbf{Pathological}}                                & \textbf{IID}              & \multicolumn{3}{c|}{\textbf{Dirichlet}}                                                                         & \multicolumn{2}{c|}{\textbf{Pathological}}                                & \textbf{IID}              \\ \cline{2-13}
                           & \multicolumn{1}{c|}{\textbf{$\alpha=0.1$}}   & \multicolumn{1}{c|}{\textbf{$\alpha=0.5$}}   & \multicolumn{1}{c|}{\textbf{$\alpha=1$}}     & \multicolumn{1}{c|}{{2}}     & \multicolumn{1}{c|}{{5}}     & \textbf{\textbackslash{}} & \multicolumn{1}{c|}{\textbf{$\alpha=0.1$}}   & \multicolumn{1}{c|}{\textbf{$\alpha=0.5$}}   & \multicolumn{1}{c|}{\textbf{$\alpha=1$}}     & \multicolumn{1}{c|}{{10}}    & \multicolumn{1}{c|}{{30}}    & \textbf{\textbackslash{}} \\ \midrule[1pt]
\textbf{FedAvg}            & \multicolumn{1}{c|}{0.563}          & \multicolumn{1}{c|}{0.620}          & \multicolumn{1}{c|}{0.632}          & \multicolumn{1}{c|}{0.537}          & \multicolumn{1}{c|}{0.601}          & 0.650                     & \multicolumn{1}{c|}{0.818}          & \multicolumn{1}{c|}{0.835}          & \multicolumn{1}{c|}{0.836}          & \multicolumn{1}{c|}{0.809}          & \multicolumn{1}{c|}{0.826}          & 0.838                     \\ \hline
\textbf{FedProx}           & \multicolumn{1}{c|}{0.553}          & \multicolumn{1}{c|}{0.628}          & \multicolumn{1}{c|}{0.637}          & \multicolumn{1}{c|}{0.530}          & \multicolumn{1}{c|}{0.616}          & 0.631                     & \multicolumn{1}{c|}{0.817}          & \multicolumn{1}{c|}{0.831}          & \multicolumn{1}{c|}{0.833}          & \multicolumn{1}{c|}{0.801}          & \multicolumn{1}{c|}{0.822}          & 0.839                     \\ \hline
\textbf{FedAdam}           & \multicolumn{1}{c|}{0.557}          & \multicolumn{1}{c|}{0.639}          & \multicolumn{1}{c|}{0.610}          & \multicolumn{1}{c|}{0.439}          & \multicolumn{1}{c|}{0.623}          & 0.619                     & \multicolumn{1}{c|}{0.845}          & \multicolumn{1}{c|}{0.856}          & \multicolumn{1}{c|}{0.859}          & \multicolumn{1}{c|}{0.804}          & \multicolumn{1}{c|}{0.837}          & 0.864                     \\ \hline
\textbf{FedAvgM}           & \multicolumn{1}{c|}{0.556}          & \multicolumn{1}{c|}{0.614}          & \multicolumn{1}{c|}{0.635}          & \multicolumn{1}{c|}{0.530}          & \multicolumn{1}{c|}{0.607}          & 0.658                     & \multicolumn{1}{c|}{0.814}          & \multicolumn{1}{c|}{0.832}          & \multicolumn{1}{c|}{0.832}          & \multicolumn{1}{c|}{0.800}          & \multicolumn{1}{c|}{0.823}          & 0.836                     \\ \hline
\textbf{LocalNewton}       & \multicolumn{1}{c|}{0.509}          & \multicolumn{1}{c|}{0.613}          & \multicolumn{1}{c|}{0.621}          & \multicolumn{1}{c|}{0.492}          & \multicolumn{1}{c|}{0.598}          & 0.626                     & \multicolumn{1}{c|}{0.804}          & \multicolumn{1}{c|}{0.818}          & \multicolumn{1}{c|}{0.817}          & \multicolumn{1}{c|}{0.800}          & \multicolumn{1}{c|}{0.809}          & 0.819                     \\ \hline
\textbf{FedPM}             & \multicolumn{1}{c|}{0.547}          & \multicolumn{1}{c|}{0.602}          & \multicolumn{1}{c|}{0.620}          & \multicolumn{1}{c|}{0.518}          & \multicolumn{1}{c|}{0.603}          & 0.635                     & \multicolumn{1}{c|}{0.805}          & \multicolumn{1}{c|}{0.816}          & \multicolumn{1}{c|}{0.820}          & \multicolumn{1}{c|}{0.794}          & \multicolumn{1}{c|}{0.814}          & 0.816                     \\ \midrule[1pt]
\textbf{FedRCO-ori}        & \multicolumn{1}{c|}{0.695}          & \multicolumn{1}{c|}{0.713}          & \multicolumn{1}{c|}{0.727}          & \multicolumn{1}{c|}{0.612}          & \multicolumn{1}{c|}{0.712}          & \textbf{0.725}            & \multicolumn{1}{c|}{0.854}          & \multicolumn{1}{c|}{0.869}          & \multicolumn{1}{c|}{0.871}          & \multicolumn{1}{c|}{0.851}          & \multicolumn{1}{c|}{0.859}          & \textbf{0.871}            \\ \hline
\textbf{FedRCO}            & \multicolumn{1}{c|}{\textbf{0.788}} & \multicolumn{1}{c|}{\textbf{0.742}} & \multicolumn{1}{c|}{\textbf{0.730}} & \multicolumn{1}{c|}{\textbf{0.753}} & \multicolumn{1}{c|}{\textbf{0.751}} & 0.719                     & \multicolumn{1}{c|}{\textbf{0.902}} & \multicolumn{1}{c|}{\textbf{0.886}} & \multicolumn{1}{c|}{\textbf{0.881}} & \multicolumn{1}{c|}{\textbf{0.882}} & \multicolumn{1}{c|}{\textbf{0.890}} & 0.870                     \\ \midrule[1pt]
\end{tabular}
  \label{table1}
\end{table*}

\subsection{Experimental Setup}
\textbf{Datasets and Baselines.}
We conduct extensive experiments on two widely used federated learning benchmarks: CIFAR-10 \cite{krizhevsky2009learning} and EMNIST \cite{cohen2017emnist}. To simulate non-IID data distributions, we choose Dirichlet \cite{yurochkin2019bayesian}, Pathological \cite{li2022federated}, and IID partitioning (Fig. \ref{distri} for details). Our realistic local training setup involves multiple epochs per round, with each epoch processing a single mini-batch of data. We evaluate FedRCO across Dirichlet ($Dir(\alpha), \alpha \in \{0.1, 0.5, 1\}$), Pathological $\{2,5\}$ and $\{10,30\}$ labels for CIFAR-10 and EMNIST, and IID to simulate diverse conditions. The experimental framework encompasses $ \{10, 50, 100\}$ clients with participation ratios in $\{0.1, 0.5, 0.8, 1\}$, 1600 communication rounds, and 20 local epochs per round. We compare our method against several first-order baseline methods and second-order SOTA methods: FedAvg \cite{McMahan2016CommunicationEfficientLO}, FedAvgM \cite{hsu2019measuring}, FedProx \cite{li2020federated}, FedAdam \cite{reddi2020adaptive}, LocalNewton \cite{gupta2021localnewton}, and FedPM \cite{ishii2025fedpm}. Additional details about the model architecture and implementation are provided in \textbf{Appendix E}.

\begin{figure*}[h]
  \centering
  \includegraphics[width=6.9in]{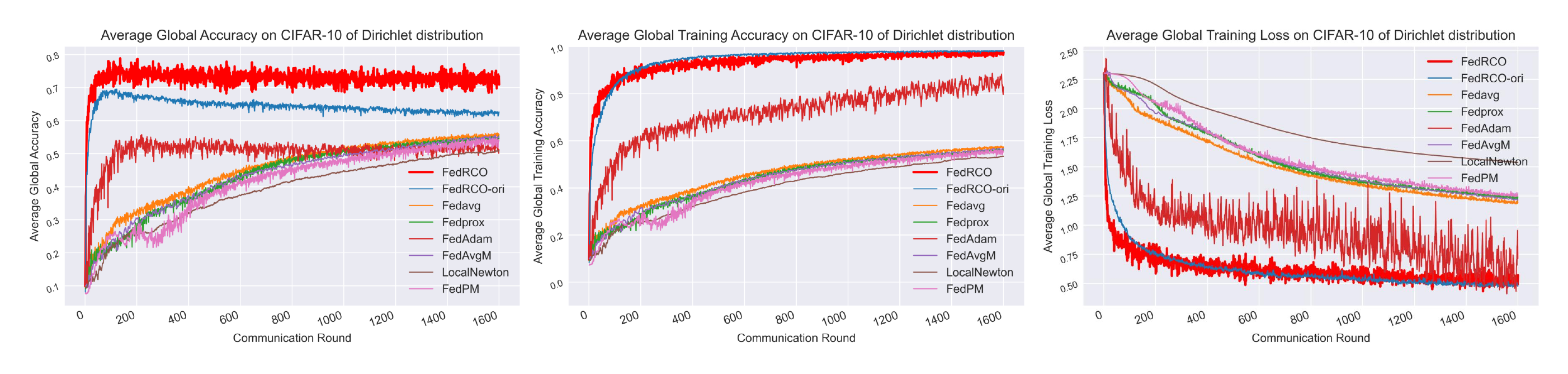}
  \caption{The experimental results averaged across all participating clients on test accuracy, training accuracy, and training loss on CIFAR-10 with $Dir(\alpha)=0.1$, client number 100, party ratio 0.8.}
  \label{4.1}
\end{figure*}

\subsection{Experimental Results}

As shown in Table \ref{table1}, FedRCO establishes a better performance lead across all datasets and settings, especially in extreme non-IID scenarios. FedRCO-ori refers to our method utilizing simple averaging instead of the proposed aggregation strategy. In the highly challenging Dirichlet $\alpha=0.1$ configuration on CIFAR-10, FedRCO achieves a $78.8\%$ accuracy, demonstrating significant improvement over standard FedAvg and overcoming robust baselines by an even wider margin. Unlike naive second-order methods that frequently struggle with convergence under high heterogeneity, FedRCO effectively harnesses curvature information to accelerate training without sacrificing stability. On the EMNIST dataset, our method maintains a decisive lead with $90.2\%$ accuracy under $\alpha=0.1$, conclusively validating its ability to neutralize local drift and handle severe statistical heterogeneity where first-order baselines falter.

The visualization of learning curves (Fig. \ref{4.1}) further confirms that FedRCO significantly improves both convergence speed and optimization stability. In scenarios with extreme data fragmentation, FedRCO quickly achieves peak accuracy, while first-order methods stagnate.  While adaptive methods like FedAdam exhibit erratic loss trajectories and LocalNewton suffers from instability, FedRCO maintains a smooth and stable decreasing training loss, empirically proving that our robust gradient monitor successfully filters out the noise inherent in distributed second-order optimization. Furthermore, the notable performance improvement between FedRCO and FedRCO-ori highlights that our specialized aggregation strategy successfully preserves client-specific representations while enhancing global generalization, ensuring that FedRCO is not merely fitting local noise but learning a fundamentally superior global model.

\begin{figure}[]
  \centering
  \includegraphics[width=3.in]{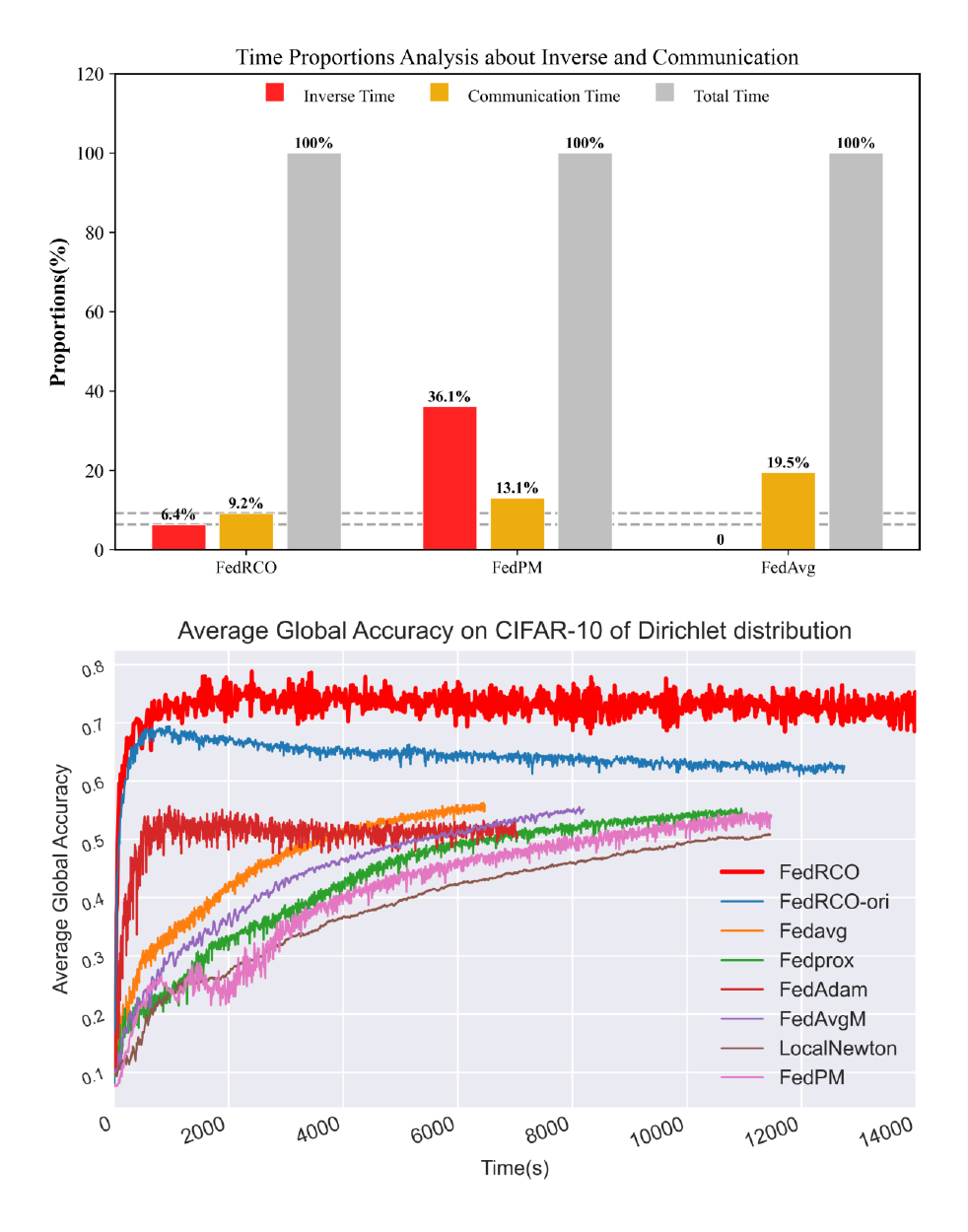}
  \caption{(a), The proportion of time spent on Matrix Inversion and Communication relative to the total training time. (b) The experimental results on average global accuracy versus wall-clock time. The horizontal axis represents time. }
  \label{4.2}
\end{figure}

\begin{figure}[!t]
  \centering
  \includegraphics[width=3in]{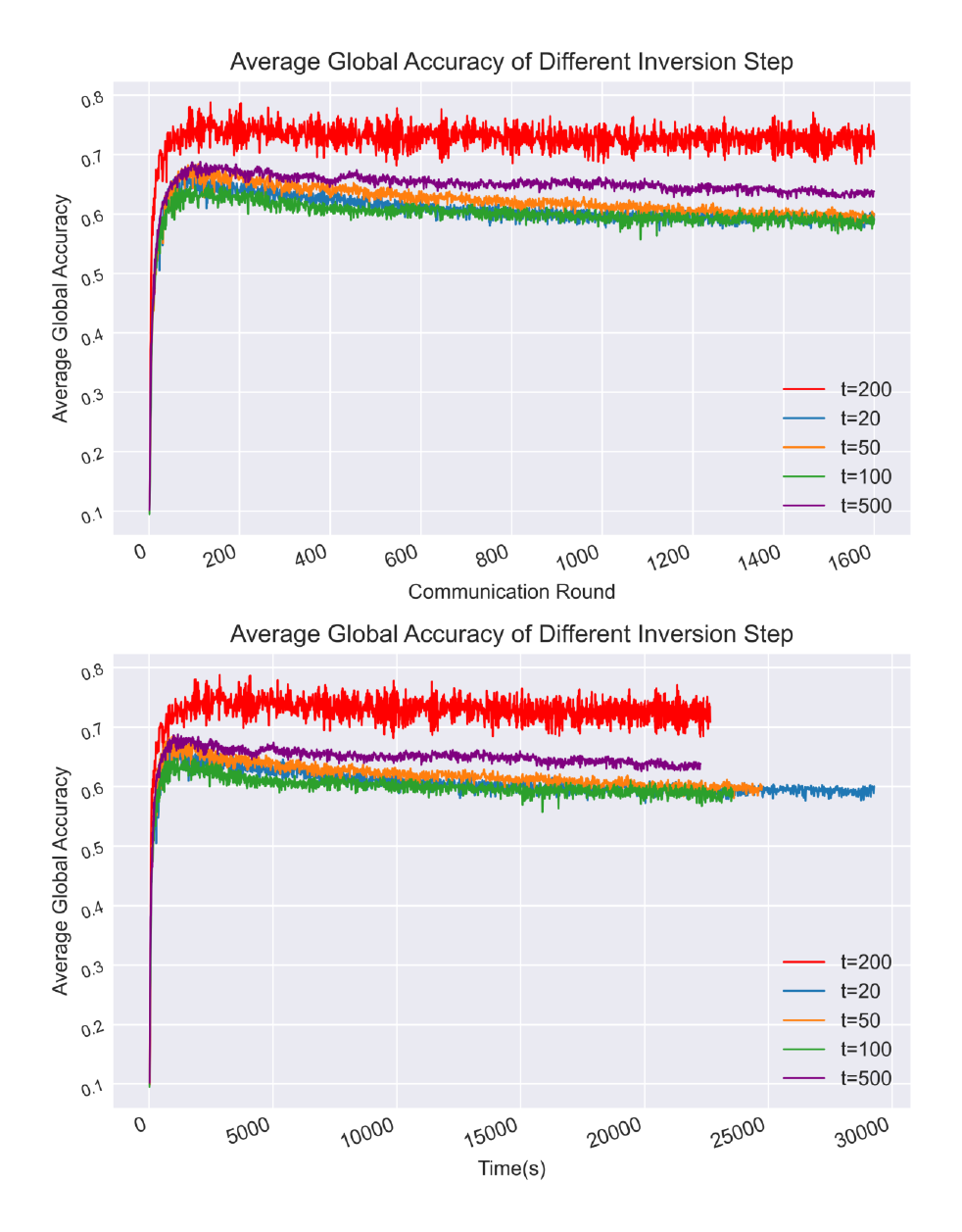}
  \caption{Impact results of Inversion Frequency ($T_{inv}$) measured by communication round and time.}
  \label{4.4}
\end{figure}

\begin{table*}[h]
\caption{The ablation experimental results about client number and participation ratio on the Dirichlet distribution with $Dir(\alpha)=0.1$. For the client number part, the party ratio is set as 0.8, and for the party ratio part, the client number is set as 100.}
\setlength{\tabcolsep}{2.15mm}
	\renewcommand{\arraystretch}{1}
\begin{tabular}{c|cccccc|cccccc}
\midrule[1pt]
           & \multicolumn{6}{c|}{\textbf{CIFAR-10}}                                                                                                                                                                       & \multicolumn{6}{c}{\textbf{EMNIST}}                                                                                                                                                                         \\ \midrule[1pt]
\multirow{2}{*}{\textbf{Method}} & \multicolumn{3}{c|}{\textbf{Client Number}}                                                                     & \multicolumn{3}{c|}{\textbf{Party Ratio}}                                                  & \multicolumn{3}{c|}{\textbf{Client Number}}                                                                     & \multicolumn{3}{c}{\textbf{Party Ratio}}                                                  \\ \cline{2-13}
                           & \multicolumn{1}{c|}{{10}}    & \multicolumn{1}{c|}{{50}}    & \multicolumn{1}{c|}{{100}}   & \multicolumn{1}{c|}{{0.1}}   & \multicolumn{1}{c|}{{0.5}}   & {1}     & \multicolumn{1}{c|}{{10}}    & \multicolumn{1}{c|}{{50}}    & \multicolumn{1}{c|}{{100}}   & \multicolumn{1}{c|}{{0.1}}   & \multicolumn{1}{c|}{{0.5}}   & {1}     \\ \midrule[1pt]
\textbf{FedAvg}            & \multicolumn{1}{c|}{0.539}          & \multicolumn{1}{c|}{0.521}          & \multicolumn{1}{c|}{0.563}          & \multicolumn{1}{c|}{0.538}          & \multicolumn{1}{c|}{0.576}          & 0.565          & \multicolumn{1}{c|}{0.779}          & \multicolumn{1}{c|}{0.814}          & \multicolumn{1}{c|}{0.818}          & \multicolumn{1}{c|}{0.806}          & \multicolumn{1}{c|}{0.814}          & 0.815          \\ \hline
\textbf{FedProx}           & \multicolumn{1}{c|}{0.557}          & \multicolumn{1}{c|}{0.573}          & \multicolumn{1}{c|}{0.553}          & \multicolumn{1}{c|}{0.540}          & \multicolumn{1}{c|}{0.575}          & 0.554          & \multicolumn{1}{c|}{0.781}          & \multicolumn{1}{c|}{0.810}          & \multicolumn{1}{c|}{0.817}          & \multicolumn{1}{c|}{0.807}          & \multicolumn{1}{c|}{0.819}          & 0.815          \\ \hline
\textbf{FedAdam}           & \multicolumn{1}{c|}{0.456}          & \multicolumn{1}{c|}{0.539}          & \multicolumn{1}{c|}{0.557}          & \multicolumn{1}{c|}{0.397}          & \multicolumn{1}{c|}{0.549}          & 0.525          & \multicolumn{1}{c|}{0.795}          & \multicolumn{1}{c|}{0.834}          & \multicolumn{1}{c|}{0.845}          & \multicolumn{1}{c|}{0.840}          & \multicolumn{1}{c|}{0.847}          & 0.846          \\ \hline
\textbf{FedAvgM}           & \multicolumn{1}{c|}{0.517}          & \multicolumn{1}{c|}{0.570}          & \multicolumn{1}{c|}{0.556}          & \multicolumn{1}{c|}{0.555}          & \multicolumn{1}{c|}{0.559}          & 0.551          & \multicolumn{1}{c|}{0.781}          & \multicolumn{1}{c|}{0.816}          & \multicolumn{1}{c|}{0.814}          & \multicolumn{1}{c|}{0.811}          & \multicolumn{1}{c|}{0.819}          & 0.812          \\ \hline
\textbf{LocalNewton}       & \multicolumn{1}{c|}{0.495}          & \multicolumn{1}{c|}{0.559}          & \multicolumn{1}{c|}{0.509}          & \multicolumn{1}{c|}{0.512}          & \multicolumn{1}{c|}{0.534}          & 0.538          & \multicolumn{1}{c|}{0.758}          & \multicolumn{1}{c|}{0.791}          & \multicolumn{1}{c|}{0.804}          & \multicolumn{1}{c|}{0.798}          & \multicolumn{1}{c|}{0.806}          & 0.802          \\ \hline
\textbf{FedPM}             & \multicolumn{1}{c|}{0.539}          & \multicolumn{1}{c|}{0.507}          & \multicolumn{1}{c|}{0.547}          & \multicolumn{1}{c|}{0.546}          & \multicolumn{1}{c|}{0.551}          & 0.543          & \multicolumn{1}{c|}{0.760}          & \multicolumn{1}{c|}{0.795}          & \multicolumn{1}{c|}{0.805}          & \multicolumn{1}{c|}{0.802}          & \multicolumn{1}{c|}{0.799}          & 0.804          \\ \midrule[1pt]
\textbf{FedRCO-ori}        & \multicolumn{1}{c|}{0.622}          & \multicolumn{1}{c|}{0.609}          & \multicolumn{1}{c|}{0.695}          & \multicolumn{1}{c|}{\textbf{0.650}} & \multicolumn{1}{c|}{0.675}          & 0.662          & \multicolumn{1}{c|}{0.805}          & \multicolumn{1}{c|}{0.839}          & \multicolumn{1}{c|}{0.854}          & \multicolumn{1}{c|}{0.853}          & \multicolumn{1}{c|}{0.853}          & 0.853          \\ \hline
\textbf{FedRCO}            & \multicolumn{1}{c|}{\textbf{0.743}} & \multicolumn{1}{c|}{\textbf{0.786}} & \multicolumn{1}{c|}{\textbf{0.788}} & \multicolumn{1}{c|}{0.630}          & \multicolumn{1}{c|}{\textbf{0.737}} & \textbf{0.764} & \multicolumn{1}{c|}{\textbf{0.885}} & \multicolumn{1}{c|}{\textbf{0.908}} & \multicolumn{1}{c|}{\textbf{0.902}} & \multicolumn{1}{c|}{\textbf{0.856}} & \multicolumn{1}{c|}{\textbf{0.887}} & \textbf{0.910} \\ \midrule[1pt]
\end{tabular}
  \label{table2}
\end{table*}

\subsection{Ablation Study}

\emph{\textbf{1) Time Efficiency Analysis}}

A common concern for second-order federated methods is the additional computational overhead induced by curvature estimation and matrix inversion. To assess the practical efficiency of FedRCO, we perform a comprehensive time analysis in Fig. \ref{4.2}, covering both fine-grained time breakdowns and wall-clock convergence behavior. As shown in Fig. \ref{4.2}(a), the matrix inversion in FedRCO accounts for only 6.4\% of the total training time, enabled by our efficient implementation and lazy inverse update strategy that amortizes the cost of expensive inversions across iterations. In contrast, FedPM spends 36.1\% of its time on probabilistic computations, indicating a substantially higher computational burden. Moreover, the increased local computation in FedRCO reduces the relative communication cost to 9.2\%, compared to 19.5\% for FedAvg, making FedRCO more computation-bound and well aligned with modern accelerator-equipped edge devices. Importantly, this modest per-round overhead translates into a decisive advantage in wall-clock performance. As shown in Fig. \ref{4.2}(b), FedRCO reaches 70\% accuracy within 1,000 seconds, while FedAvg and FedProx require over 10,000 seconds to approach comparable performance. FedRCO also exhibits a steep initial accuracy rise and maintains stable convergence throughout training, unlike FedRCO-ori, which converges quickly but degrades due to the absence of robust aggregation. Overall, these results demonstrate that FedRCO effectively breaks the conventional trade-off between convergence speed and computational cost, delivering both fast time-to-accuracy and consistently superior final performance in practice.

\emph{\textbf{2) Impact of the Inversion Frequency ($T_{inv}$).}}

Fig. \ref{4.4} evaluates the model's sensitivity to the inversion interval $T_{inv}$ across both communication rounds and wall-clock time. Contrary to the intuition that more frequent updates ($T_{inv}=20$ or $50$) would yield better results, the red curve ($T_{inv}=200$) consistently achieves the highest accuracy, which shows that very frequent updates result in lower steady-state accuracy. Meanwhile, excessively lazy updates lead to stale curvature information that fails to adapt to the changing loss landscape, causing sub-optimal convergence. In the time-accuracy plot, $T_{inv}=200$ reaches its peak accuracy faster than all other configurations. While $T_{inv}=500$ has theoretically lower amortized cost, its poor per-step progress results in a significantly longer time to reach target accuracy compared to $T_{inv}=200$. The results demonstrate that the Lazy Inverse Update is not just a computational necessity but also a performance stabilizer. A moderate interval effectively suppresses curvature noise and minimizes the computational cost, achieving the best trade-off between efficiency and accuracy.

\emph{\textbf{3) Stability Analysis of Gradient Anomaly Detection and Resilience}}

\begin{figure}[!t]
  \centering
  \includegraphics[width=3in]{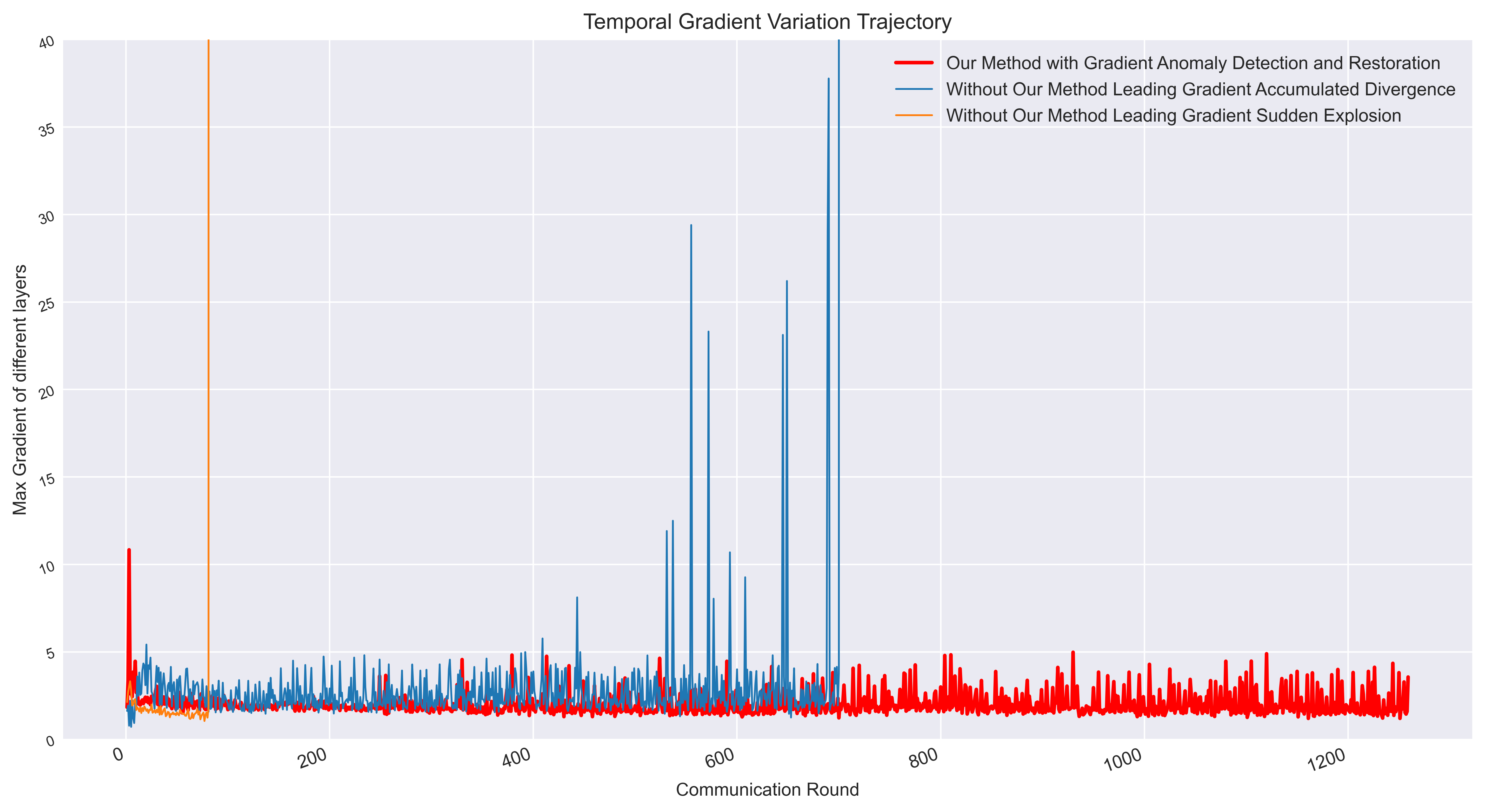}
  \caption{Visualization of gradient stability. The plot compares the gradient trajectories with (red) and without (blue/orange) the Gradient Monitor.}
  \label{4.5}
\end{figure}

To evaluate the effectiveness of the proposed gradient monitor, we visualize the temporal trajectory of the maximum gradient across layers during the training process in Fig. \ref{4.5}. In the absence of our monitoring mechanism, the second-order optimization framework exhibits high numerical instability: one scenario leads to a sudden gradient explosion (orange line) early in training, while another triggers accumulated divergence (blue line) in later rounds due to the noise amplification in the rank-deficient FIM. In contrast, our method, equipped with the gradient monitor (red line), successfully suppresses these extreme fluctuations and maintains the gradient magnitude within a stable, bounded range throughout the entire communication rounds. This ablation result empirically validates that the Gradient Monitor is indispensable for rectifying curvature-induced instability and ensuring the robust convergence of FedRCO.

\emph{\textbf{4) Impact of Client Number and Participation Ratio}}

We demonstrate the exceptional scalability of FedRCO through extensive experiments varying client populations and participation ratios. The results are shown in Table \ref{table2} as well as Fig. \ref{appd.1} to Fig. \ref{appd.21} in \textbf{Appendix E} for details. Unlike baselines that stagnate or fluctuate as network complexity increases, FedRCO consistently dominates across all configurations. Notably, when scaling to 100 clients, our method achieves a commanding $78.8\%$ accuracy on CIFAR-10—outperforming FedAvg and FedProx by a significant margin—while maintaining over $88\%$ on EMNIST. Furthermore, FedRCO proves impervious to participation sparsity, maintaining a decisive lead from low ($0.1$) to full ($1.0$) participation rates. These results conclusively validate that our robust curvature correction effectively neutralizes local drift, ensuring superior convergence and model quality in both massive, fragmented networks and stable, full-participation scenarios compared to all first-order and adaptive baselines.

\section{Conclusion}

We introduced \textbf{FedRCO}, a novel federated second-order optimization method that stabilizes curvature estimation in the presence of data heterogeneity. FedRCO addresses the instability through a robust correction mechanism and achieves high computational efficiency through a lazy update strategy. FedRCO dramatically accelerates convergence and accuracy across diverse non-IID settings, outperforming baselines with negligible overhead. The robustness and scalability of FedRCO make it a promising solution for efficient federated learning on decentralized, resource-constrained devices.

\bibliography{example_paper}
\bibliographystyle{icml2026}

\newpage
\appendix
\onecolumn
\section{Theoretical Analysis of Gradient Instability in Federated Second-Order Optimization}

In this section, we provide a theoretical analysis of the numerical instability observed when applying second-order optimization methods in a federated learning setting. We identify three primary sources of gradient explosion: (1) Rank deficiency due to small-batch approximation, (2) Curvature mismatch arising from non-IID data distributions, and (3) Error analysis caused by stale curvature information.

First, we provide all the notation used in the main text and in the appendices.

\begin{table}[H]
	\renewcommand{\arraystretch}{1.3}
\caption{Notations used in this paper}
	\centering
\begin{tabular}{c|c}
\toprule[1pt]
        \textbf{Notation}&\ \textbf{Description mainly used in the text}\\
        \midrule[1pt]
$\boldsymbol{\theta}$ & Model parameter, $\boldsymbol{\theta} \in \mathbb{R}^d$. \\
$\mathcal{L}(\boldsymbol{\theta}),\mathcal{L}_c(\boldsymbol{\theta})$ & Global loss function, local loss function of client $c$. \\
$C,c$ & Total number of clients, client index. \\
$T,t$ & Total number of communication rounds, communication round index. \\
$K,k$ & Total number of local epochs, local epoch index. \\
$n,n_c$ & Total data volume, local data volume, $n=\sum_{c=1}^Cn_c$. \\
$l$ & Layer index of model. \\
$\boldsymbol{H},$ & Hessian Matrix. \\
$\boldsymbol{F}, \hat {\boldsymbol{F}}$ & Fisher Information Matrix (FIM), the empirical FIM. \\
$\boldsymbol{A}$& Input activation matrix.\\
$\boldsymbol{G}$& Gradient matrix of the loss with respect to the layer’s pre-activation outputs.\\
$\boldsymbol{\Omega}$& Covariance of $\boldsymbol{A}$.\\
$\boldsymbol{\Gamma}$& Covariance of $\boldsymbol{G}$.\\
$\mathcal{D}_c$ & Local dataset held by client $c$. \\
$\boldsymbol{g}$ & Stochastic gradient vector, $\boldsymbol{g} = \nabla \mathcal{L}(\boldsymbol{\theta})$. \\
$T_{inv}$ & Frequency of inversion interval. \\
$\eta$ & Learning rate. \\
$ \pi,{\epsilon},\xi,\gamma$ & Parameters controlling Kronecker factors, Tikhonov damping, anomaly score, and aggregation. \\
        \midrule[1pt]
        \textbf{Notation}&\ \textbf{Description mainly used in the appendices}\\
        \midrule[1pt]
$B$ & Local mini-batch size. \\
$\lambda,\lambda_{min},\lambda_{max}$ & Eigenvalues, minimum eigenvalue, maximum eigenvalue. \\
$v,\delta$ & Direction eigenvector, local curvature. \\
$e,e_{dir},e_{drift}$ & Error, error in the preconditioned update direction, expected squared norm of the client drift. \\
$\kappa(\boldsymbol H)$ & Condition number of the Hessian. \\
$L$ & Lipschitz smoothness constant for assumption \ref{assu1}. \\
$\mu$ & Strong convexity constant for assumption \ref{assu2}. \\
$\sigma$ & Variance constant assumption \ref{assu3}. \\
$M$ & Gradient constant assumption \ref{assu4}. \\
$\mathcal{U}^t$ & Averaged aggregate update for communication round $t$. \\
$\theta^*,\mathcal{L}^*$ & Optimal model, global optimal value. \\
$\rho$ & Effective convergence decay rate, $\rho \approx \eta E \mu \lambda_{min}$. \\
\bottomrule[1pt]
\end{tabular}
\label{note}
\end{table}

\subsection{Preliminaries and Notation}

Let $\mathcal{L}(\boldsymbol\theta) = \mathbb{E}_{(x,y)\sim \mathcal{D}}[l(f(x; \boldsymbol\theta), y)]$ be the global loss function parameterized by $\boldsymbol\theta \in \mathbb{R}^d$. In Federated Learning, we have $C$ clients, where the $c$-th client minimizes a local loss $\mathcal{L}_c(\theta)$ over a local dataset $\mathcal{D}_c$.The standard second-order update rule for client $c$ in the communication round $t$ is given by:

\begin{align}
\boldsymbol\theta^{t+1} = \boldsymbol\theta^t - \eta (\hat{\boldsymbol{F}}_c + \epsilon \boldsymbol I)^{-1} \nabla \mathcal{L}_c(\boldsymbol\theta_t),
\end{align}

where $\hat{\boldsymbol{F}}_c$ is the empirical Fisher Information Matrix (FIM) or Hessian approximation, $\epsilon$ is the damping factor, and $\eta$ is the learning rate.

\subsection{Instability Induced by the Small-Batch Regime}

\textbf{Proposition \ref{prop1}.} \emph{\textbf{(Rank Deficiency)} When the mini-batch size $B$ is smaller than the model dimension $d$ ($B \ll d$), the empirical Fisher Matrix $\hat{\boldsymbol{F}}_c$ becomes rank-deficient. Without sufficient damping, sampling noise within the null space of $\hat{\boldsymbol{F}}_c$ can lead to unbounded parameter updates.}


\begin{proof}

The empirical Fisher matrix computed on a mini-batch of size $B$ is defined as:

\begin{align}
\hat{\boldsymbol{F}}_c = \frac{1}{B} \sum_{i=1}^B \nabla \log p(y_i|x_i; \boldsymbol\theta) \nabla \log p(y_i|x_i; \boldsymbol\theta)^\top.
\end{align}

Let $g_i = \nabla \log p(y_i|x_i;\boldsymbol \theta)$ denote the gradient. $\hat{\boldsymbol{F}}_c$ is a sum of the $B$ rank-1 matrices. Thus, $\text{rank}(\hat{F}_c) \le B$. Consider the spectral decomposition of $\hat{\boldsymbol{F}}_c$:

\begin{align}
    \hat{\boldsymbol{F}}_c = \sum_{j=1}^d \lambda_j u_j u_j^\top,
\end{align}

where the eigenvalues $\lambda_1 \ge \lambda_2 \ge \dots \ge \lambda_d \ge 0$, and $u_j$ denote the eigenvectors. Since $\text{rank}(\hat{\boldsymbol{F}}_c) \le B$, we have $\lambda_j = 0$ for all $j > B$.The preconditioned gradient update $\Delta\boldsymbol \theta = -(\hat{\boldsymbol{F}}_c + \epsilon I)^{-1} g_c$ can be expanded on the eigenbasis:

\begin{align}
    \Delta  \boldsymbol \theta = -\sum_{j=1}^d \frac{u_j^\top g_c}{\lambda_j + \epsilon} u_j.
\end{align}

Splitting this sum into the non-zero subspace and the null space:
\begin{align}
    \Delta \boldsymbol \theta &= -\underbrace{\sum_{j=1}^B \frac{u_j^\top g_c}{\lambda_j + \epsilon} u_j}_{\text{Signal Component}} - \underbrace{\sum_{j=B+1}^d \frac{u_j^\top g_c}{\epsilon} u_j}_{\text{Noise/Null Component}},\notag\\
\|\Delta \boldsymbol \theta\|_2^2 &\ge \sum_{j=B+1}^d \frac{(u_j^\top g_c)^2}{\epsilon^2}.
\end{align}

The local stochastic gradient $g_c$ typically contains noise. In high-dimensional spaces, this noise vector is almost guaranteed to have a non-zero projection onto the null space of $\hat{\boldsymbol{F}}_c$. For the null component, the scaling factor is $1/\epsilon$. As $\epsilon \to 0$, $\|\Delta\boldsymbol \theta\| \to \infty$. This confirms that in small-batch FL settings, the update vector is dominated by noise in the unconstrained directions, causing the gradient explosion detected by our Look-Ahead Gradient Monitor.
\end{proof}

\subsection{ Divergence due to non-IID Curvature Mismatch}

\textbf{Proposition \ref{prop2}.} \emph{\textbf{(Curvature Mismatch)} Let $\mathcal{L}_c$ be the local objective and $\mathcal{L}_{global}$ be the global objective.  If the local curvature underestimates the global curvature along the update direction, for example, the local landscape appears flatter while the global is steep, the quadratic penalty in the global objective can become unbounded, leading to divergence.}

\begin{proof}

Consider the quadratic approximation of the global loss function at $ \boldsymbol\theta^t$:

\begin{align}
    \mathcal{L}_{global}( \boldsymbol\theta^{t+1}) \approx \mathcal{L}_{global}( \boldsymbol\theta^t) + \nabla \mathcal{L}_{global}^\top(\boldsymbol\theta^t) (\Delta \boldsymbol \theta) + \frac{1}{2} (\Delta  \boldsymbol \theta)^\top  \nabla^2 \mathcal{L}_{global}(\boldsymbol\theta^t)  (\Delta  \boldsymbol \theta).
\end{align}

Substituting the local second-order step $\Delta \boldsymbol \theta = -\boldsymbol H_c^{-1} g_c$ (assuming $\epsilon=0$ for simplicity), the quadratic penalty term is:

\begin{align}
\frac{1}{2} (\Delta  \boldsymbol \theta)^\top  \nabla^2 \mathcal{L}_{global}(\boldsymbol\theta^t)  (\Delta  \boldsymbol \theta) &= \frac{1}{2} g_c^\top  \boldsymbol H_c^{-1}\sum_{c=1}^C \frac{n_c}{n} \nabla^2 \mathcal{L}_c(\boldsymbol\theta^t) H_c^{-1} g_k \notag \\
&=\frac{1}{2} g_c^\top\boldsymbol  H_c^{-1}\boldsymbol  H_{global}\boldsymbol  H_c^{-1} g_c,
\end{align}

where $n_c$ is the data volume of $c$ clients $n=\sum_{c=1}^Cn_c$, and $\boldsymbol H_c$ and $\boldsymbol H_{global}$ denote the Hessian of the client $c$ and global model, respectively. Let $v$ be a direction eigenvector where data is sparse on client $c$ but dense globally.
Considering the local flatness, the client lacks information in direction $v$, so the local curvature is small: $v^\top\boldsymbol  H_c v = \delta \approx 0$. Consequently, the inverse curvature is large: $v^\top \boldsymbol H_c^{-1} v \approx 1/\delta$. Considering the global sharpness, this direction is well-constrained: $v^\top \boldsymbol H_{global} v = Y \gg 0$. Analyzing the matrix product $\boldsymbol  H_c^{-1} \boldsymbol H_{global}\boldsymbol  H_c^{-1}$ along direction $v$:

\begin{align}
    v^\top \boldsymbol  H_c^{-1} \boldsymbol H_{global}\boldsymbol  H_c^{-1} v \approx \frac{1}{\delta} \cdot Y \cdot \frac{1}{\delta} = \frac{Y}{\delta^2}.
\end{align}

The global loss increase is bounded by:

\begin{align}
    \mathcal{L}_{global}(\boldsymbol\theta_{t+1}) - \mathcal{L}_{global}(\boldsymbol\theta_t) \gtrsim -\eta \|\nabla \mathcal{L}_{global}\| \cdot \frac{1}{\delta} + \frac{1}{2} \eta^2 \frac{Y}{\delta^2}.
\end{align}

Since $\delta$ is in the denominator squared for the penalty term but linear for the descent term, for non-IID which represents sufficiently small $\delta$, the quadratic penalty dominates:

\begin{align}
    \lim_{\delta \to 0} \left( \frac{Y}{\delta^2} \right) = \infty.
\end{align}

This proves that trusting a locally inverted Hessian on non-IID data can lead to catastrophic increases in global loss, necessitating the robust resilience mechanism proposed in our method.
\end{proof}

\subsection{Perturbation Analysis of Stale Preconditioners}

In our method, we update the inverse matrix every $T_{inv}$ steps to reduce computational cost. Here we analyze the error introduced by this delay. Let $\boldsymbol F_t$ be the true Fisher matrix at step $t$, and $\boldsymbol F_{old}$ be the stale matrix computed $\tau$ steps ago. We can model this as a perturbation:

\begin{align}
   \boldsymbol  F_{old} =\boldsymbol  F_t +\boldsymbol  E,
\end{align}

where $\boldsymbol E$ is the error matrix due to the change in $\boldsymbol \theta$ over $\tau$ steps. We are interested in the error of the inverse $(\boldsymbol F_t + \boldsymbol E)^{-1}$. Using a first-order Neumann series expansion $(\boldsymbol A+\boldsymbol E)^{-1} \approx \boldsymbol A^{-1} - \boldsymbol A^{-1}\boldsymbol E\boldsymbol A^{-1}$, the inverse of the stale matrix can be approximated as :

\begin{align}
    (\boldsymbol F_{old})^{-1} \approx\boldsymbol  F_t^{-1} -\boldsymbol  F_t^{-1}\boldsymbol  E\boldsymbol  F_t^{-1}.
\end{align}

As a result, the error in the preconditioned update direction can be expressed as

\begin{align}
   \boldsymbol  e_{dir} = \Delta \boldsymbol\theta_{stale} - \Delta\boldsymbol \theta_{true} \approx \eta \boldsymbol (F_t^{-1}\boldsymbol  E\boldsymbol  F_t^{-1}) \nabla \mathcal{L},
\end{align}

Taking the spectral norm yields the bound:

$$\|\boldsymbol e_{dir}\| \le \eta \|\boldsymbol F_t^{-1}\|^2 \|\boldsymbol E\| \|\nabla \mathcal{L}\|$$.

Use $\|\boldsymbol F_t^{-1}\| = \frac{1}{\lambda_{min}(\boldsymbol F_t)}$ for symmetric positive definite matrices. Thus:

\begin{align}
    \|\boldsymbol e_{dir}\| \propto \frac{1}{\lambda_{min}^2} \|\boldsymbol E\|.
\end{align}

This bound highlights that in ill-conditioned landscapes where $\lambda_{\min}(\boldsymbol F_t)$ approaches zero, stale curvature information can, in principle, lead to a large deviation in the update direction.

However, this analysis characterizes a worst-case bias induced by delayed curvature updates and does not account for the stochastic nature of mini-batch optimization. In FL settings with non-IID data and limited local computation, FIM estimates obtained from individual mini-batches exhibit high variance. Empirically, we observe that excessively frequent curvature inversions amplify this estimation noise: the preconditioner overfits to batch-specific fluctuations, resulting in unstable optimization trajectories.

In contrast, updating the inverse curvature at a moderate interval allows curvature statistics to aggregate information across multiple steps, effectively acting as a temporal smoothing mechanism. This variance reduction often dominates the bias introduced by delayed updates, leading to more stable and efficient optimization in practice. Our experiments consistently show that a moderately large update interval (e.g., \(T_{\text{inv}} = 200\)) outperforms more frequent updates (e.g., \(T_{\text{inv}} = 20\)), while simultaneously reducing computational overhead. These observations reveal a bias--variance trade-off in the choice of \(T_{\text{inv}}\), which our method exploits to achieve both stability and efficiency in federated second-order optimization.

\section{Theoretical Justification for the Efficacy of Second-Order in Federated Learning}

In this section, we provide the theoretical motivation for employing K-FAC in federated settings. We formally demonstrate that second-order preconditioning effectively mitigates the poor conditioning of the optimization landscape caused by data heterogeneity, leading to higher convergence rates compared to first-order methods.

\subsection{The Conditioning Problem in Federated Optimizations}

Consider the local objective function $\mathcal{L}_c(\boldsymbol\theta)$ for client $c$. In the quadratic approximation near a local minimum $\boldsymbol\theta^*$, the loss is governed by the Hessian matrix $\boldsymbol H = \nabla^2 \mathcal{L}_c(\boldsymbol\theta^*)$:

\begin{align}
     \mathcal{L}_c(\boldsymbol \theta) \approx \mathcal{L}_c(\boldsymbol \theta^*) + \frac{1}{2} (\boldsymbol \theta -\boldsymbol  \theta^*)^\top\boldsymbol  H (\boldsymbol \theta - \boldsymbol \theta^*).
\end{align}

The convergence speed of first-order gradient descent is fundamentally limited by the condition number $\kappa(\boldsymbol H)$ of the Hessian:

\begin{align}
    \kappa(\boldsymbol H) = \frac{\lambda_{max}(\boldsymbol H)}{\lambda_{min}(\boldsymbol H)},
\end{align}

where $\lambda_{max}$ and $\lambda_{min}$ are the largest and smallest eigenvalues of $H$.

\textbf{Proposition \ref{prop3}.}  \emph{
\textbf{(SGD Convergence Rate)} For a convex quadratic objective, the error contraction of SGD with optimal learning rate is bounded by:}

\begin{align}
    \|\boldsymbol \theta_{t+1} - \boldsymbol \theta^*\| \le \left( \frac{\kappa(\boldsymbol H) - 1}{\kappa(\boldsymbol H) + 1} \right) \|\theta_t - \theta^*\|.
\end{align}

In FL, data heterogeneity induces highly skewed loss landscapes. For wxample, if a client only has examples of class A but not B, the curvature along the B features is near zero, leading to $\lambda_{min} \to 0$, while the A features are steep. This leads to $\kappa(\boldsymbol H) \to \infty$. Consequently, the contraction factor $\frac{\kappa-1}{\kappa+1} \to 1$, meaning SGD convergence becomes arbitrarily slow.

\subsection{Geometric Correction via Fisher Preconditioning}

K-FAC approximates the NGD update, which uses the Fisher Information Matrix $\boldsymbol F$ as a preconditioner. The update rule is

\begin{align}
    \Delta \boldsymbol \theta = -\eta \boldsymbol F^{-1} \nabla \mathcal{L}.
\end{align}

\textbf{Proposition \ref{prop4}.} \emph{\textbf{(Affine Invariance and Isotropization)} Ideally, preconditioning by the Hessian or the Fisher transforms the geometry of the parameter space into an isotropic sphere.}

\begin{proof}
Let the optimization landscape around a local optimum $\theta^*$ be approximated by a quadratic function. Consider a linear change of basis $\phi = \boldsymbol{F}^{1/2}(\boldsymbol{\theta} - \boldsymbol{\theta}^*)$. In this transformed coordinate system, the Hessian of the loss $\mathcal{L}(\phi)$ becomes:

\begin{align}
    \mathcal{L}(\boldsymbol \phi) \approx \frac{1}{2}\boldsymbol  \phi^\top (\boldsymbol F^{-1/2} \boldsymbol H \boldsymbol F^{-1/2}) \boldsymbol \phi=\frac{1}{2}\boldsymbol  \phi^\top (\tilde{\boldsymbol{H}}) \boldsymbol \phi.
\end{align}

Under the standard assumption that the model distribution matches the data distribution near the optimum, we have $\boldsymbol{H} \approx \boldsymbol{F}$. Consequently, the transformed Hessian approximates the identity matrix:

\begin{align}
    \tilde{\boldsymbol H} = \boldsymbol F^{-1/2} \boldsymbol H \boldsymbol F^{-1/2} \approx \boldsymbol F^{-1/2} \boldsymbol F \boldsymbol F^{-1/2} = \boldsymbol I.
\end{align}

The condition number of the identity matrix $\boldsymbol I$ is:

\begin{equation}
\kappa(\tilde{\boldsymbol{H}}) \approx \kappa(\boldsymbol{I}) = 1
\end{equation}

Substituting $\kappa=1$ into the convergence bound from Proposition \ref{prop3}:

\begin{align}
    \frac{\kappa(\boldsymbol H) - 1}{\kappa(\boldsymbol H) + 1} = \frac{1 - 1}{1 + 1} = 0.
\end{align}

Ideally, second-order optimization achieves quadratic convergence. In practice, K-FAC uses a block-diagonal approximation $\hat{F}$, leading to the condition number $\kappa_{new} \ll \kappa_{SGD}$. This implies that K-FAC can maximize the utility of each local training round, significantly reducing the required communication rounds.
\end{proof}

\section{Convergence Analysis of Second-order Federated Learning}

In this section, we provide a detailed convergence analysis of the proposed FedRCO  algorithm. We analyze the convergence behavior from two perspectives: (1) the local descent property on the client side, showing how the second-order preconditioner accelerates local objective minimization, and (2) the global convergence of the aggregated model on the server side, bounding the error accumulation caused by multiple local epochs and non-IID data.

\subsection{Problem Setup and Assumptions}

We consider the following federated optimization problem:

\begin{align}
\min_{\boldsymbol\theta \in \mathbb{R}^d} \mathcal{L}(\boldsymbol\theta) := \sum_{c=1}^c \frac{n_c}{n} \mathcal{L}_c(\boldsymbol\theta_c), \quad  \mathcal{L}_c(\boldsymbol\theta_c) = \mathbb{E}_{(x,y)\sim \mathcal{D}_c} [l({y} | {x}; \boldsymbol{\theta})],
\end{align}

where $n_c$ is the data volume of $c$ clients $n=\sum_{c=1}^Cn_c$, $C$ is the number of clients, and $\mathcal{D}_c$ is the local data distribution.

Let $\hat{\boldsymbol F}_c^{-1}(\boldsymbol\theta)$ denote the approximate inverse Fisher matrix for client $c$ at parameters $\theta$. The local update rule at communication round $t$ on client $c$ is:
\begin{align}
    \boldsymbol\theta^{t+1}_c = \boldsymbol\theta^t_c - \eta^t \hat{\boldsymbol F}_c^{-1}(\boldsymbol\theta^t_c) \nabla \mathcal{L}_c(\boldsymbol\theta^t_c).
\end{align}

To facilitate the analysis, we make the following standard assumptions, which are widely used in the analysis of second-order and federated optimization methods.

\begin{assumption}
    \textbf{(L-Smoothness)} Each local objective function $\mathcal{L}_c$ is $L$-smooth. For all $\boldsymbol \theta_1, \boldsymbol \theta_2 \in \mathbb{R}^d$:
\begin{align}
        \mathcal{L}_c(\boldsymbol\theta_1) \le \mathcal{L}_c(\boldsymbol\theta_2) + \nabla \mathcal{L}_c(\boldsymbol\theta_2)^\top (\boldsymbol\theta_1 - \boldsymbol\theta_2) + \frac{L}{2} \|\boldsymbol\theta_1 - \boldsymbol\theta_2\|^2.
\end{align}
\label{assu1}
\end{assumption}

\begin{assumption}
    \textbf{($\mu$-Strong Convexity) }The global objective function $\mathcal{L}$ is $\mu$-strongly convex. For all $\boldsymbol\theta_1, \boldsymbol\theta_2 \in \mathbb{R}^d$:
\begin{align}
    \mathcal{L}(\boldsymbol\theta_1) \ge \mathcal{L}(\boldsymbol\theta_2) + \nabla \mathcal{L}(\boldsymbol\theta_2)^\top (\boldsymbol\theta_1 - \boldsymbol\theta_2) + \frac{\mu}{2} \|\boldsymbol\theta_1 - \boldsymbol\theta_2\|^2.
\end{align}
\label{assu2}
\end{assumption}

\begin{assumption}
\textbf{(Unbiased Gradient and Bounded Variance)} For any client $c$, the stochastic gradient $g_c(\boldsymbol\theta)$ is an unbiased estimator of the local full-batch gradient $\nabla \mathcal{L}_c(\boldsymbol\theta)$, and its variance is bounded by $\sigma^2$:
\begin{align}
\mathbb{E}[g_c(\boldsymbol\theta)] = \nabla \mathcal{L}_c(\boldsymbol\theta), \quad \mathbb{E}[|g_c(\boldsymbol\theta) - \nabla \mathcal{L}_c(\boldsymbol\theta)|^2] \le \sigma^2.
\end{align}
\label{assu3}
\end{assumption}

\begin{assumption}
    \textbf{(Bounded Gradients)} The expected squared norm of stochastic gradients is bounded:
\begin{align}
 \mathbb{E}[\|\nabla \mathcal{L}_c(\boldsymbol\theta)\| ] \le M^2.
\end{align}
\label{assu4}
\end{assumption}

\begin{assumption}
    \textbf{(Bounded Preconditioner Spectrum)} The approximate inverse Fisher matrix $\hat{F}_c^{-1}(\boldsymbol\theta)$ used in K-FAC satisfies the following eigenvalue bounds for all $c, \theta$:

\begin{align}
    \lambda_{min} \boldsymbol I \preceq \hat{F}_c^{-1}(\boldsymbol\theta) \preceq \lambda_{max} \boldsymbol I.
\end{align}

where $\lambda_{max}$ and $\lambda_{min}$ are the largest and smallest eigenvalues which have $0 < \lambda_{min} \le \lambda_{max}$.
\label{assu5}
\end{assumption}

\subsection{Client-Side Analysis: Local Descent Lemma}

First, we prove that the second-order update ensures a sufficient decrease in the local objective function value, effectively accelerating convergence compared to SGD.

\textbf{Theorem \ref{theo5.1}} \emph{    \textbf{(Local Descent with Preconditioning)} Under Assumptions \ref{assu1} and \ref{assu5}, for a learning rate $\eta$ satisfying $\eta \le \frac{\lambda_{min}}{L \lambda_{max}^2}$, the expected decrease in the local objective $\mathcal{L}_c$ for a single K-FAC step is lower bounded by:}
\begin{align}
    \mathbb{E}[\mathcal{L}_c(\boldsymbol\theta^{t+1}_c)] - \mathcal{L}_c(\boldsymbol\theta_c^t) \le - \eta \frac{\lambda_{min}}{2} \|\nabla \mathcal{L}_c(\boldsymbol\theta_c^t)\|^2 + \eta^2 \frac{L \lambda_{max}^2 \sigma^2}{2}.
\end{align}

\begin{proof}

From the $L$-smoothness of $\mathcal{L}_c$:
\begin{align}
    \mathcal{L}_c(\boldsymbol\theta^{t+1}) \le \mathcal{L}_c(\boldsymbol\theta^t) + \nabla \mathcal{L}_c(\boldsymbol\theta^t)^\top (\boldsymbol\theta^{t+1} -\boldsymbol \theta^t) + \frac{L}{2} \|\boldsymbol\theta^{t+1} - \boldsymbol\theta^t\|^2.
\end{align}

Substitute the second-order update rule $\boldsymbol\theta^{t+1} - \boldsymbol\theta^t = - \eta (\hat{\boldsymbol F}^{t})^{-1} g_t$, where $g_t$ is the stochastic gradient:
\begin{align}
    \mathcal{L}_c(\boldsymbol\theta^{t+1}) \le\mathcal{L}_c(\boldsymbol\theta^t) - \eta \nabla \mathcal{L}_c(\boldsymbol\theta^t)^\top (\hat{\boldsymbol F}^{t})^{-1} g_t + \frac{L \eta^2}{2} \|(\hat{\boldsymbol F}^{t})^{-1} g_t\|^2.
\end{align}

Taking the expectation over the stochastic noise and taking $\mathbb{E}[g_t] = \nabla \mathcal{L}_c(\theta^t)$:

$$\begin{aligned}
\mathbb{E}[\mathcal{L}_c(\boldsymbol\theta^{t+1})] &\le\mathcal{L}_c(\boldsymbol\theta^t) - \eta \nabla \mathcal{L}_c(\boldsymbol\theta^t)^\top (\hat{\boldsymbol F}^{t})^{-1} \nabla \mathcal{L}_c(\boldsymbol\theta^t) + \frac{L \eta^2}{2} \mathbb{E}[\|(\hat{\boldsymbol F}^{t})^{-1} g_t\|^2].
\end{aligned}$$

Using Assumption \ref{assu5}, $(\hat{\boldsymbol F}^{t})^{-1} \succeq \lambda_{min} \boldsymbol I$ and $\|(\hat{\boldsymbol F}^{t})^{-1}\| \le \lambda_{max}$, the quadratic term can be described as:
\begin{align}
    \nabla \mathcal{L}_c(\boldsymbol\theta^t)^\top (\hat{\boldsymbol F}^{t})^{-1} \nabla \mathcal{L}_c(\boldsymbol\theta^t) \ge \lambda_{min} ||\nabla \mathcal{L}_c(\boldsymbol\theta^t)||^2.
\end{align}

And the variance term can be described as:

\begin{align}
    \mathbb{E}[\|(\hat{\boldsymbol F}^{t})^{-1} g_t\|^2] \le \lambda_{max}^2 \mathbb{E}[\|g_t\|^2] = \lambda_{max}^2 (\|\nabla \mathcal{L}_c(\boldsymbol\theta^t)\|^2 + \sigma^2).
\end{align}

Substituting the variance term and the quadratic term:

\begin{align}
    \mathbb{E}[\mathcal{L}_c(\boldsymbol\theta^{t+1})] \le \mathcal{L}_c(\boldsymbol\theta^t) - \eta \lambda_{min} \|\nabla \mathcal{L}_c(\boldsymbol\theta^t)\|^2 + \frac{L \eta^2 \lambda_{max}^2}{2} (\|\nabla \mathcal{L}_c(\boldsymbol\theta^t)\|^2 + \sigma^2).
\end{align}

Regrouping the terms involving gradient norm:
\begin{align}
    \mathbb{E}[\mathcal{L}_c(\boldsymbol\theta^{t+1})] \le \mathcal{L}_c(\boldsymbol\theta^t) - \eta \left( \lambda_{min} - \frac{\eta L \lambda_{max}^2}{2} \right) \|\nabla \mathcal{L}_c(\boldsymbol\theta^t)
    \|^2 + \frac{L \eta^2 \lambda_{max}^2 \sigma^2}{2}.
\end{align}

Since we choosing $\eta \le \frac{\lambda_{min}}{L \lambda_{max}^2}$ and  $\lambda_{min} \le \lambda_{max}$,  we ensure the descent term is negative. Specifically, setting $\eta$ sufficiently small such that $\lambda_{min} - \frac{\eta L \lambda_{max}^2}{2} \ge \frac{\lambda_{min}}{2}$, we obtain:

\begin{align}
    \mathbb{E}[\mathcal{L}_c(\boldsymbol\theta^{t+1})] \le \mathcal{L}_c(\boldsymbol\theta^t) - \frac{\eta \lambda_{min}}{2}
\|\nabla \mathcal{L}_c(\boldsymbol\theta^t)\|^2 + \frac{L \eta^2 \lambda_{max}^2 \sigma^2}{2}.
\end{align}
\end{proof}

This lemma proves that on the client side, second-order optimization guaranties a descent proportional to $\lambda_{min}$. The standard SGD is a special case where $\lambda_{min}=\lambda_{max}=1$. Second-order provides a significant advantage when the geometry is ill-conditioned, standard SGD would struggle with a large Lipschitz constant $L$, but our method effectively rescales the space.

\subsection{Server-Side Global Convergence Theorem}
We now employ the local descent to prove the convergence of the global model ${\boldsymbol\theta}_t$ after $T'$ communication rounds.

\textbf{Theorem \ref{theo5.2}}\emph{    \textbf{(Global Convergence Rate of FedRCO)} Under Assumptions \ref{assu1}-\ref{assu5}, second-order federated optimization converges to a neighborhood of the optimal solution $\boldsymbol\theta^*$. Specifically, for the global model ${\boldsymbol \theta}^{t}$, the error bound satisfies:}
\begin{align}
        \mathbb{E} \|{\boldsymbol\theta}^{t+1} - \boldsymbol\theta^*\|^2 \le (1 - \rho) \mathbb{E} \|{\boldsymbol\theta}^t -\boldsymbol \theta^*\|^2 + E,
\end{align}
\emph{    where $\rho \approx \eta \mu K \lambda_{min}$, and $E$ represent noise and heterogeneity terms.}

\begin{proof}
The global model update rule at round $t$ after $k$ local steps aggregating updates from $C$ clients is:
\begin{align}
    {\boldsymbol\theta}^{t+1} = \boldsymbol{\theta}^t - \eta \frac{1}{C} \sum_{c=1}^C \sum_{k=0}^{K-1} (\hat{\boldsymbol F}_{c,k}^t)^{-1} g_{c,k}^t.
\end{align}

Let $\mathcal{ U}^t = \frac{1}{C} \sum_{c=1}^C \sum_{k=0}^{K-1} (\hat{\boldsymbol F}_{c,k}^t)^{-1} g_{c,k}^t$ be the averaged aggregate update. We analyze the distance to the optimum $\boldsymbol\theta^*$:
\begin{align}
    \| \boldsymbol{\theta}^{t+1} - \boldsymbol\theta^* \|^2 = \| \boldsymbol{\theta}^t - \eta \mathcal{U}^t - \boldsymbol\theta^* \|^2
= \| \boldsymbol{\theta}^t - \boldsymbol\theta^* \|^2 - 2\eta \langle {\boldsymbol\theta}^t - \boldsymbol\theta^*, \mathcal{ U}_t \rangle + \eta^2 \| \mathcal{U}^t \|^2.
\end{align}

First, we analyze the bounding of the expectation of the contraction term. We focus on the term $- 2\eta \mathbb{E} \langle \boldsymbol{\theta}^t - \boldsymbol\theta^*, \mathcal{ U}^t \rangle$. The aggregate update $\mathcal{ U}^t$ essentially approximates the descent direction. Ideally, we want $\mathcal{ U}^t \approx K \cdot\boldsymbol  F^{-1} \nabla \mathcal{L}({\boldsymbol\theta}^t)$. However, gradients are computed at local perturbed points $\boldsymbol\theta_{c,k}^t$.
\begin{align}
    \mathcal{ U}^t = \frac{1}{C} \sum_{c=1}^C \sum_{k=0}^{K-1} (\hat{\boldsymbol F}_{c,k}^t)^{-1}  (\nabla \mathcal{L}_c({\boldsymbol\theta}^t) + \underbrace{\nabla \mathcal{L}_c(\boldsymbol\theta_{c,k}^t) - \nabla \mathcal{L}_c({\boldsymbol\theta}_t)}_{\text{Drift Error}} + \underbrace{g_{c,k}^t - \nabla \mathcal{L}_c(\boldsymbol\theta_{c,k}^t)}_{\text{Noise}}).
\end{align}

To rigorously bound the contraction term $\langle \boldsymbol{\theta}^t - \boldsymbol{\theta}^*, (\hat{\boldsymbol{F}}_{c,k}^t)^{-1} \nabla \mathcal{L}_c(\boldsymbol{\theta}^t) \rangle$, we apply the Mean Value Theorem. Specifically, we can express the gradient difference as $\nabla \mathcal{L}_c(\boldsymbol{\theta}^t) - \nabla \mathcal{L}_c(\boldsymbol{\theta}^*) = \tilde{\mathbf{H}} (\boldsymbol{\theta}^t - \boldsymbol{\theta}^*)$, where $\tilde{\mathbf{H}} = \nabla^2 \mathcal{L}_c(\tilde{\boldsymbol{\theta}})$ is the Hessian evaluated at some interpolation point.

Assuming $\nabla \mathcal{L}_c(\boldsymbol{\theta}^*) \approx 0$, the term becomes a quadratic form:
\begin{align}
   \langle \boldsymbol{\theta}^t - \boldsymbol\theta^*, (\hat{\boldsymbol F}_{c,k}^t)^{-1} \nabla \mathcal{L}_c({\boldsymbol\theta}^t) \rangle= \langle \boldsymbol{\theta}^t - \boldsymbol{\theta}^*, (\hat{\boldsymbol{F}}_{c,k}^t)^{-1} \tilde{\mathbf{H}} (\boldsymbol{\theta}^t - \boldsymbol{\theta}^*) \rangle
\end{align}
Since both the preconditioner $(\hat{\boldsymbol{F}}_{c,k}^t)^{-1}$ and the Hessian $\tilde{\mathbf{H}}$ are positive definite matrices (Assumption \ref{assu2} and \ref{assu5}), the product matrix has positive eigenvalues. We can thus lower bound this term using the minimum eigenvalue of the product matrix:
\begin{align}
   \langle \boldsymbol{\theta}^t - \boldsymbol\theta^*, (\hat{\boldsymbol F}_{c,k}^t)^{-1} \nabla \mathcal{L}_c({\boldsymbol\theta}^t) \rangle \ge \lambda_{min}\left( (\hat{\boldsymbol{F}}_{c,k}^t)^{-1} \tilde{\mathbf{H}} \right) \|\boldsymbol{\theta}^t - \boldsymbol{\theta}^*\|^2    \ge {\lambda_{min} {\mu}}\|\boldsymbol{\theta}^t - \boldsymbol{\theta}^*\|^2
\end{align}

Summing over $c$ where $\sum \nabla \mathcal{L}_c = \nabla \mathcal{L}$ and $k$:

\begin{align}
    \mathbb{E} \langle {\theta}^t - \theta^*, \mathcal{U}^t \rangle \ge K \lambda_{min} \mu \| {\theta}^t - \theta^* \|^2 .
\end{align}

Thus, the contraction term becomes:

\begin{align}
    -2\eta \mathbb{E} \langle \boldsymbol{\theta}^t - \boldsymbol\theta^*, \mathcal{U}^t \rangle \le -2 \eta\mu K \lambda_{min}  \| \boldsymbol{\theta}^t - \boldsymbol\theta^* \|^2.
\end{align}

Then we analyze the bounding of the quadratic term. We bound $\eta^2 \mathbb{E} \| \mathcal{U}_t \|^2$. Using the Cauchy-Schwarz inequality $\|\sum_{i=1}^n x_i\|^2 \le n \sum_{i=1}^n \|x_i\|^2$, we have:
\begin{align}
    \mathbb{E} \| \mathcal{U}_t \|^2=\left\| \frac{1}{C} \sum_{c=1}^C  \sum_{k=0}^{K-1} (\hat{\boldsymbol F}_{c,k}^t)^{-1} g_{c,k}^t \right\|^2 \le  K \sum_{k=0}^{K-1} \left\| (\hat{\boldsymbol F}_{c,k}^t)^{-1} g_{c,k}^t \right\|^2.
\end{align}

By the definition of the induced matrix norm, $\|Ax\| \le \|A\|_2 \|x\|$, where $\|A\|_2$ corresponds to the largest singular value or eigenvalue for symmetric matrices. Per Assumption \ref{assu5}, $\|(\hat{F}_{c,k}^t)^{-1}\|_2 \le \lambda_{max}$, we have

\begin{align}
    \left\| (\hat{\boldsymbol F}_{c,k}^t)^{-1} g_{c,k}^t \right\|^2 \le \| (\hat{\boldsymbol F}_{c,k}^t)^{-1} \|_2^2 \| g_{c,k}^t \|^2 \le \lambda_{max}^2 \| g_{c,k}^t \|^2.
\end{align}

Substituting this back and taking the expectation:

\begin{align}
    \eta^2 \mathbb{E} \| \mathcal{U}_t \|^2 \le \eta^2 K \sum_{k=0}^{K-1} \lambda_{max}^2 \mathbb{E} \left[ \| g_{c,k}^t \|^2 \right].
\end{align}

Using Assumption \ref{assu4}, where $ \mathbb{E}[\|\nabla \mathcal{L}_c(\boldsymbol\theta)\| ] \le M^2.$:

\begin{align}
    & \eta^2 \mathbb{E} \| \mathcal{U}_t \|^2 \le \eta^2 K \sum_{k=0}^{K-1} \lambda_{max}^2 M^2 = \eta^2 K \cdot (K \lambda_{max}^2 M^2)=\eta^2
    K^2  \lambda_{max}^2 M^2.
\end{align}

Last, we combine the contraction term and quadratic term into the main equation:

\begin{align}
    \mathbb{E} \| \boldsymbol{\theta}^{t+1} - \boldsymbol\theta^* \|^2 \le \| \boldsymbol{\theta}^t - \boldsymbol\theta^* \|^2 - 2 \eta K \mu \lambda_{min} \| \boldsymbol{\theta}_t - \boldsymbol\theta^* \|^2  + \eta^2 K^2 \lambda_{max}^2 M^2.
\end{align}

Let $\rho = 2 \eta K \mu \lambda_{min}$.  Note that $\lambda_{min}$ accelerates convergence compared to first-order methods, where implicitly $\lambda_{min}=1$.

\begin{align}
\mathbb{E} \| \boldsymbol{\theta}_{t+1} - \boldsymbol\theta^* \|^2 \le (1 - \rho) \mathbb{E} \| \boldsymbol{\theta}_t - \boldsymbol\theta^* \|^2 + \mathcal{O}(\eta^2 K^2).
\end{align}

Applying this recursion over $T'$ communication rounds:
\begin{align}
    \mathbb{E} \| \boldsymbol{\theta}^{T'} - \boldsymbol\theta^* \|^2 \le (1 - \rho)^{T'} \Delta_0 + \frac{\mathcal{O}(\eta^2 K^2)}{\rho}.
\end{align}

The first term decays linearly to zero, while the second term represents the residual error floor due to stochastic noise and non-IID drift. The preconditioner $\lambda_{min}$ in $\rho$ effectively improves the condition number, leading to faster convergence than SGD.

\end{proof}

\section{Detailed Derivation of Optimization Bounds}

In this section, we rigorously derive the error bounds for both the client-side local updates and the server-side global aggregation. Our analysis explicitly incorporates the second-order preconditioner matrix, demonstrating how second-order information impacts the convergence trajectory compared to standard SGD.

\subsection{Preliminaries}
We retain the standard \textbf{Assumptions} from \textbf{Appendix C}, and we explicitly define the properties of the K-FAC preconditioner. Let $\boldsymbol F_{c,k}^{-1}$ be the K-FAC preconditioner for client $c$ at step $k$. The local update rule is:

\begin{align}
    \boldsymbol \theta_{c, k+1}^t =\boldsymbol  \theta_{c, k}^t - \eta \boldsymbol F_{c,k}^{-1} \nabla \mathcal{L}_c(\boldsymbol \theta_{c, k}^t),
\end{align}

where $t$ indexes the communication round and $k$ indexes the local epoch ($k \in \{0, \dots, K-1\}$).

\subsection{Client-Side Bound: Analyzing Local Drift}
A key challenge in FL is the client drift caused by performing several local steps before aggregation. We define the drift at step $t$ within communication round $E$ as $\Delta^t_c = \boldsymbol\theta^t_c - \boldsymbol\theta^t$, where $\boldsymbol\theta^t$ is the virtual global model. The Client Drift measures how far the local model deviates from the global model after $K$ steps of local training. This drift is the primary source of noise in Federated Learning.

\textbf{Theorem \ref{theo.d1}}\emph{    \textbf{(Client Drift Bound)} Under Assumptions \ref{assu1}, \ref{assu3}, \ref{assu4} and \ref{assu5}, the expected squared norm of the client drift after $K$ local steps is bounded by:}
\begin{align}
    {e}_{drift} = \mathbb{E} \left[ \left\| \boldsymbol\theta_{c, K}^t - \boldsymbol\theta^t \right\|^2 \right] \le 2 K^2 \eta^2 \lambda_{max}^2 \sigma^2 + 2 K^2 \eta^2 \lambda_{max}^2 M^2.
\end{align}

\begin{proof}

The accumulated parameter change after $K$ steps is the sum of local updates:
\begin{align}
    \boldsymbol\theta_{c, k}^t - \boldsymbol \theta^t = - \sum_{k=0}^{K-1} \eta (\boldsymbol F_{c,k}^t)^{-1} \nabla \mathcal{L}_c(\boldsymbol    \theta_{c, k}^t).
\end{align}

Taking the squared norm and expectation:

\begin{align}
    \mathbb{E} \left[ \left\| \boldsymbol\theta_{c, k}^t - \boldsymbol\theta^t \right\|^2 \right] = \eta^2 \mathbb{E} \left[ \left\| \sum_{k=0}^{K-1} (\boldsymbol F_{c,k}^t)^{-1} \nabla \mathcal{L}_c(\boldsymbol\theta_{c, k}^t) \right\|^2 \right].
\end{align}

Using the Jensen's inequality $\|\sum_{i=1}^n x_i\|^2 \le n \sum_{i=1}^n \|x_i\|^2$:

\begin{align}
       \mathbb{E} \left[ \left\| \boldsymbol\theta_{c, k}^t - \boldsymbol\theta^t \right\|^2 \right]\le \eta^2 K \sum_{k=0}^{K-1} \mathbb{E} \left[ \left\| (\boldsymbol F_{c,k}^t)^{-1} \nabla  \mathcal{L}_c(\boldsymbol\theta_{c, k}^t) \right\|^2 \right].
\end{align}

Now we apply the bound on the preconditioner $\boldsymbol F^{-1}$. Since $\|\boldsymbol F^{-1} v\| \le \|\boldsymbol F^{-1}\|_2 \|v\| \le \lambda_{max} \|v\|$:
\begin{align}
    \mathbb{E} \left[ \left\| \boldsymbol\theta_{c, k}^t - \boldsymbol\theta^t \right\|^2 \right]
    \le \eta^2 K \lambda_{max}^2 \sum_{k=0}^{-1} \mathbb{E} \left[ \left\| \nabla  \mathcal{L}_c(\boldsymbol\theta_{c, k}^t) \right\|^2 \right].
\end{align}

We decompose the stochastic gradient into the true gradient and variance:

\begin{align}
    \mathbb{E} [\|g\|^2] = \mathbb{E} [\|g - \nabla \mathcal{L} + \nabla \mathcal{L}\|^2] \le 2 \mathbb{E}[\|g - \nabla \mathcal{L}\|^2] + 2 \|\nabla \mathcal{L}\|^2 \le 2\sigma^2 + 2M^2.
\end{align}

Substituting this back:

\begin{align}
    e_{drift} \le \eta^2 K \sum_{k=0}^{K-1} \lambda_{max}^2 (2\sigma^2 + 2M^2) = 2 K^2 \eta^2 \lambda_{max}^2 (\sigma^2 + M^2).
\end{align}

\end{proof}

This result explicitly shows that the drift is proportional to $\lambda_{max}^2$. Suppose the Hessian approximation becomes singular, where very small eigenvalues $\to$ huge inverse eigenvalues $\lambda_{max}$, the drift explodes quadratically, which mathematically justifies the gradient normalization and robust resilience in FedRCO.


\subsection{Server-Side Bound: One-Round Convergence Guarantee}
Now we analyze how the global loss decreases after one round of aggregation.

\begin{theorem}
    \textbf{(One-Round Descent)} Let the global aggregation be $\boldsymbol\theta^{t+1} = \boldsymbol\theta^t + \Delta \boldsymbol\theta$, where $\Delta \boldsymbol\theta = \frac{1}{C} \sum_c (\boldsymbol \theta_{c,K}^t - \boldsymbol \theta^t)$. Under L-smoothness, the global objective improves as:

\begin{align}
    \mathbb{E}[\mathcal{L}(\boldsymbol\theta^{t+1})] \le \mathcal{L}(\boldsymbol\theta^{t}) - \underbrace{2\eta K \mu \lambda_{min} }_{\text{Effective Decay}} (\mathcal{L}(\boldsymbol\theta^{t}) - \mathcal{L}^*) + \underbrace{\frac{L}{2} e_{drift}}_{\text{Drift Error}},
\end{align}

where  $\mathcal{L}^*$ is the global optimal value.
\label{theo.d3}
\end{theorem}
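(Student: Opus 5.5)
The plan is a perturbed descent-lemma argument. We compare the actual aggregated step with an ideal preconditioned step taken at the global model $\boldsymbol\theta^t$, and treat the difference as drift. First apply Assumption \ref{assu1} to the global objective $\mathcal{L}=\sum_c \frac{n_c}{n}\mathcal{L}_c$, which is $L$-smooth as a convex combination of $L$-smooth functions:
\begin{align}
\mathcal{L}(\boldsymbol\theta^{t+1}) \le \mathcal{L}(\boldsymbol\theta^t) + \nabla\mathcal{L}(\boldsymbol\theta^t)^\top \Delta\boldsymbol\theta + \frac{L}{2}\|\Delta\boldsymbol\theta\|^2 .
\end{align}
For the quadratic term, use Jensen's inequality over clients, $\|\Delta\boldsymbol\theta\|^2 \le \frac{1}{C}\sum_c \|\boldsymbol\theta^t_{c,K}-\boldsymbol\theta^t\|^2$. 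After taking expectations, Theorem \ref{theo.d1} bounds each summand by $e_{drift}$. This yields exactly the $\frac{L}{2}e_{drift}$ term.

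For the linear term, expand $\Delta\boldsymbol\theta = -\frac{\eta}{C}\sum_c\sum_{k=0}^{K-1}(\boldsymbol F^t_{c,k})^{-1} g^t_{c,k}$ and split each gradient as in the proof of Theorem \ref{theo5.2}: gradient at $\boldsymbol\theta^t$, plus drift error, plus zero-mean noise. The noise term vanishes in conditional expectation by Assumption \ref{assu3}, provided the preconditioner is measurable before the fresh sample. This is justified by the lazy inverse update, which fixes $\boldsymbol F^{-1}$ from past statistics. The principal part is $-\eta\sum_k \nabla\mathcal{L}(\boldsymbol\theta^t)^\top (\boldsymbol F^t_{c,k})^{-1}\nabla\mathcal{L}_c(\boldsymbol\theta^t)$. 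Using Assumption \ref{assu5}, I would lower bound it by $\eta K\lambda_{min}\|\nabla\mathcal{L}(\boldsymbol\theta^t)\|^2$. Strong convexity (Assumption \ref{assu2}) then gives the Polyak--Łojasiewicz inequality $\|\nabla\mathcal{L}(\boldsymbol\theta^t)\|^2 \ge 2\mu(\mathcal{L}(\boldsymbol\theta^t)-\mathcal{L}^*)$. Together these produce the effective decay $2\eta K\mu\lambda_{min}(\mathcal{L}(\boldsymbol\theta^t)-\mathcal{L}^*)$.

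The main obstacle is the cross terms, which come from two sources. The first is that the preconditioners are client-specific. As a result, $\sum_c (\boldsymbol F_{c,k}^t)^{-1}\nabla\mathcal{L}_c$ is not a preconditioned global gradient, and $\nabla\mathcal{L}^\top \boldsymbol P_c\nabla\mathcal{L}_c$ need not be positive under heterogeneity. The second is the drift-error inner product $\nabla\mathcal{L}(\boldsymbol\theta^t)^\top(\boldsymbol F^t_{c,k})^{-1}(\nabla\mathcal{L}_c(\boldsymbol\theta^t_{c,k})-\nabla\mathcal{L}_c(\boldsymbol\theta^t))$.

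I would first try to absorb the drift term by Young's inequality. Smoothness bounds the gradient difference by $L\|\boldsymbol\theta^t_{c,k}-\boldsymbol\theta^t\|$, and Young's inequality splits the product into $\frac{\eta K\lambda_{min}}{2}\|\nabla\mathcal{L}\|^2$ plus a multiple of the drift. The leftover drift contribution would then be merged into the drift error term, possibly with an adjusted constant. For the heterogeneity issue, the fallback is to state an additional approximation: the preconditioners are approximately shared within a round, $(\boldsymbol F^t_{c,k})^{-1}\approx \boldsymbol P^t_k$. This matches the approximation $\mathcal{U}^t\approx K\boldsymbol F^{-1}\nabla\mathcal{L}$ already used in Theorem \ref{theo5.2}, and under it the principal part is bounded below exactly as above.

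The exact factor $2$ in front of $\eta K\mu\lambda_{min}$ only survives if the Young's-inequality split does not consume the descent term. Matching the constants as stated may therefore require the same loose accounting as in Theorem \ref{theo5.2}. If constants are kept honest, the rigorous outcome would be a decay factor of $\eta K\mu\lambda_{min}$ together with an $O(L^2)$-inflated drift term.
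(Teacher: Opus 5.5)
\textbf{The route matches the paper's, but the paper bridges both of your cross terms with one unjustified line.} Your outline is the paper's proof: the descent lemma on the aggregated step, then $T_2=\frac L2\mathbb{E}\|\bar\Delta\|^2\le\frac L2 e_{drift}$ by Jensen over clients, then $T_1\le-\eta K\lambda_{min}\|\nabla\mathcal{L}(\boldsymbol\theta^t)\|^2$ followed by Polyak--{\L}ojasiewicz. The line in question is $T_1\approx-\eta K\nabla\mathcal{L}(\boldsymbol\theta^t)^\top\bigl(\frac1C\sum_c\boldsymbol F_c^{-1}\nabla\mathcal{L}_c(\boldsymbol\theta^t)\bigr)$. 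It moves every local gradient back to $\boldsymbol\theta^t$. It then applies the $\lambda_{min}$ bound as if the client-specific preconditioners acted on $\nabla\mathcal{L}$ itself, and as if $\frac1C\sum_c\nabla\mathcal{L}_c=\nabla\mathcal{L}$ despite the $n_c/n$ weights. So falling back on ``the same loose accounting'' does not close your gap; it is the gap. Your measurability remark for the noise term covers something the paper skips entirely; keep it as an explicit assumption.

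\textbf{The stated constant cannot be obtained by any argument.} Read $e_{drift}$ as the actual drift of Theorem~\ref{theo.d1}, as the paper's proof does. Then the inequality fails for every $K\ge2$, even with a shared preconditioner and no noise. Take $C=1$, $d=1$, $\mathcal{L}(\theta)=\frac a2\theta^2$ (so $\mu=L=a$, $\mathcal{L}^*=0$), exact gradients, $\boldsymbol F^{-1}\equiv\lambda$ (so $\lambda_{min}=\lambda_{max}=\lambda$), $K=2$ and $r=\eta\lambda a\in(0,1]$. Then $\theta^{t+1}=(1-r)^2\theta^t$ and $e_{drift}=(2r-r^2)^2(\theta^t)^2$. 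After dividing by $\frac a2(\theta^t)^2$, the claim reads $(1-r)^4\le 1-4r+(2r-r^2)^2$, i.e.\ $2r^2\le 0$, which is false. The deficit $2r^2$ is exactly your drift cross term. It is positive here and is not covered by $\frac L2 e_{drift}$.

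For $K=1$, with a shared preconditioner and equal weights, the drift cross term vanishes and your argument does prove the statement verbatim. With client-specific preconditioners, even the principal term $\frac1C\sum_c\nabla\mathcal{L}^\top\boldsymbol F_c^{-1}\nabla\mathcal{L}_c$ can be negative once $\lambda_{max}/\lambda_{min}$ is large. So sharing must be a hypothesis, not an approximation.

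\textbf{What to prove instead.} Drop the attempt to match the constants and prove what your Young's-inequality step actually yields:
\begin{align*}
\mathbb{E}[\mathcal{L}(\boldsymbol\theta^{t+1})]\le{}&\mathcal{L}(\boldsymbol\theta^{t})-\eta K\mu\lambda_{min}\bigl(\mathcal{L}(\boldsymbol\theta^{t})-\mathcal{L}^*\bigr)\\
&+\frac{\eta L^2\lambda_{max}^2}{2\lambda_{min}}\,\frac1C\sum_{c}\sum_{k=1}^{K-1}\mathbb{E}\|\boldsymbol\theta^t_{c,k}-\boldsymbol\theta^t\|^2+\frac L2\,e_{drift}.
\end{align*}
The extra term involves the intermediate drifts for $k<K$, not only $e_{drift}$ at $k=K$. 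Bound them by rerunning the argument of Theorem~\ref{theo.d1} with $k$ in place of $K$.
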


\begin{proof}
    By $L$-smoothness of the global objective $\mathcal{L}$:

\begin{align}
    \mathcal{L}(\boldsymbol\theta^{t+1}) \le \mathcal{L}(\boldsymbol\theta^{t}) + \nabla \mathcal{L}(\boldsymbol\theta^{t})^\top (\boldsymbol\theta_{t+1} - \boldsymbol\theta_t) + \frac{L}{2} \|\boldsymbol\theta_{t+1} - \boldsymbol\theta_t\|^2.
\end{align}

Let the aggregated update be $\bar{\Delta} = \frac{1}{C} \sum_c \sum_{k=0}^{K-1} -\eta (\boldsymbol F_{c,k}^t)^{-1} g_{c,k}^t$.

\begin{align}
   \mathcal{L}(\boldsymbol\theta^{t+1}) \le\mathcal{L}(\boldsymbol\theta^{t}) \underbrace{- \eta \nabla\mathcal{L}(\boldsymbol\theta^{t})^\top \mathbb{E}[\bar{\Delta}]}_{T_1} + \underbrace{\frac{L}{2} \mathbb{E}[\|\bar{\Delta}\|^2]}_{T_2}.
\end{align}

First, we analyze the bounding of the Descent Term $T_1$. Ideally, we want the update to align with the gradient.

\begin{align}
    T_1 \approx - \eta K \nabla \mathcal{L}(\boldsymbol\theta^{t})^\top \left( \frac{1}{C} \sum_c \boldsymbol F_c^{-1} \nabla \mathcal{L}_c(\boldsymbol\theta^{t}) \right).
\end{align}

Using the preconditioner property $v^\top G v \ge \lambda_{min} \|v\|^2$ and strong convexity:

\begin{align}
    T_1 \le - \eta K \lambda_{min} \|\nabla \mathcal{L}(\boldsymbol\theta^{t})\|^2.
\end{align}

Using the Polyak-Lojasiewicz condition: $\|\nabla \mathcal{L}(\boldsymbol\theta)\|^2 \ge 2\mu (\mathcal{L}(\boldsymbol\theta) - \mathcal{L}^*)$:

\begin{align}
    T_1 \le - 2 \eta K \mu \lambda_{min} ( \mathcal{L}(\boldsymbol\theta^{t}) - \mathcal{L}^*).
\end{align}

As $\lambda_{min}$ for K-FAC is much larger than standard SGD, it allows for a steeper descent.

Second, we analyze the bounding of the term $T_2$. $T_2$ is essentially the average drift error. Using convexity of norms:

\begin{align}
    \|\bar{\Delta}\|^2 = \left\| \frac{1}{C} \sum_{c} (\boldsymbol\theta_{c,K}^t - \boldsymbol\theta^t) \right\|^2 &\le \frac{1}{C} \sum_c \|\boldsymbol\theta_{c,K}^t - \boldsymbol\theta^t\|^2 = e_{drift}\notag\\
    \frac{L}{2} \mathbb{E}[\|\bar{\Delta}\|^2] &\le \frac{L}{2} e_{drift}.
\end{align}

Combining $T_1$ and $T_2$

\begin{align}
    \mathbb{E}[\mathcal{L}(\boldsymbol\theta^{t+1})  - \mathcal{L}^*] \le (1 - 2\eta K \mu \lambda_{min}) (\mathcal{L}(\boldsymbol\theta^{t})  - \mathcal{L}^*) + \frac{L}{2} e_{drift}.
\end{align}

Substituting the drift bound from Theorem \ref{theo.d1}:

\begin{align}
    \mathbb{E}[\mathcal{L}(\boldsymbol\theta^{t+1})  - \mathcal{L}^*] \le (1 - \rho)  (\mathcal{L}(\boldsymbol\theta^{t})  - \mathcal{L}^*) + L K^2 \eta^2 \lambda_{max}^2 (\sigma^2 + M^2),
\end{align}

where $\rho = 2\eta K \mu \lambda_{min}$.
\end{proof}

\subsection{Final Global Convergence Bound}

\textbf{Theorem \ref{theo5.4}} \emph{Recursively applying Theorem \ref{theo.d3} leads to the final convergence rate. After $T'$ communication rounds, the convergence of our method satisfies:}

\begin{align}
   \mathbb{E}[\mathcal{L}(\boldsymbol\theta^{T'})  - \mathcal{L}^*] \le \underbrace{(1 - \rho)^{T'} (\mathcal{L}(\boldsymbol\theta^{0})  - \mathcal{L}^*)}_{\text{Linear Decay}} + \underbrace{\frac{L K \eta \lambda_{max}^2 (\sigma^2 + M^2)}{2 \mu \lambda_{min}}}_{\text{Asymptotic Error Floor}}.
\end{align}

For the linear decay speed $\rho$, the rate is governed by $\rho \propto \lambda_{min}$. In ill-conditioned problems, FedRCO ensures $\lambda_{min}$ is bounded away from zero, while in SGD it can be arbitrarily small. This proves faster convergence.

For the error floor, the final error depends on the ratio $\frac{\lambda_{max}^2}{\lambda_{min}}$. If the preconditioner is unstable ($\lambda_{max} \to \infty$), the error floor explodes. Our method proposes the Gradient Monitor and Robust Resilience, which effectively clips $\lambda_{max}$, keeping the ratio $\frac{\lambda_{max}^2}{\lambda_{min}}$ small and controlled. This theoretically proves why our method achieves lower final loss.

\section{Experimental Details}
Here, we introduce all other details used in this paper. We used four NVIDIA 3090 GPUs with 96GB of memory, Intel(R) Xeon(R) Gold 6226R CPU at 2.90GHz with 16 cores, and 128GB of RAM. The software environment includes Python 3.9.2, Pytorch 2.7.1+cu126, and CUDA 12.2.

\subsection{Model Structure}

To simulate realistic resource-constrained federated learning environments, we employ a lightweight Convolutional Neural Network (CNN) architecture for both datasets. This streamlined design ensures that the model can be deployed on edge devices with limited computational power and memory while maintaining sufficient representative capacity for the classification tasks.

\textbf{CIFAR-10 Model:} For the 3-channel color images, the architecture begins with a convolutional layer featuring 16 filters and a $3\times3$ kernel, followed by a $2\times2$ max-pooling layer to reduce spatial dimensions. A second convolutional layer with 32 filters and a $3\times3$ kernel is then applied. The resulting feature maps are flattened and passed through two successive fully connected layers with 32 and 256 neurons, respectively. Finally, a softmax output layer is used to produce the probability distribution over the 10 categories.

\textbf{EMNIST Model:}: To maintain consistency in computational complexity across different data modalities, we adopt a mirrored structure for the EMNIST dataset. The input grayscale images are processed through the same configuration of two convolutional layers (16 and 32 filters) and max-pooling, followed by the two-tier linear layers (32 and 256 neurons). The output layer is adjusted to 62 units to accommodate the character and digit classes in the EMNIST dataset.

\textbf{Implementation Details:}
 Throughout the network, we use the Rectified Linear Unit (ReLU) as the activation function for all hidden layers to mitigate the vanishing gradient problem and accelerate convergence. No batch normalization is used to avoid the synchronization overhead and potential instability caused by non-IID data in federated settings. This minimalist design allows us to focus on evaluating the effectiveness of our proposed FedRCO in optimizing curvature information under strict resource constraints.

\begin{figure}[!t]
  \centering
  \includegraphics[width=7in]{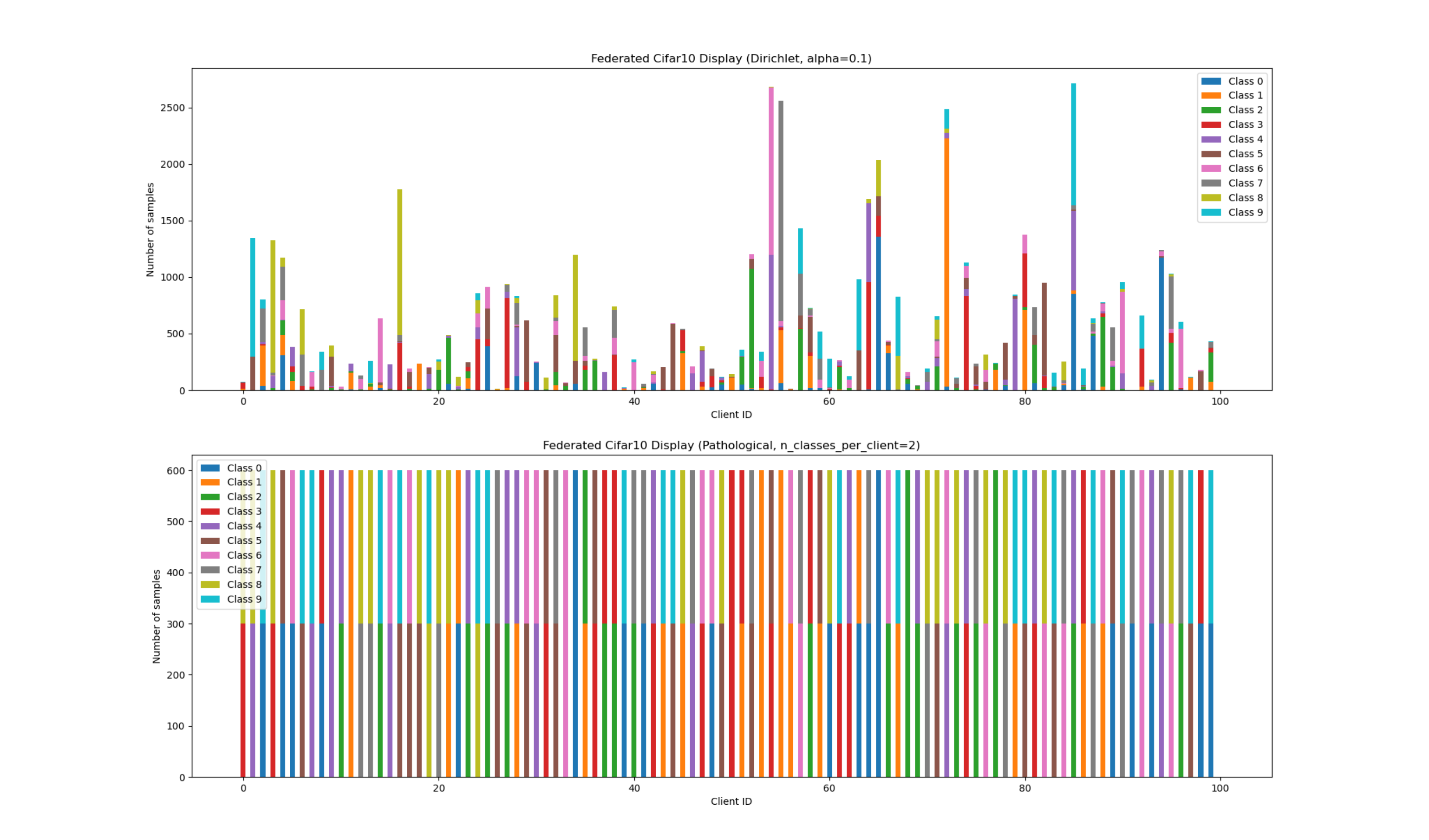}
  \caption{The distribution example. The first figure is the Dirichlet distribution on 100 clients with $Dir(\alpha)=0.1$, and the second figure is the Pathological distribution on 100 clients with 2 classes per client.}
  \label{distri}
\end{figure}

\subsection{Data Distribution}

To comprehensively evaluate the robustness of FedRCO against statistical heterogeneity, we employ two widely used non-IID data partitioning strategies: the Dirichlet-based distribution and the Pathological distribution. The distribution is shown in Fig. \ref{distri}.

\textbf{ Dirichlet-based Non-IID Distribution} The Dirichlet-based partitioning simulates a realistic scenario where the label distributions across clients are unbalanced. Specifically, for each class $k$, we sample a distribution vector $q_k \sim Dir(\alpha)$ and allocate a proportion of samples from class $k$ to client $i$ according to $q_{k,i}$. The concentration parameter $\alpha$ controls the degree of non-IID. We utilize $\alpha \in \{0.1, 0.5, 1.0\}$. A smaller $\alpha$ indicates a more extreme skewness, where most samples of a specific class are concentrated on only a few clients.

\textbf{ Pathological Non-IID Distribution}
The Pathological partitioning mimics a scenario where each client only has access to a limited subset of the total classes. This creates the situation where certain features or labels are entirely missing from most local datasets. Each client is randomly assigned a fixed number of unique labels. The samples belonging to these labels are then evenly distributed among the assigned clients. For CIFAR-10, we test with $2$ and $5$ classes per client. For EMNIST, due to the large number of 62 categories, we set each client to have $10$ or $30$ categories. The Pathological-2 setting on CIFAR-10 is particularly challenging, as local models tend to overfit to their own class sets, which may lead to drifting.

\subsection{Other ablation results}
Here we will show all the experimental results. For the Dirichlet setting, we sample client data from $Dir(\alpha)$ with \( \alpha \in \{0.1, 0.5, 1\} \). In the pathological setting, each client is restricted to a small subset of labels; specifically, clients are assigned \(\{2, 5\}\) labels in CIFAR-10 and \(\{10, 30\}\) labels in EMNIST. The number of clients varies in \(\{10, 50, 100\}\), and the percentage of client participation per communication round is set to \(\{0.1, 0.5, 0.8, 1\}\) for comparison. The communication round is set as 1600, and the local epoch is set as 20. The learning rate is set as 0.00625, batch size 32, EMA parameter $\alpha=0.95$, $\tilde{\boldsymbol{\nabla}}_{stable}$ is set as 10, and damping $\epsilon=0.03$ for all second-order methods. The parameters of comparison methods are all set to their optimal values.

The results displayed below from Fig. \ref{appd.1} to Fig. \ref{appd.21} are calculated on the average of all participants' clients. The upper left is the test accuracy, the upper right is the train accuracy, the lower left is the train loss, and the lower right is the test accuracy measured in real-time.

\begin{figure}[H]
  \centering
  \includegraphics[width=6in]{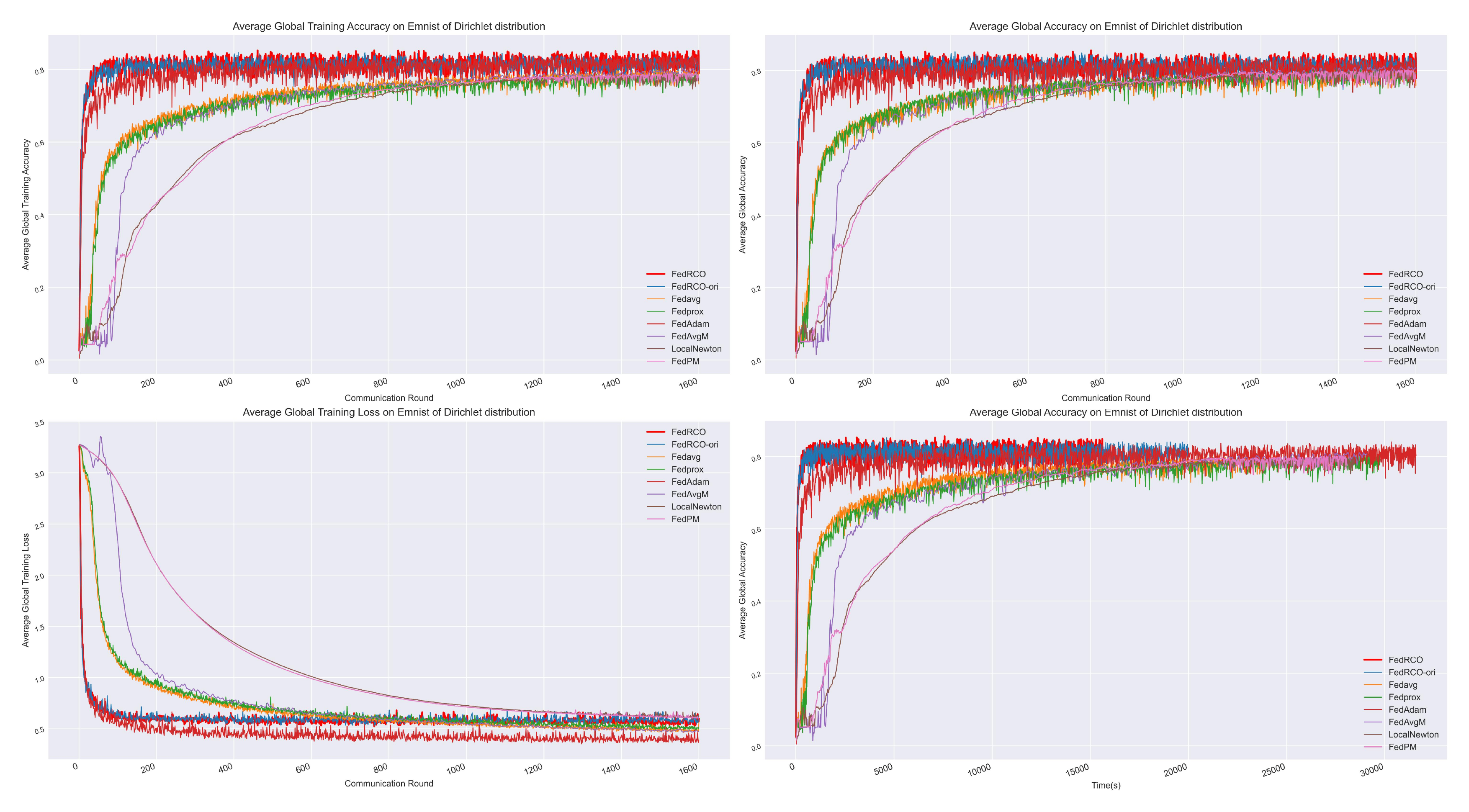}
  \caption{Dataset: EMNIST, Data distribution: Dirichlet 0.1, Party ratio: 0.1, Number clients: 100.}
  \label{appd.1}
\end{figure}

\begin{figure}[]
  \centering
  \includegraphics[width=6in]{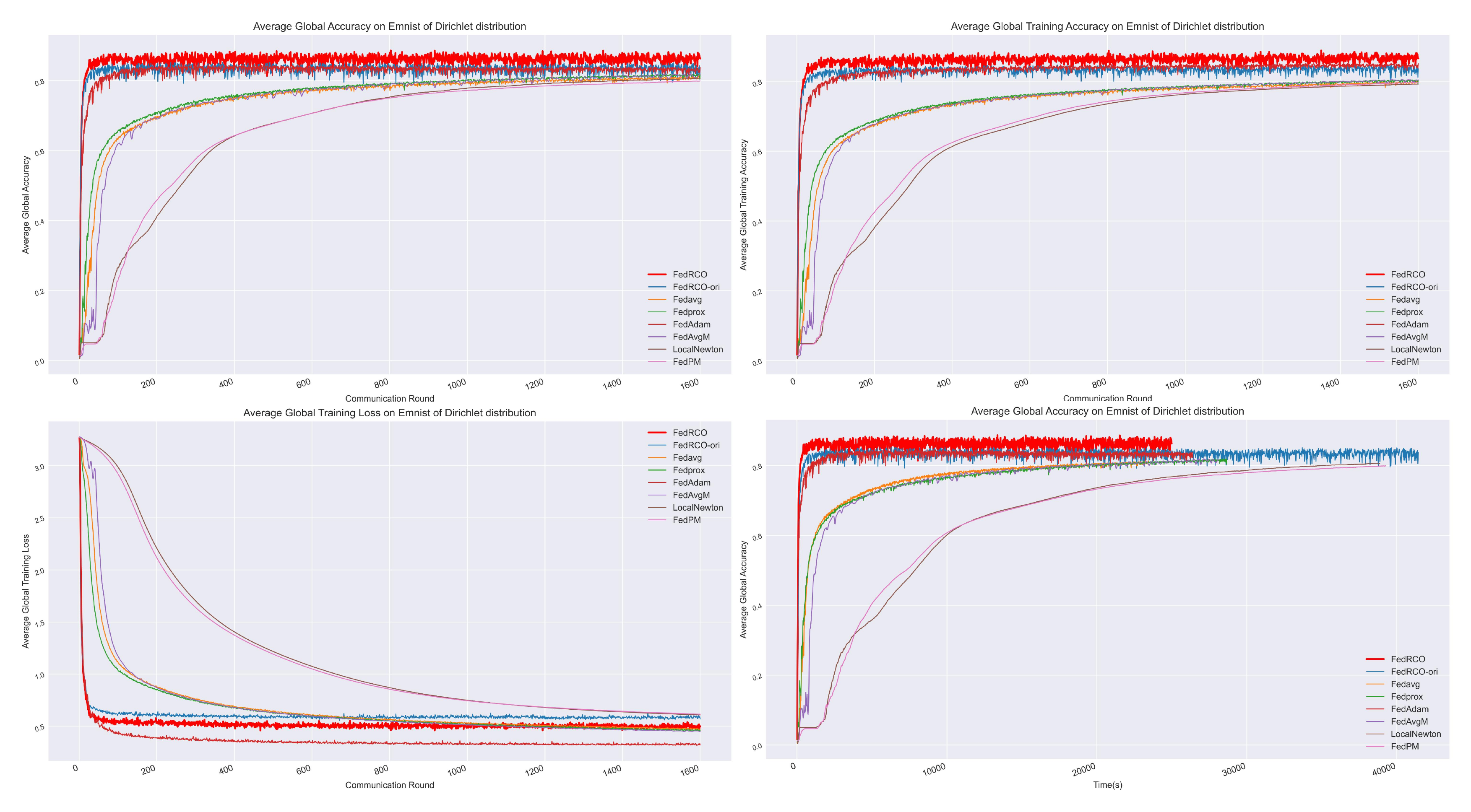}
  \caption{Dataset: EMNIST, Data distribution: Dirichlet 0.1, Party ratio: 0.5, Number clients: 100.}
  \label{appd.2}
\end{figure}

\begin{figure}[]
  \centering
  \includegraphics[width=6in]{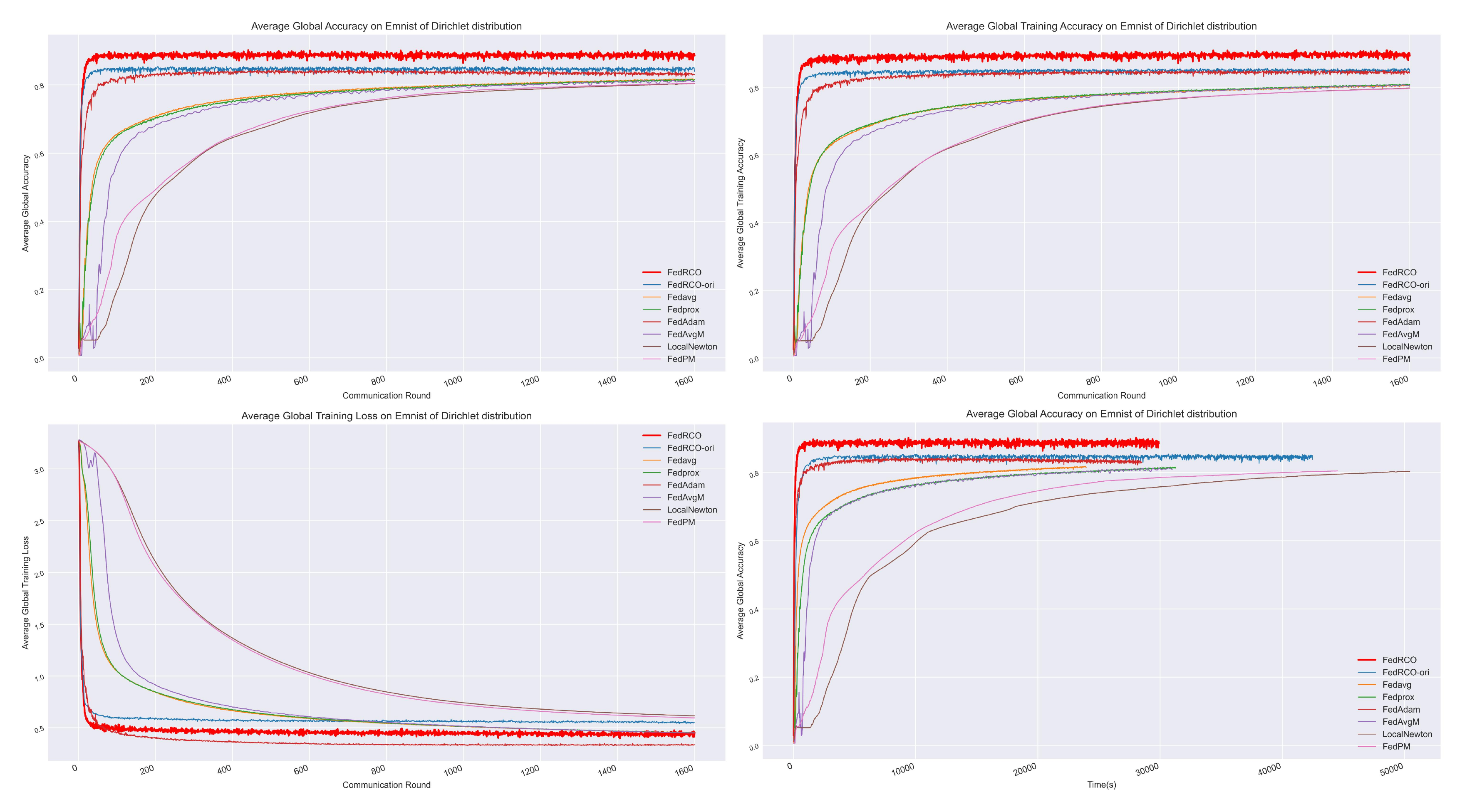}
  \caption{Dataset: EMNIST, Data distribution: Dirichlet 0.1, Party ratio: 0.8, Number clients: 100.}
  \label{appd.3}
\end{figure}

\begin{figure}[]
  \centering
  \includegraphics[width=6in]{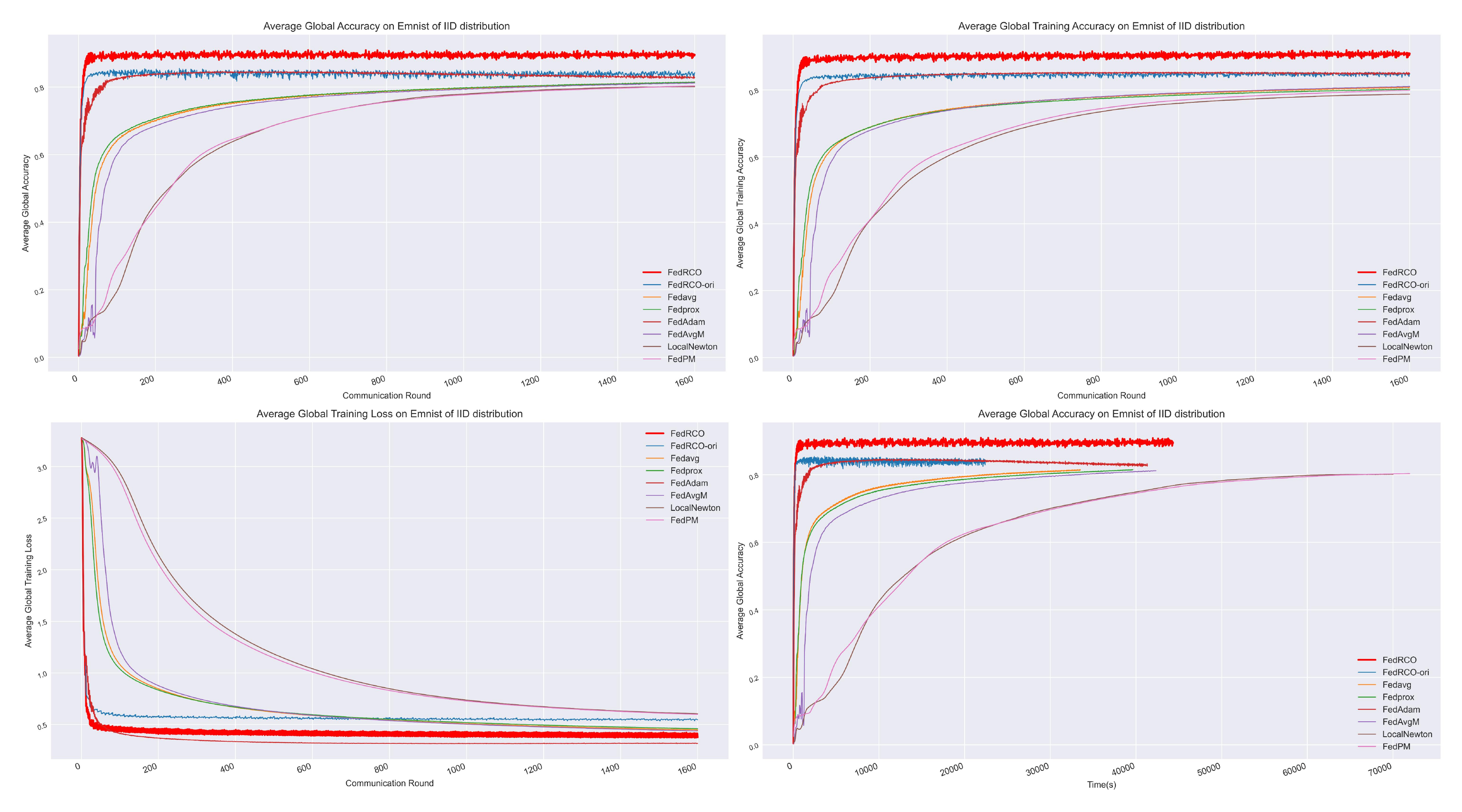}
  \caption{Dataset: EMNIST, Data distribution: Dirichlet 0.1, Party ratio: 1, Number clients: 100.}
  \label{appd.4}
\end{figure}

\begin{figure}[]
  \centering
  \includegraphics[width=6in]{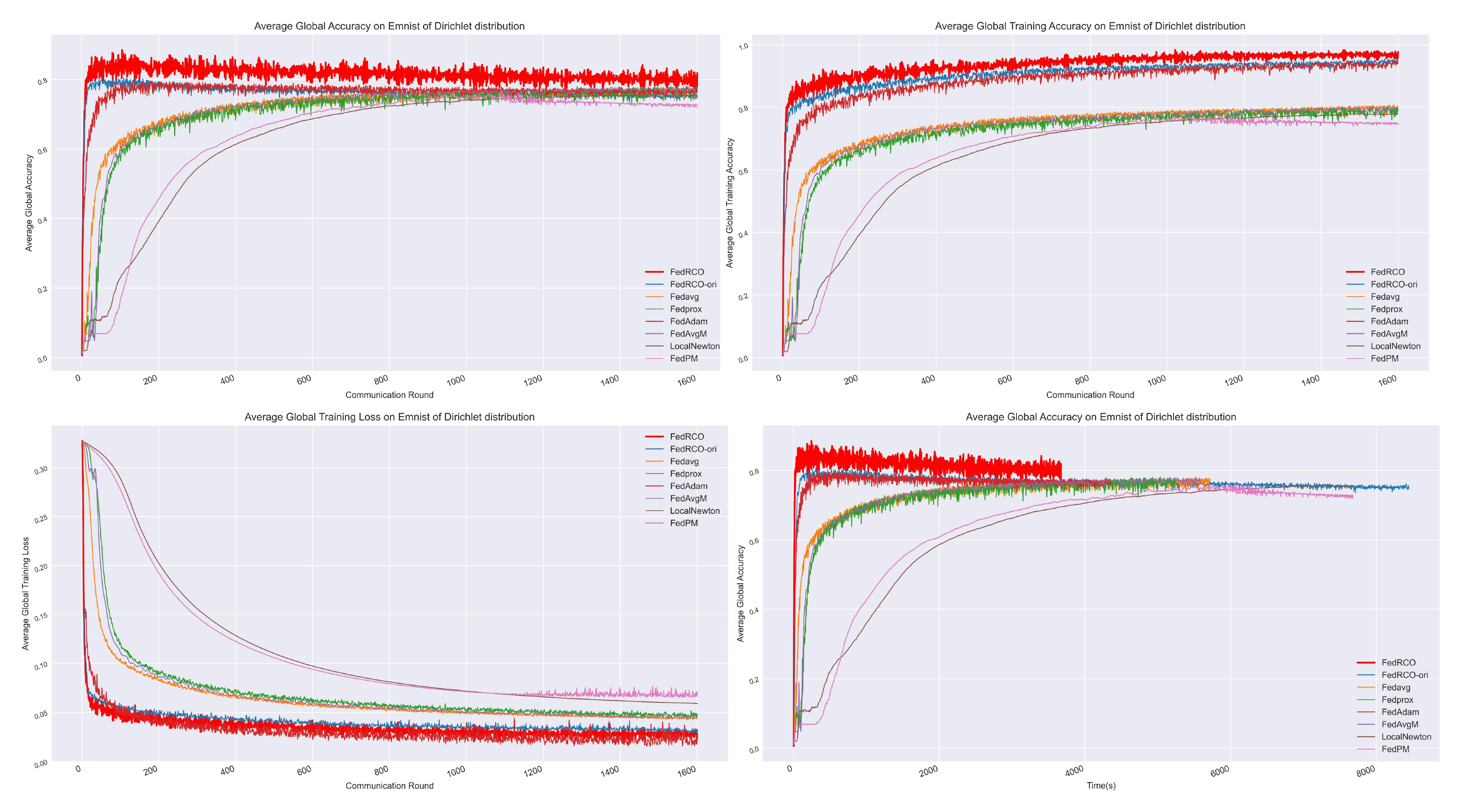}
  \caption{Dataset: EMNIST, Data distribution: Dirichlet 0.1, Party ratio: 0.8, Number clients: 10.}
  \label{appd.5}
\end{figure}

\begin{figure}[]
  \centering
  \includegraphics[width=6in]{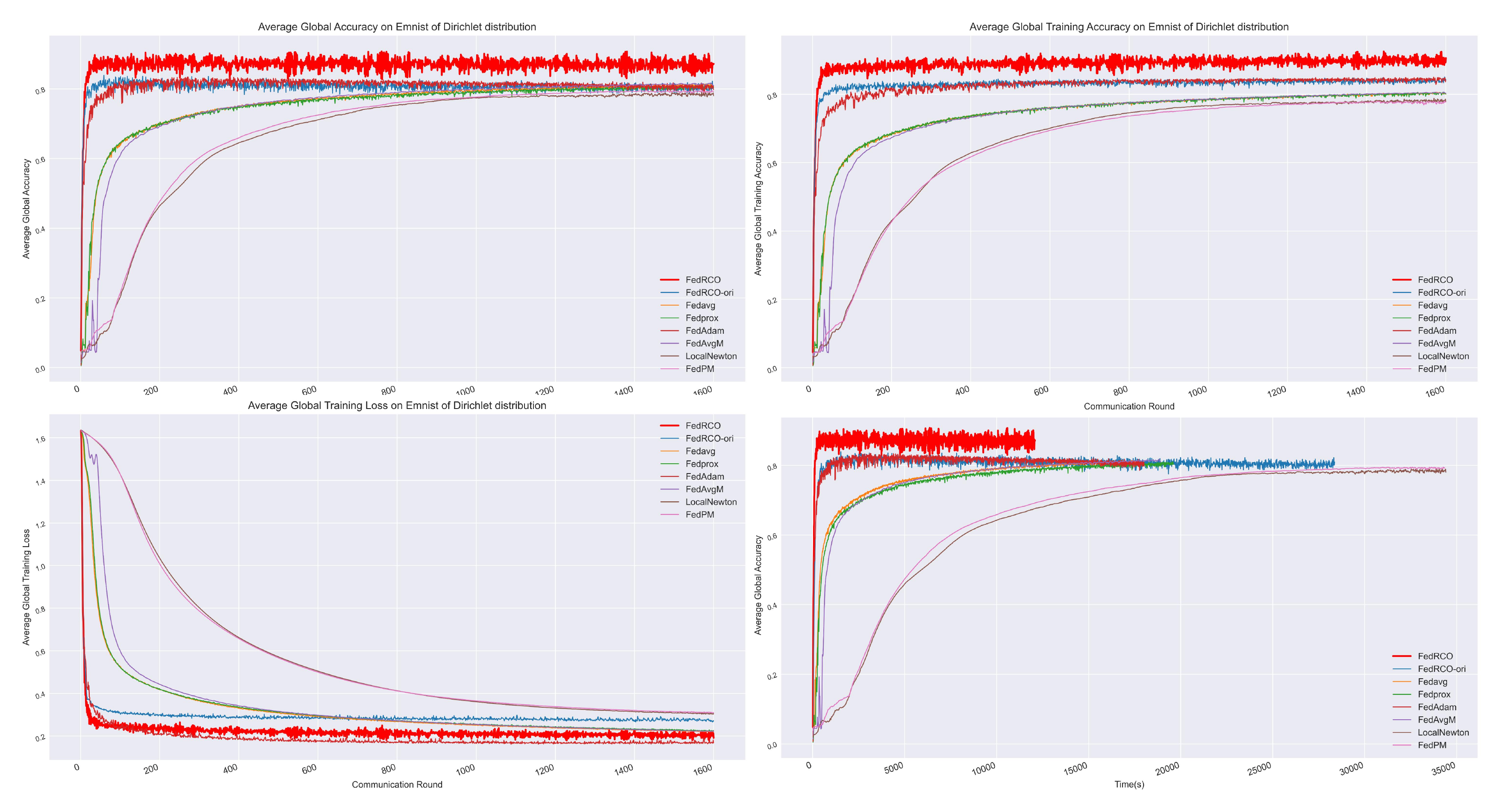}
  \caption{Dataset: EMNIST, Data distribution: Dirichlet 0.1, Party ratio: 0.8, Number clients: 50.}
  \label{appd.6}
\end{figure}

\begin{figure}[]
  \centering
  \includegraphics[width=6in]{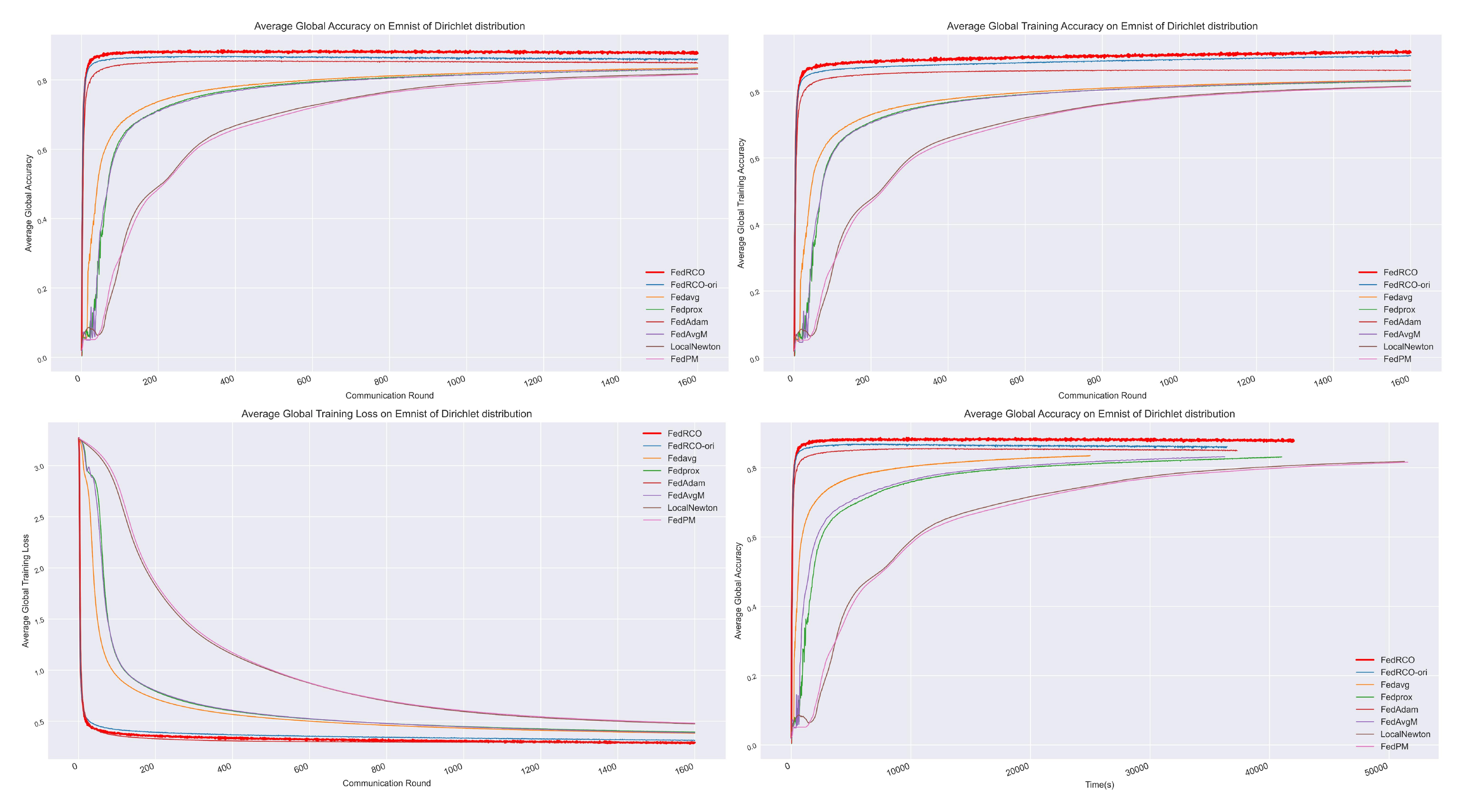}
  \caption{Dataset: EMNIST, Data distribution: Dirichlet 0.5, Party ratio: 0.8, Number clients: 100.}
  \label{appd.7}
\end{figure}

\begin{figure}[]
  \centering
  \includegraphics[width=6in]{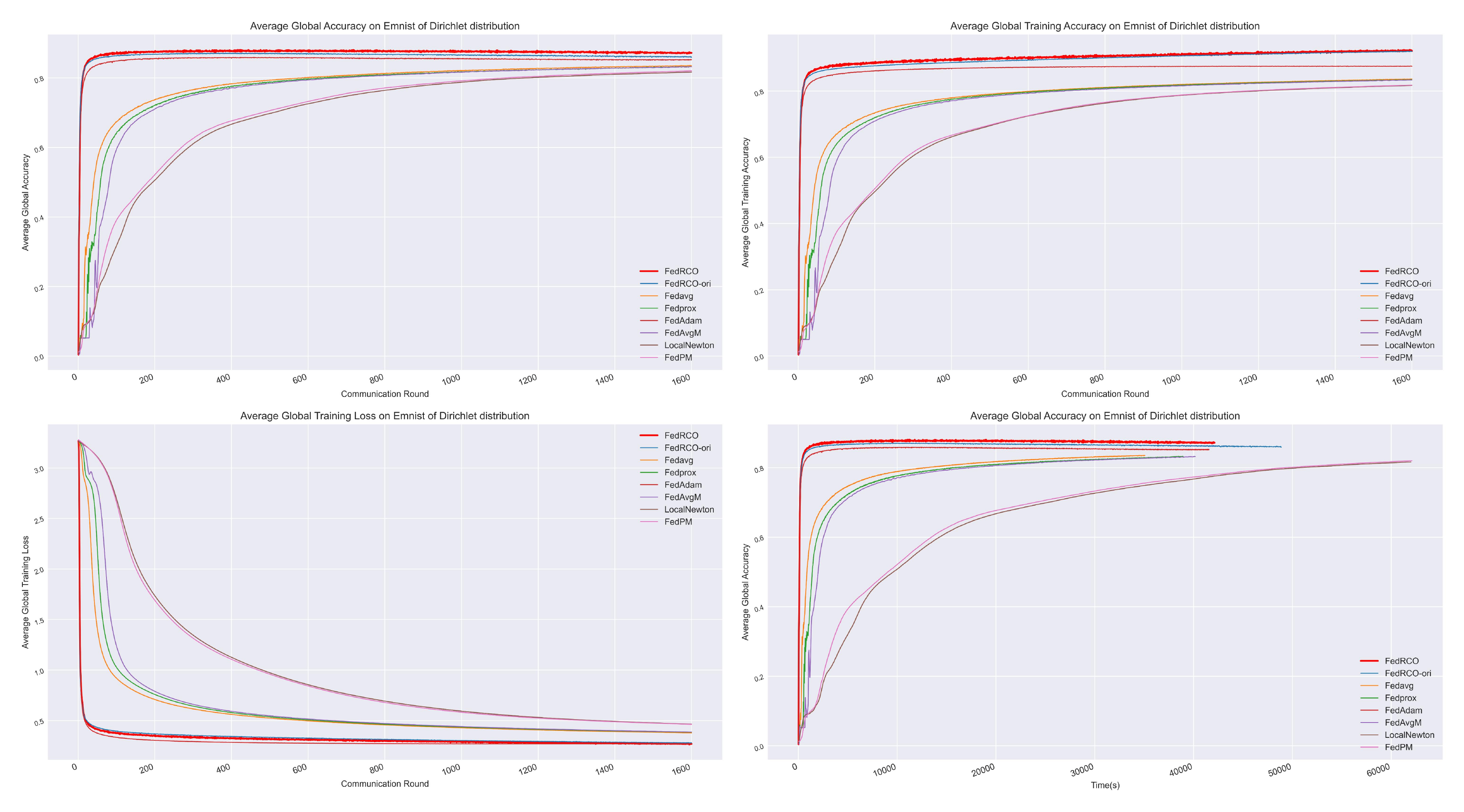}
  \caption{Dataset: EMNIST, Data distribution: Dirichlet 1, Party ratio: 0.8, Number clients: 100.}
  \label{appd.8}
\end{figure}

\begin{figure}[]
  \centering
  \includegraphics[width=6in]{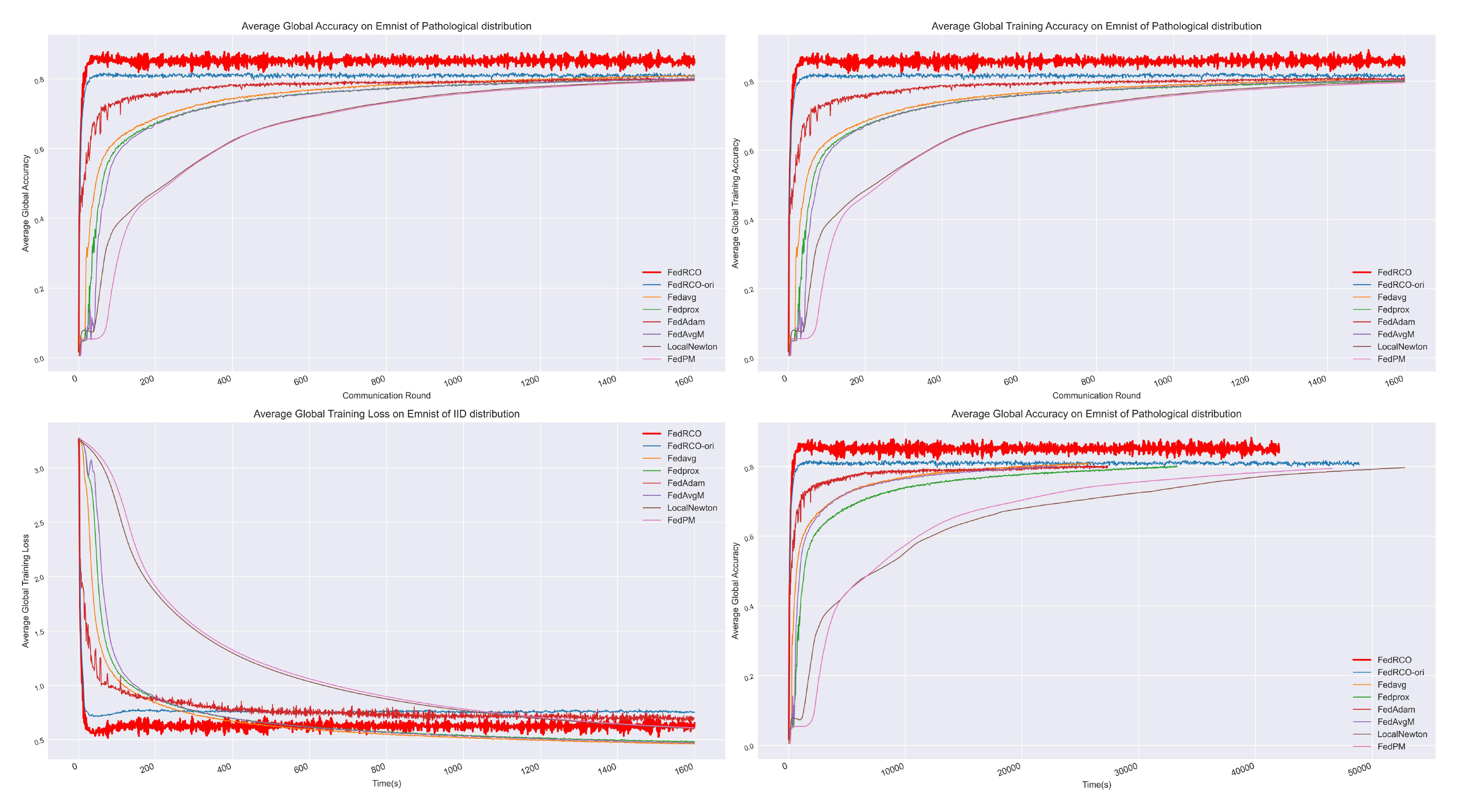}
  \caption{Dataset: EMNIST, Data distribution: Pathological 10, Party ratio: 0.8, Number clients: 100.}
  \label{appd.9}
\end{figure}

\begin{figure}[]
  \centering
  \includegraphics[width=6in]{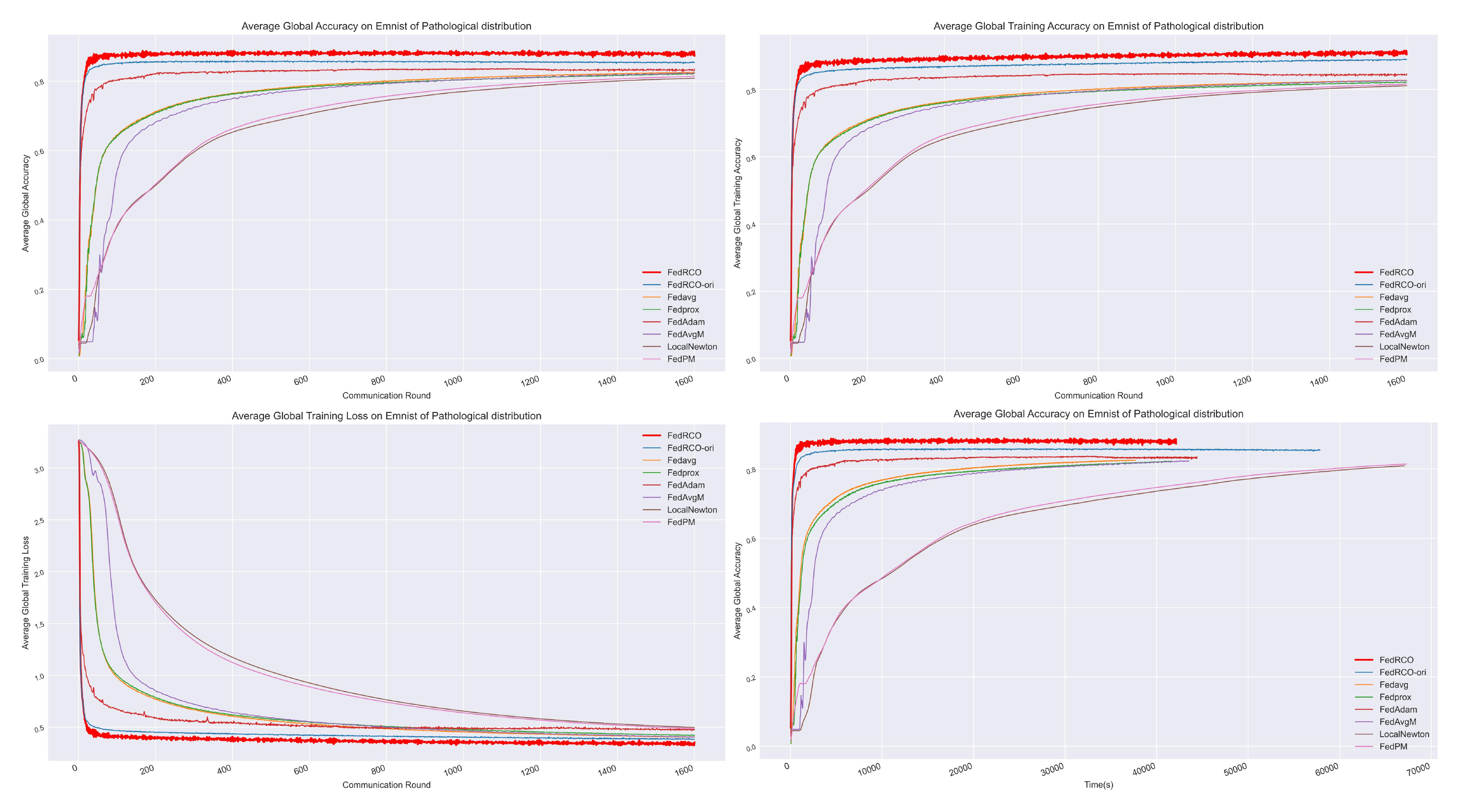}
  \caption{Dataset: EMNIST, Data distribution: Pathological 30, Party ratio: 0.8, Number clients: 100.}
  \label{appd.10}
\end{figure}

\begin{figure}[]
  \centering
  \includegraphics[width=6in]{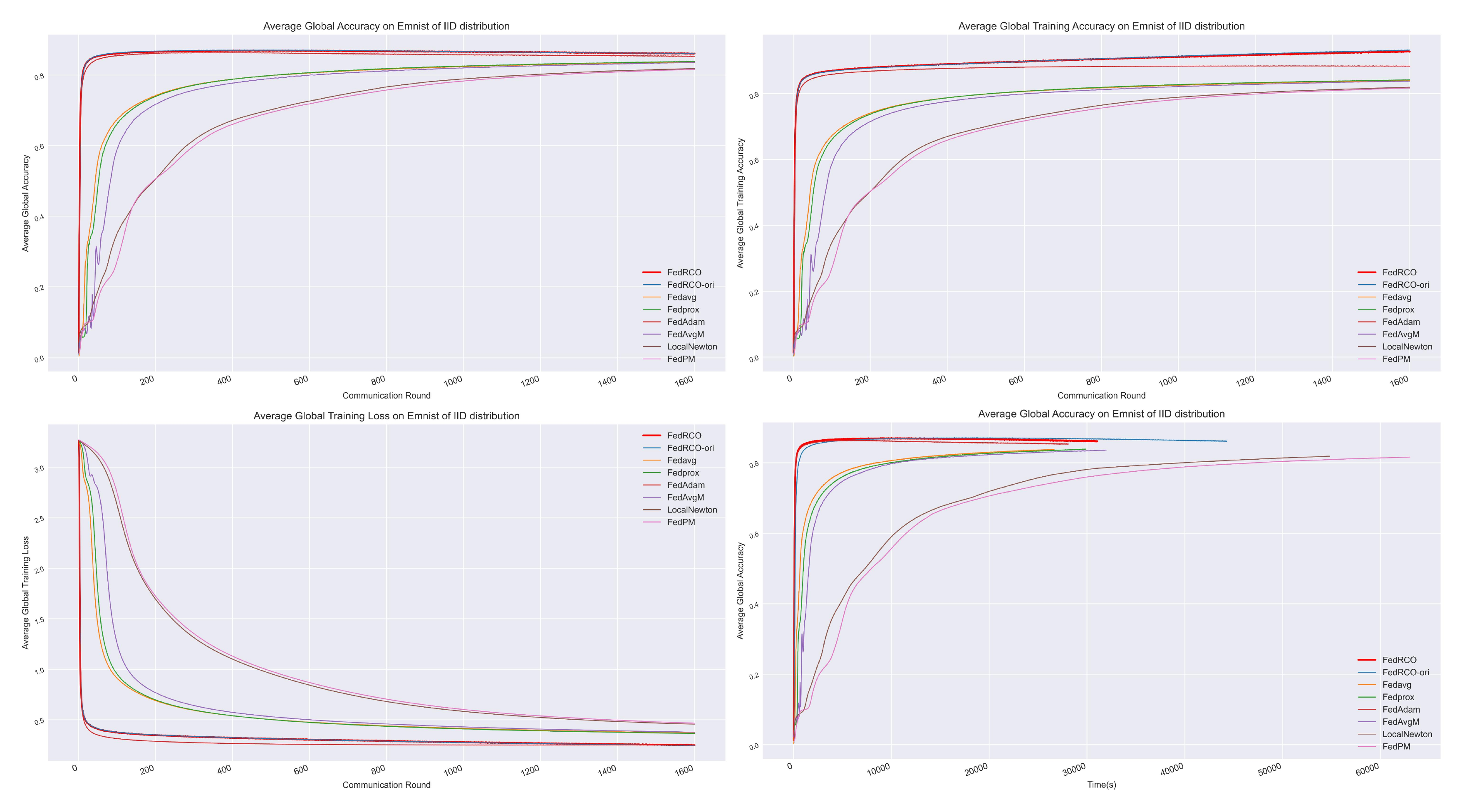}
  \caption{Dataset: EMNIST, Data distribution: iid, Party ratio: 0.8, Number clients: 100.}
  \label{appd.11}
\end{figure}

\begin{figure}[]
  \centering
  \includegraphics[width=6in]{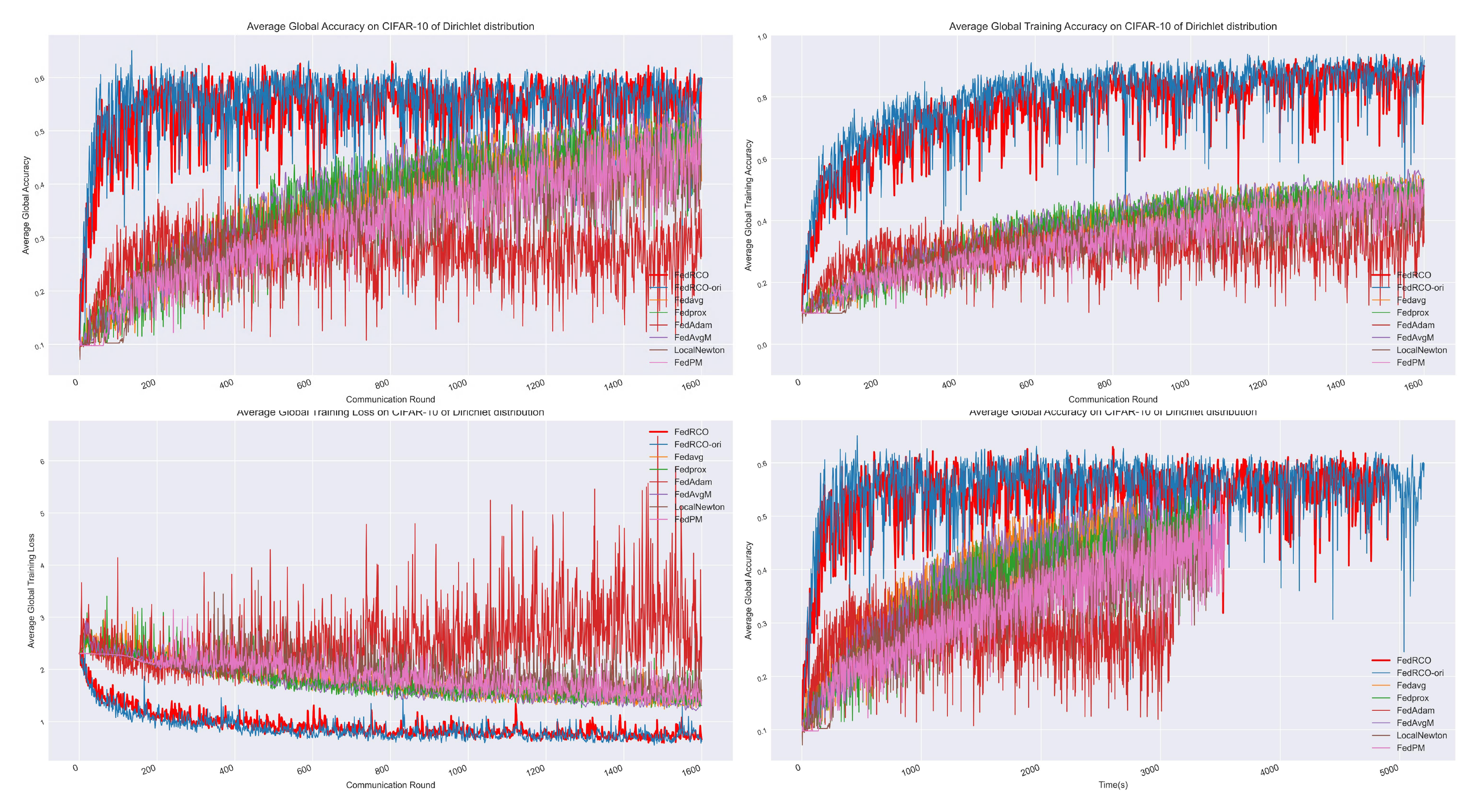}
  \caption{Dataset: CIFAR-10, Data distribution: Dirichlet 0.1, Party ratio: 0.1, Number clients: 100.}
  \label{appd.12}
\end{figure}

\begin{figure}[]
  \centering
  \includegraphics[width=6in]{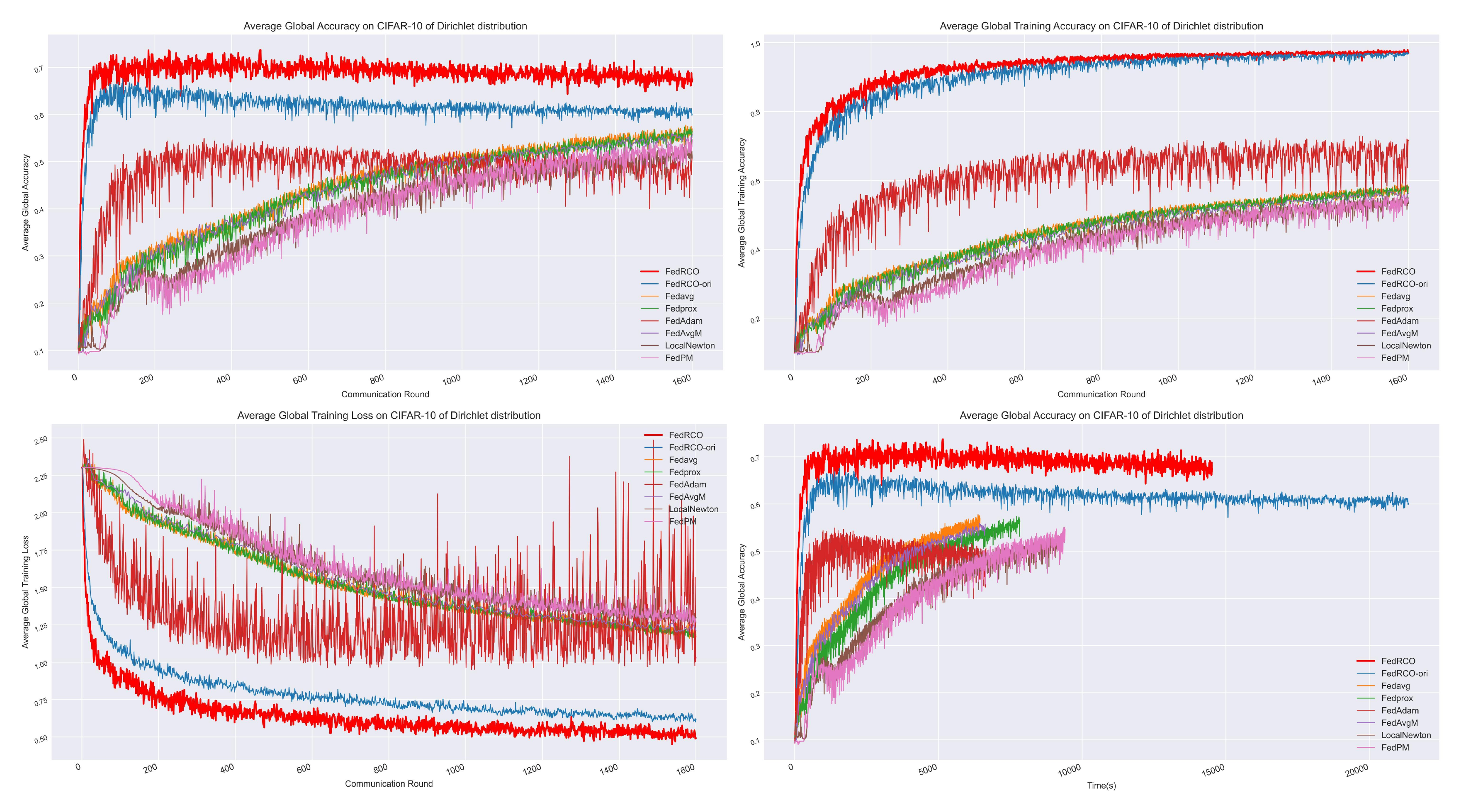}
  \caption{Dataset: CIFAR-10, Data distribution: Dirichlet 0.1, Party ratio: 0.5, Number clients: 100.}
  \label{appd.13}
\end{figure}

\begin{figure}[]
  \centering
  \includegraphics[width=6in]{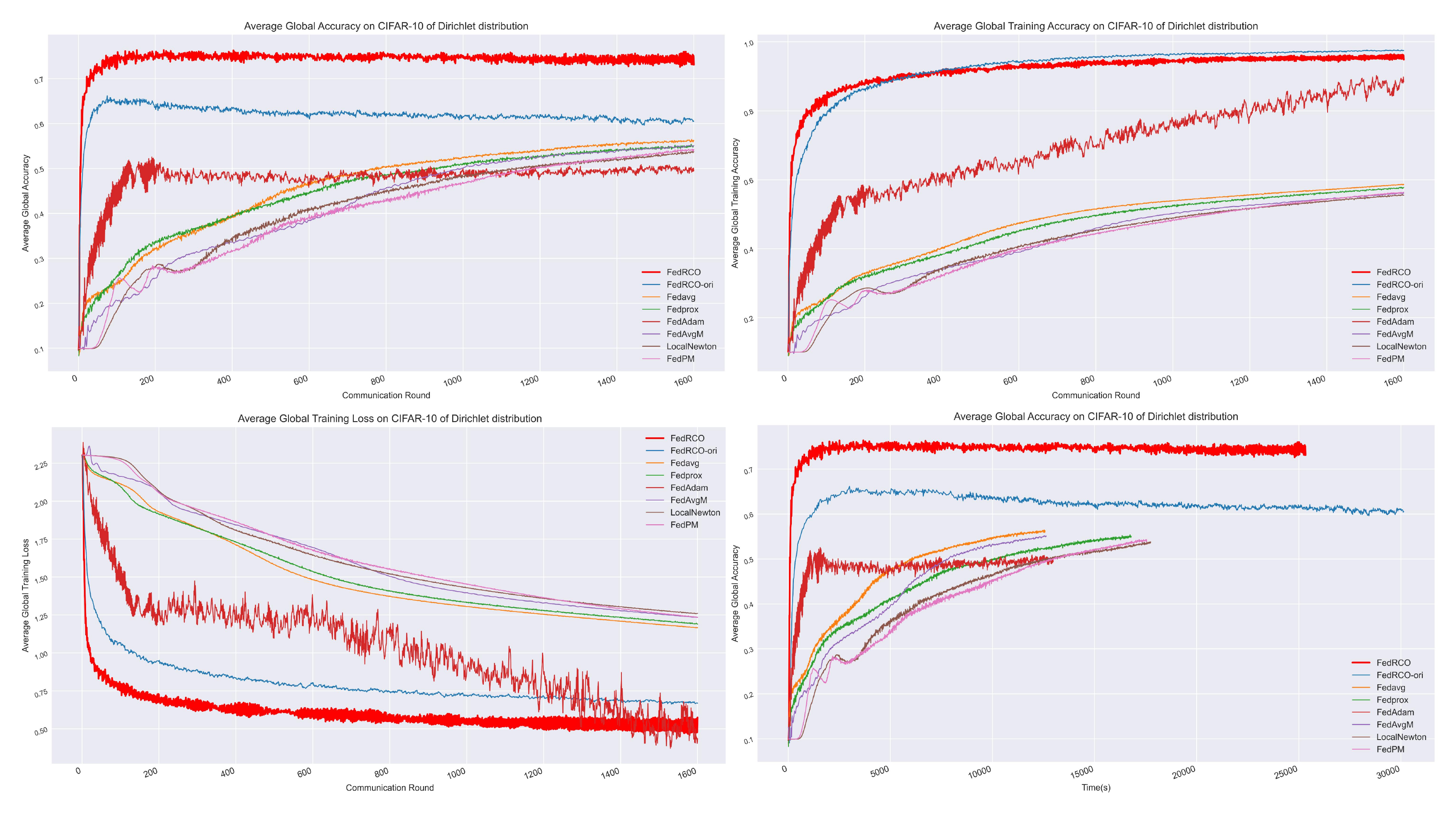}
  \caption{Dataset: CIFAR-10, Data distribution: Dirichlet 0.1, Party ratio: 1, Number clients: 100.}
  \label{appd.14}
\end{figure}

\begin{figure}[]
  \centering
  \includegraphics[width=6in]{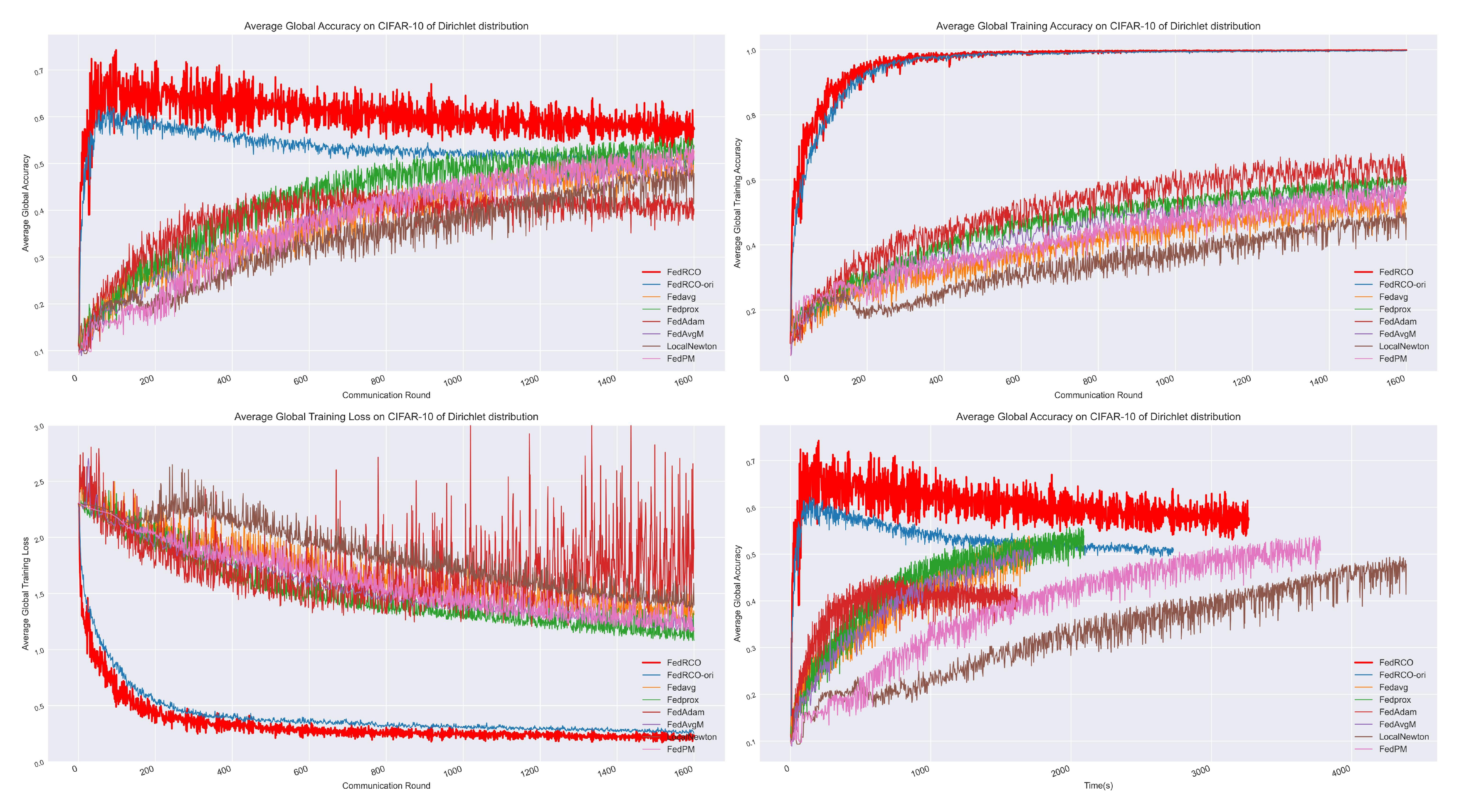}
  \caption{Dataset: CIFAR-10, Data distribution: Dirichlet 0.1, Party ratio: 0.8, Number clients: 10.}
  \label{appd.15}
\end{figure}

\begin{figure}[]
  \centering
  \includegraphics[width=6in]{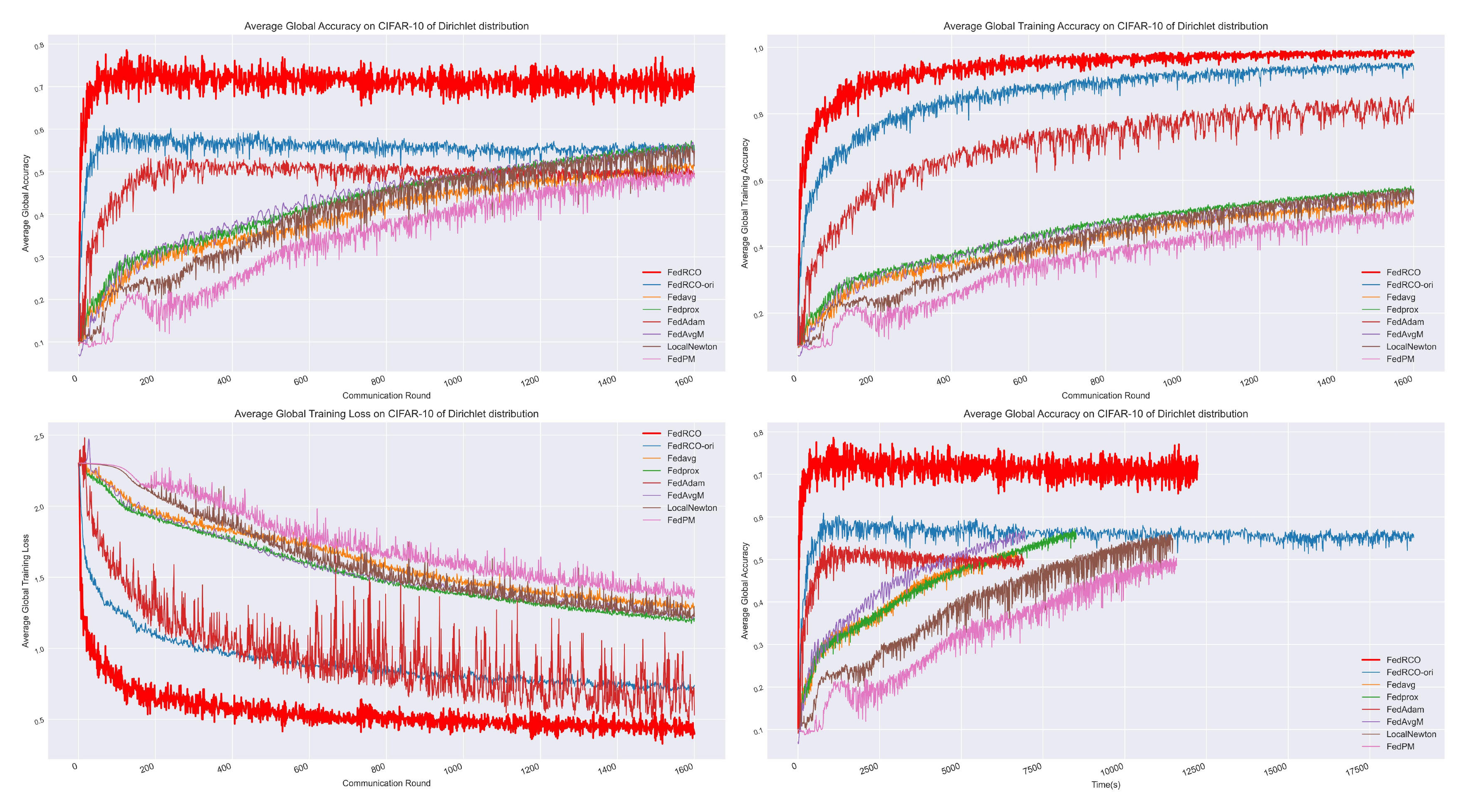}
  \caption{Dataset: CIFAR-10, Data distribution: Dirichlet 0.1, Party ratio: 0.8, Number clients: 50.}
  \label{appd.16}
\end{figure}

\begin{figure}[]
  \centering
  \includegraphics[width=6in]{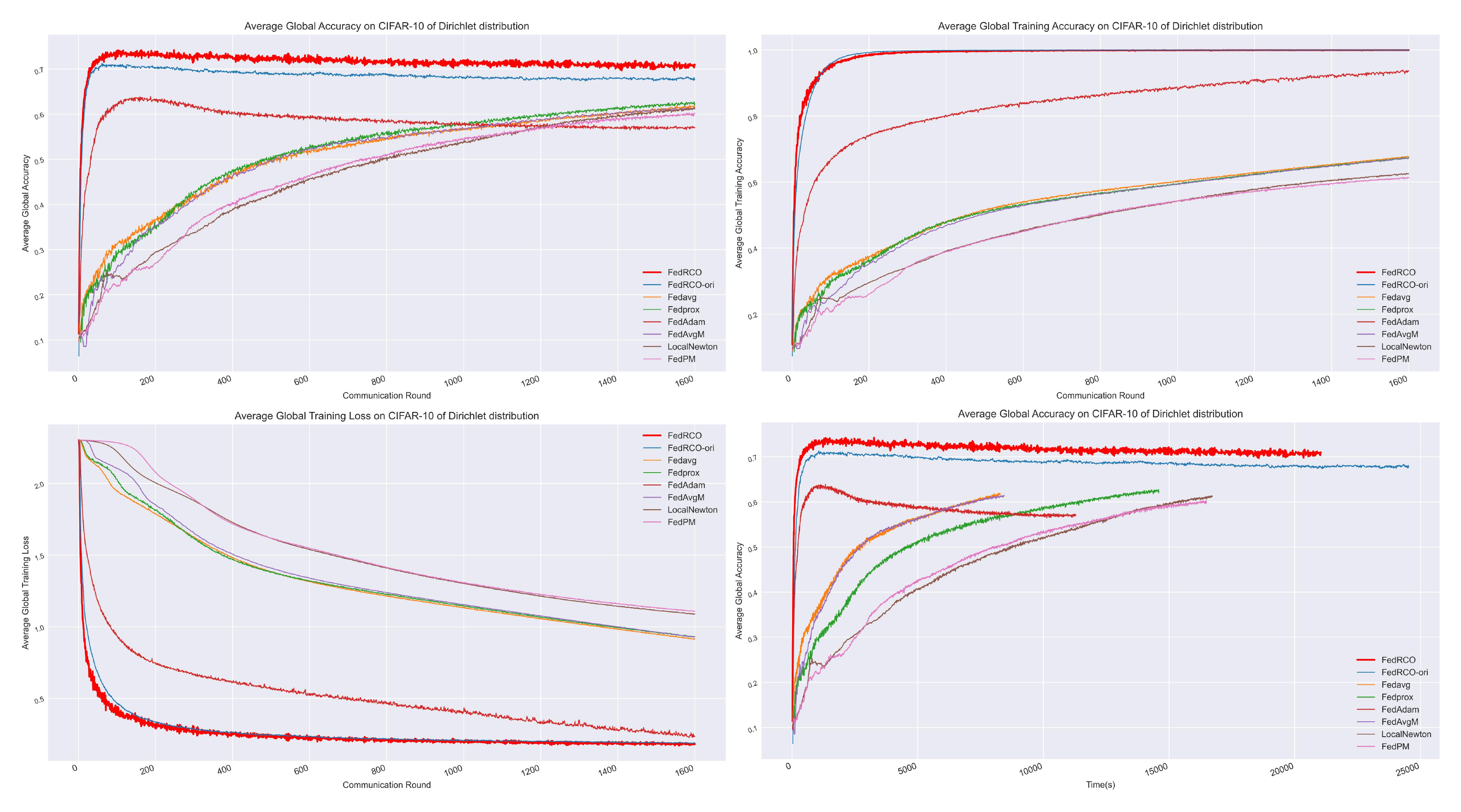}
  \caption{Dataset: CIFAR-10, Data distribution: Dirichlet 0.5, Party ratio: 0.8, Number clients: 100.}
  \label{appd.17}
\end{figure}

\begin{figure}[]
  \centering
  \includegraphics[width=6in]{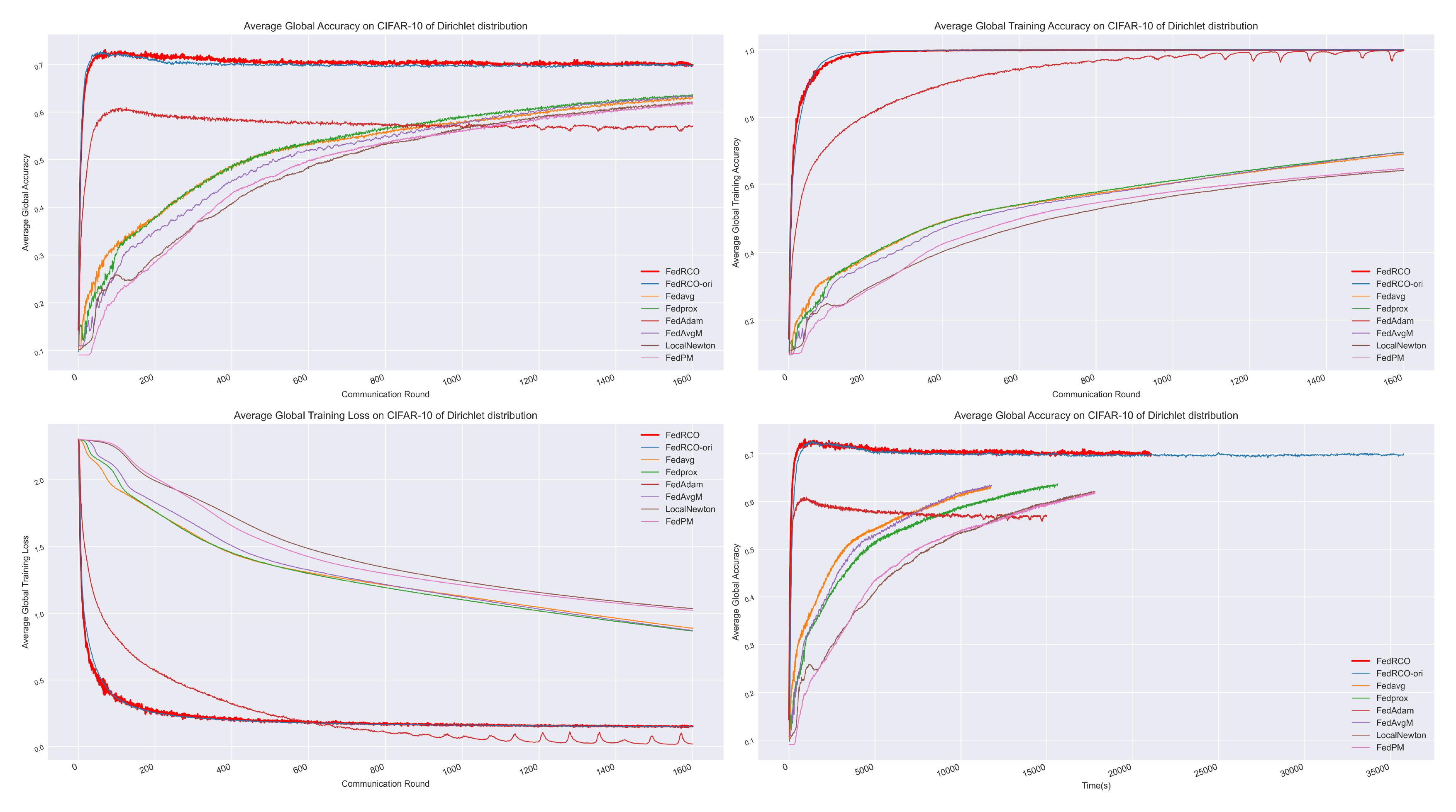}
  \caption{Dataset: CIFAR-10, Data distribution: Dirichlet 1, Party ratio: 0.8, Number clients: 100.}
  \label{appd.18}
\end{figure}

\begin{figure}[]
  \centering
  \includegraphics[width=6in]{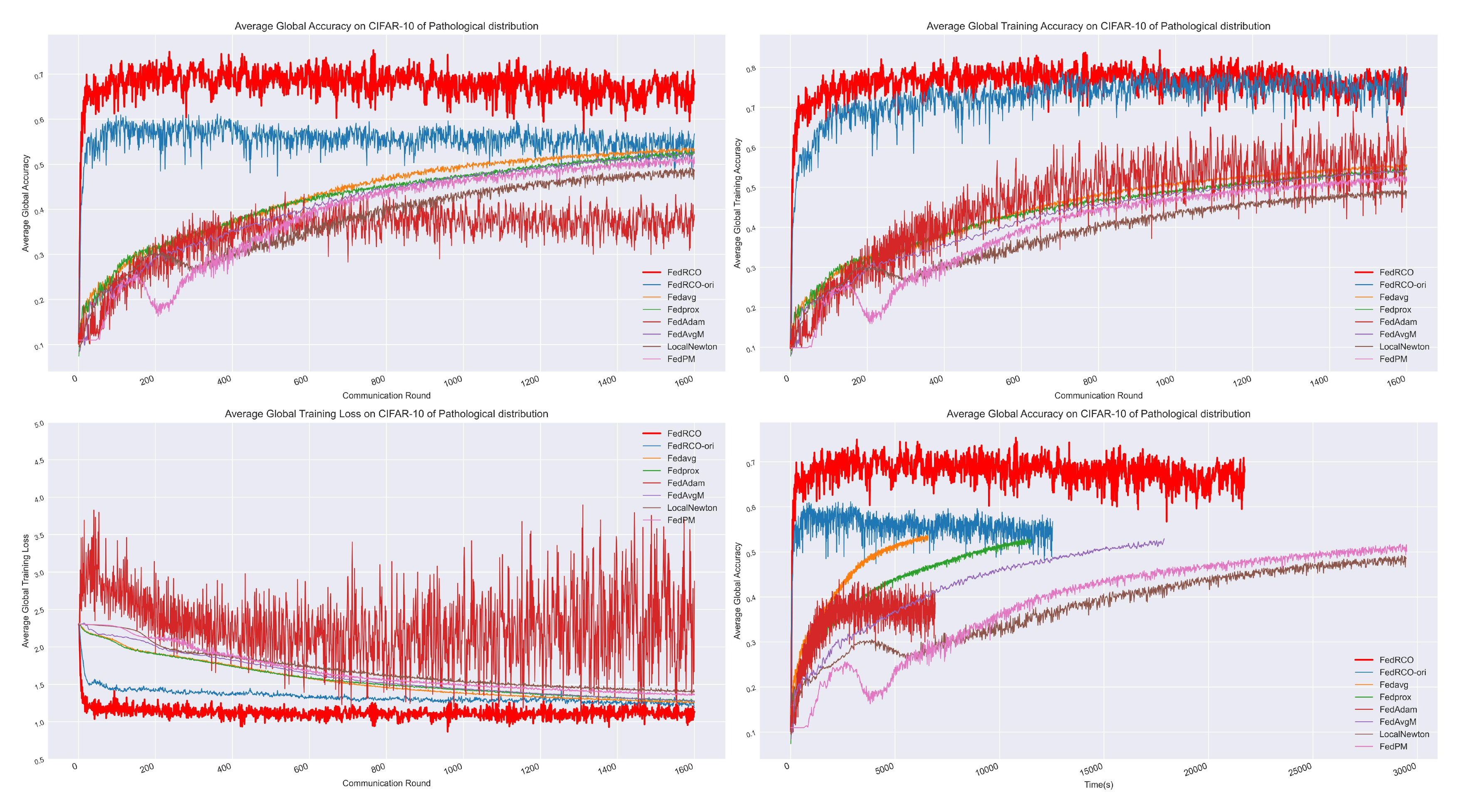}
  \caption{Dataset: CIFAR-10, Data distribution: Pathological 2, Party ratio: 0.8, Number clients: 100.}
  \label{appd.19}
\end{figure}

\begin{figure}[]
  \centering
  \includegraphics[width=6in]{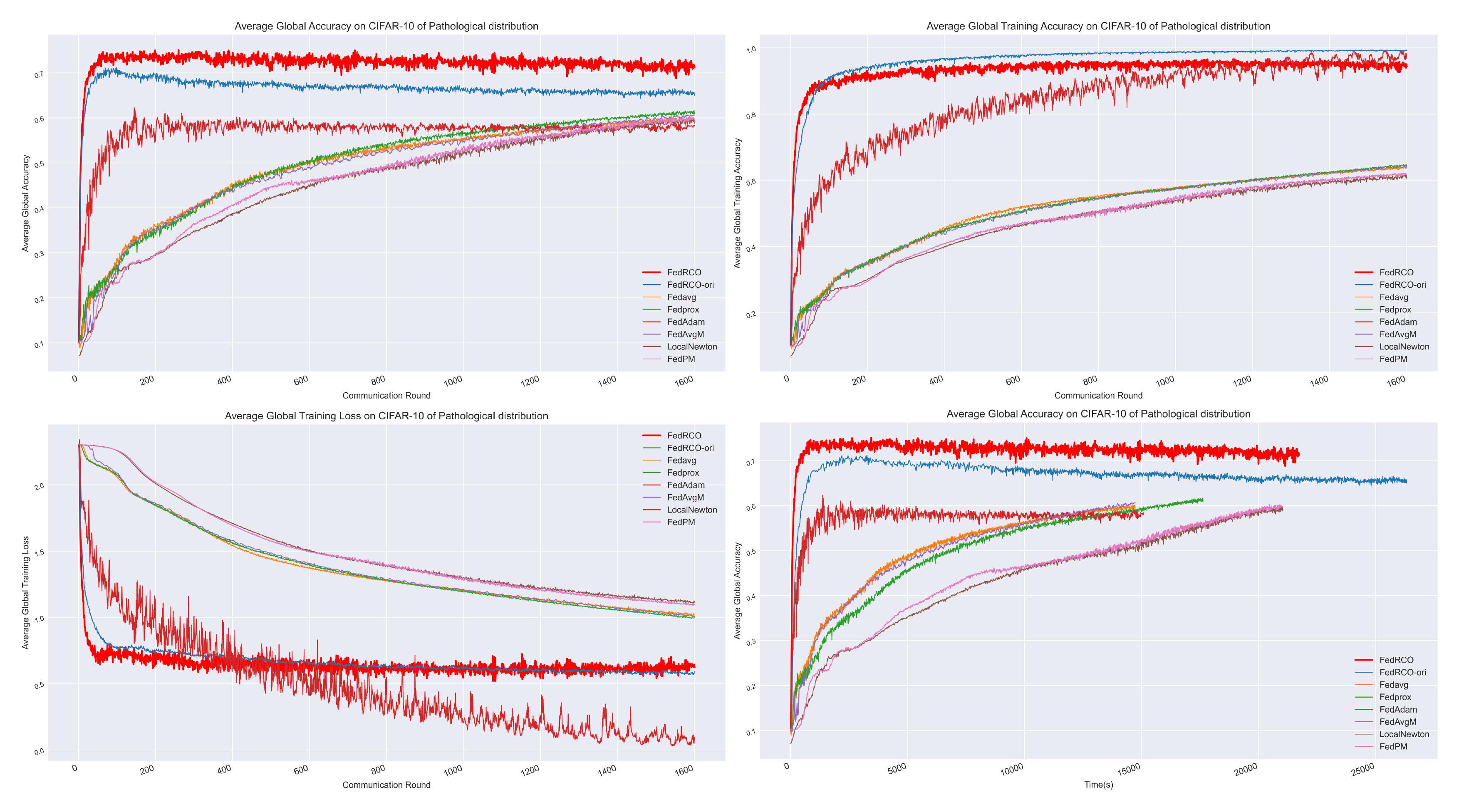}
  \caption{Dataset: CIFAR-10, Data distribution: Pathological 5, Party ratio: 0.8, Number clients: 100.}
  \label{appd.20}
\end{figure}

\begin{figure}[]
  \centering
  \includegraphics[width=6in]{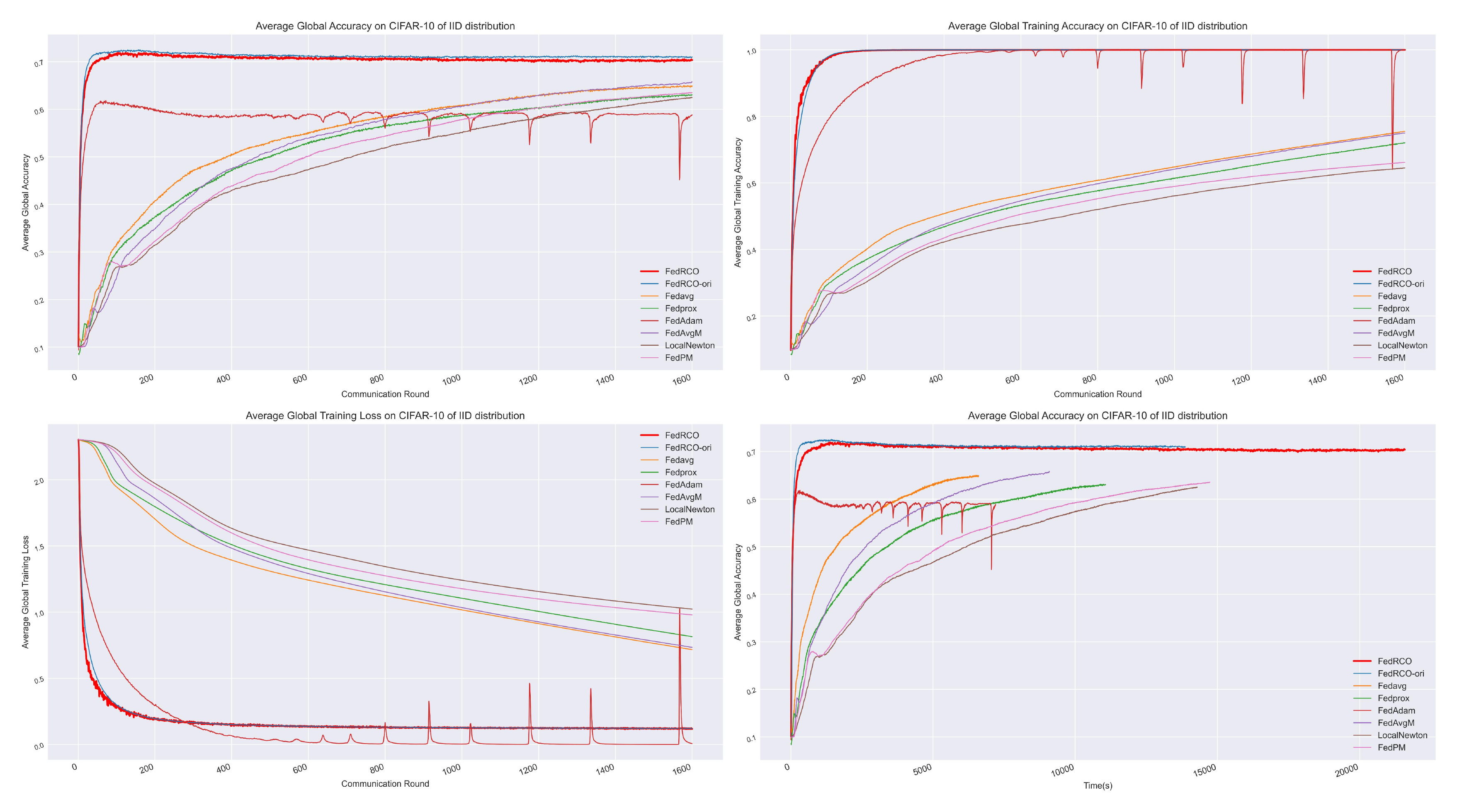}
  \caption{Dataset: CIFAR-10, Data distribution: iid, Party ratio: 0.8, Number clients: 100.}
  \label{appd.21}
\end{figure}


\end{document}